\documentclass[11pt]{article}

\usepackage[preprint]{acl}

\usepackage{times}
\usepackage{latexsym}

\usepackage[T1]{fontenc}

\usepackage[utf8]{inputenc}

\usepackage{microtype}

\usepackage{inconsolata}

\usepackage{graphicx}

\usepackage{xcolor}
\usepackage{tikz}
\usetikzlibrary{positioning,arrows.meta,fit,calc,shadows,backgrounds}
\usepackage{array}

\usepackage{amsmath}
\usepackage{amssymb}
\usepackage{mathtools}
\usepackage{amsthm}
\usepackage{bbm}
\usepackage{subcaption}
\usepackage{booktabs}
\usepackage{multirow}
\usepackage[capitalize,noabbrev]{cleveref}

\theoremstyle{plain}
\newtheorem{theorem}{Theorem}[section]
\newtheorem{proposition}[theorem]{Proposition}

\newtheorem{corollary}[theorem]{Corollary}
\theoremstyle{definition}
\newtheorem{definition}[theorem]{Definition}
\newtheorem{assumption}[theorem]{Assumption}
\theoremstyle{remark}
\newtheorem{remark}[theorem]{Remark}
\title{Every Token Counts: Exact Likert-Scale Distributions for Measuring LLM Attitudes and Biases}

\author{Davood Wadi \\ McGill University \And  
        Mohsen Ghodrat \\ University Canada West \And
        Matthew Philp \\ Marketing Management \\ Toronto Metropolitan University}

\begin{document}
\maketitle

\begin{abstract}
As Large Language Models (LLMs) are increasingly deployed as autonomous agents, accurately evaluating their latent values and biases is critical. The NLP community typically evaluates models using large, unstructured benchmarks. While effective for general capabilities, these datasets fundamentally conflate causal mechanisms: even when an aggregate bias is detected, unstructured evaluations cannot disentangle whether it stems from baseline traits, contextual confounders, or complex interactions. 
To address this, we introduce an analytically exact framework for the controlled behavioral evaluation of LLMs. We bridge human psychometrics with LLM mechanics by resolving gaps in design, measurement, and analysis. First, we replace unstructured prompting with fully crossed factorial experiments to systematically isolate causal main and interaction effects. Second, we eliminate Monte Carlo text sampling noise by operating directly on exact, token-level Probability Mass Functions (PMFs). Third, we derive a multivariate ordinal consensus metric and a distributional ANOVA to process these PMFs analytically. 
We validate our framework with a case study on consumer ethnocentrism across five LLMs, demonstrating how our approach isolates systemic country-of-origin biases that aggregate benchmarks otherwise obscure.
\end{abstract}

\section{Introduction}

As Large Language Models (LLMs) are increasingly deployed as autonomous agents \citep{wang2024survey, xi2025rise,wadi2026shopping,wadi2026careful}, evaluating their latent biases and subjective behaviors is critical. A pressing question in this new era is how to accurately quantify latent behavioral constructs, such as personality traits, moral values, or sociopolitical biases, to ensure AI fairness and safe global deployment.

Historically, the NLP community has evaluated models using large, unstructured benchmark datasets \citep{hendrycks2020measuring, bommasani2023holistic}.\footnote{We use unstructured to denote evaluation sets that aggregate large numbers of heterogeneous prompts without a controlled factorial structure, such that the covariates of interest (e.g., demographic, context, task) are not orthogonally crossed and are therefore statistically confounded.}
While this scale-first paradigm is highly effective for measuring general capabilities (e.g., trivia, coding), it is fundamentally ill-equipped for precise behavioral evaluation \citep{pmlr-v202-santurkar23a, durmus2024towards}. 
Even when unstructured benchmarks successfully detect an aggregate bias in a model, they inherently lack the analytical power to disentangle \emph{which} underlying factors actually caused that behavior, obscuring whether a bias is a hardwired training defect or a context-dependent adaptation \citep{wadi2026shopping}.

Consider an intuitive example: evaluating an LLM for prosocial behavior under financial incentives using a large, unstructured dataset of consumption scenarios. 
A benchmark might successfully detect a statistically significant tendency for altruistic behavior \citep{wadi2026interplay}. However, the unstructured dataset cannot disentangle \emph{why} this tendency exists. 
Because the prompt contexts vary across the dataset, altruism might frequently be paired with reciprocal gains \citep{wadi2026careful}. 
Is the model fundamentally altruistic (a \emph{main effect} of altruism)? Is the model simply generating more prosocial text because of financial incentives, (a \emph{main effect} of financial incentives)? 
Or is the model indifferent towards prosocial motives, but severely shows prosocial behavior only when financial incentives are provided (an \emph{interaction effect})? 
A benchmark dataset conflates these distinct causal mechanisms.

To disentangle what drives a model's behavior, we propose a shift from unstructured dataset scaling to controlled experimentation. 
We formulate LLM evaluation as a fully crossed factorial design, where every distinct variable (e.g., demographic, context, LLM persona) is treated as an independent factor. 
We systematically evaluate every possible combination using validated psychometric scales \citep{cronbach1950further, batterton2017likert}. 
This experimental control grants us the analytical power to mathematically isolate causal main effects and higher-order interactions. This fills an important gap in behavioral attribution that aggregate benchmarks lack. 

In standard NLP evaluation, scaling up a dataset with thousands of generated prompts is an excellent way to test general model capabilities. However, if we try to apply this "scale-first" approach to a psychological survey by adding thousands of synthetic, unvalidated questions, we risk altering the core trait we are trying to measure, a problem known as reducing construct validity \citep{flake2020measurement}.

To keep our measurements reliable, we rely on carefully calibrated questions used by behavioral scientists (e.g., Likert scales). Moreover, translating behavioral methods from humans to LLMs has several major methodological gaps, which we formalize as experimental design, measurement, and analysis. 
We formulate a statistical framework to execute controlled behavioral experiments on LLMs, while addressing these gaps.

First, to address the design gap, our framework formulates LLM evaluation as a \emph{fully crossed factorial experiment}. By treating variables (e.g., the specific LLM being tested, the target demographic in the prompt) as distinct experimental factors, we systematically separate a model's baseline behavior from causal main effects and higher-order interactions. This experimental control grants us the analytical power to mathematically isolate biases free of confounding contexts.

Second, to address the measurement gap, we bypass the issues of generative text sampling. Human behavioral science relies on sampling because the underlying probability distribution of a human mind is inaccessible. NLP has adopted this paradigm via Monte Carlo text sampling to simulate a "population" of responses \citep{wadi2025a}, often injecting additional noise via the temperature parameter. However, an LLM possesses a constant, fully observable internal state for a given prompt. Generating multiple text responses is effectively taking repeated, noisy draws from a static mathematical distribution. Our framework, instead, operates directly on exact, token-level Probability Mass Functions (PMFs) to eliminate sampling error.

Third, to address the analytical gap, we formulate the statistical mechanics required to process these exact PMFs. When NLP researchers evaluate token probabilities, they typically rely on metrics like Shannon entropy over valid tokens \citep{farquhar2024detecting}. However, entropy fails on Likert scales because it ignores ordinality. For example, entropy cannot distinguish between an LLM heavily polarized between "Strongly Agree" (5) and "Strongly Disagree" (1) versus one smoothly concentrated around "Neutral" (3). We bridge this gap by introducing a \emph{multivariate consensus} metric that respects ordinal distance. We also formulate \emph{distributional ANOVA} to analytically compute the exact probability distributions of the isolated causal effects.

To empirically validate our general-purpose framework, we present a case study on consumer ethnocentrism, the sociological phenomenon where an entity systematically favors its country of origin and rejects foreign entities \citep{shankarmahesh2006consumer}. Applying established ethnocentrism scales to five modern LLMs, our framework shows how models' countries of origin inherently interact with target countries, thereby allowing us to test for the magnitude of ethnocentrism for each LLM. We mathematically show that controlled experiments can isolate geographic biases without relying on large, unstructured datasets.

We contribute a unified, deterministic framework for the causal behavioral
evaluation of LLMs on ordinal psychometric instruments.
The framework casts evaluation as a fully crossed factorial experiment operating
directly on exact token-level Probability Mass Functions (PMFs), and its
methodological core is a distributional ANOVA that isolates causal effects
as full probability distributions. 
The pipeline:
\begin{itemize}
    \item decomposes the composite construct distribution into baseline, main-effect, and interaction-effect distributions via a distributional
    Hoeffding decomposition with Optimal-Transport-based paired contrasts and a formal ANOVA-consistency guarantee (Theorem~\ref{thm:anova-consistency-main});
    \item propagates item-level PMFs into an exact composite-score distribution using discrete convolution; and
    \item addresses a failure mode of Shannon entropy on Likert scales by quantifying ordinal response certainty using a multivariate extension of the Consensus metric \citep{tastle2007consensus}.
\end{itemize}

\section{Related work}
\label{related-work}

The NLP community has adopted frameworks from the behavioral sciences to study LLMs' latent traits \citep{ward-etal-2024-evaluating, perez-etal-2023-discovering,wadi2026does}. 
Recent works have utilized psychometric tests to measure a wide array of constructs \citep{jiang-etal-2024-personallm, li-etal-2024-evaluating-psychological, helwe-etal-2025-navigating}. 
However, recent studies demonstrate that eliciting psychometric responses via text generation is highly unstable. 
Simple prompt perturbations can significantly downgrade the consistency of LLM responses on persona axes \citep{shu-etal-2024-dont}. 
Moreover, traditional psychometric testing on LLMs suffers from low ecological validity when relying on standard generative outputs \citep{jung-etal-2026-psychometric}. 
Part of this instability stems from generative sampling, which is a methodological artifact borrowed from human surveying,
and obscures their actual underlying distribution \citep{deas-mckeown-2025-artificial}. 

\paragraph{Uncertainty and probability in LLMs}
Recognizing the limitations of text-based sampling, a growing body of literature has turned to analyzing exact token probabilities to better understand LLMs' internal states. 
\citet{kuribayashi-etal-2024-psychometric} demonstrated that utilizing exact next-word probabilities is far superior to instruction-tuned prompting when simulating human cognitive behaviors. 
Similarly, recent work cautions that prompting LLMs for open-ended responses yields poorly calibrated text \citep{liu2025badworktimecrosscultural}, 
whereas their exact internal token probabilities remain highly predictive 
\citep{bentegeac2025token, kadavath2022language}.
To quantify the variance within these predictive probability distributions, 
the NLP community relies almost exclusively on Shannon entropy or its semantic derivatives \citep{farquhar2024detecting}. 
Entropy is highly effective for tasks where the output space consists of distinct, categorical facts (e.g., detecting hallucinations across mutually exclusive answers). 
However, standard metrics like entropy fail when applied to behavioral evaluation using psychometric rating scales. 
This is because entropy treats all tokens as unordered, categorical variables, ignoring the ordinality of a Likert scale. 
Because standard uncertainty metrics are indifferent to this ordinality, we lack the mathematical foundation to utilize exact token probability mass functions (PMFs) for behavioral evaluation. 
As demonstrated in \ref{sec:methods}, our framework resolved this impasse by formulating a measure of consensus specifically designed for ordinal variables.

\paragraph{Geopolitical bias and ethnocentrism in LLMs}

As LLMs are deployed globally, there is growing concern regarding their cultural neutrality. 
A robust body of literature documents that LLMs exhibit systemic 
geopolitical and geographical biases \citep{shwartz-2022-good, bhatia-shwartz-2023-gd, bhatia-etal-2024-local, naous-xu-2025-origin, kruspe-2024-musical}.
Furthermore, models frequently struggle to bridge culture gaps \citep{tanwar-etal-2025-know, liu-etal-2024-multilingual, saha-etal-2025-reading, liu-etal-2026-tailored}.
To systematically quantify these cultural alignments, researchers frequently turn to global sociological instruments, such as the World Values Survey \citep{zhao-etal-2024-worldvaluesbench}. Yet, these evaluations suffer from the exact generative brittleness highlighted above. 
\citet{kabir-etal-2025-break} noted that evaluating cultural alignment in LLMs using standard multiple-choice surveys leads to wildly inconsistent outputs under minor perturbations, such as choice reordering. 

To bypass this brittleness, research has advocated moving toward unconstrained, open-ended generative evaluations. 
However, relying on open-ended generation introduces a new set of methodological bottlenecks. 
Specifically, it is notoriously difficult to reliably parse distinct categorical or ordinal scores from free-text outputs. 
Open-ended generation is highly susceptible to "fence-sitting", where models generate evasive disclaimers (e.g., "As an AI, I do not have a country...") rather than revealing their latent behavioral stance \citep{bhatia2025value, liu2025badworktimecrosscultural}. 
Dealing with these nuances typically requires utilizing another LLM as a judge to evaluate the text, which merely injects a secondary layer of bias into the evaluation pipeline.

\section{Methodology}
\label{sec:methods}

\begin{figure*}[t]
\centering
\resizebox{\textwidth}{!}{%
\begin{tikzpicture}[
  font=\scriptsize,
  box/.style={rounded corners, draw, thick, align=left, inner sep=6pt, text width=7.45cm},
  layer/.style={rounded corners, draw, thick, align=left, inner sep=6pt, text width=4.5cm},
  arrow/.style={-Latex, thick}
]

\node[box] (design) {%
\textbf{Experiment Design}\\
Fully crossed factorial design\\
Factors: \textit{Model} $\times$ \textit{Target Country}\\
Instrument: $K$ ordinal items (Likert 1--7)\\
Fixed system prompt + item templates
};

\node[box, below=0.45cm of design] (data) {%
\textbf{Data Collection}\\
Query each item under each condition\\
Record next-token PMF $P_{\mathrm{raw}}(t\mid x)$\\
Define valid token set $\mathcal{V}_{\mathrm{val}}$\\
};


\node[layer, below=0.65cm of data] (consensus) {%
\textbf{Consensus}\\
Map valid tokens to ordinal scale $\mathcal{Y}$\\
Compute multivariate Consensus/Dissension\\
};

\node[layer, left=0.35cm of consensus] (constraint) {%
\textbf{Constraint}\\
FailureRate $=1-\sum_{t\in\mathcal{V}_{\mathrm{val}}}P_{\mathrm{raw}}(t\mid x)$\\
Renormalize $P(t\mid x,\,t\in\mathcal{V}_{\mathrm{val}})$
};

\node[layer, right=0.35cm of consensus] (construct) {%
\textbf{Construct}\\
Convolve item PMFs $\Rightarrow$ composite score $S$\\
Compute main/interaction effect distributions\\
Summaries: $\mathbb{E}$, dPD, SNR
};

\node[draw, dashed, rounded corners, inner sep=2pt,
      fit=(constraint)(consensus)(construct)] (analysisbox) {};

\draw[arrow] (design) -- (data);
\draw[arrow] (data) -- (analysisbox);
\draw[arrow] (constraint) -- (consensus);
\draw[arrow] (consensus) -- (construct);

\end{tikzpicture}%
}
\caption{High-level exact-PMF framework: Fully crossed experiment design $\rightarrow$ token probability data collection $\rightarrow$ three-layer analysis (Constraint, Consensus, Construct).}
\label{fig:framework-overview}
\end{figure*}

Figure \ref{fig:framework-overview} illustrates our framework. 
We formulate LLM evaluation as a fully crossed factorial experiment to resolve the design gap. 
Within this design, we treat the LLM as a generator of exact Probability Mass Functions (PMFs) to resolve the measurement gap and bypass generative sampling noise. 
Finally, we process exact PMFs through an analytical pipeline, which provides the foundation necessary for evaluating ordinal psychometric data and isolating causal behavioral shifts.

\subsection{Experiment design}

Rather than probing models with large datasets of ad-hoc prompt variations, we formalize LLM behavioral evaluation as a fully crossed fixed-effects experiment. 
We define our experiment through a set of independent factors (e.g., the \textsc{Model} being tested, and the \textsc{Target Country} in the prompt; see Fig. \ref{fig:prompt-setup}).  
Each factor contains distinct levels (e.g., four specific countries). 
A single experimental condition, denoted as $\boldsymbol{\lambda}$, represents one specific combination of these factors (e.g., \textsc{Model} = Llama-3.3, \textsc{Target} = Canada). 
Testing every possible combination of $\boldsymbol{\lambda}$ forms our complete experimental design space $\mathcal{D}$. 
By evaluating across the entirety of $\mathcal{D}$, we can mathematically isolate the effect of one variable while controlling for the others.
Under each condition $\boldsymbol{\lambda}$, the model evaluates a psychometric instrument consisting of $K$ individual questions (items). 
For each item, the model's response is bounded to an ordinal Likert scale $\mathcal{Y}$ ranging from $y_{\min}$ to $y_{\max}$ (e.g., digits 1 through 7). 

A central property of this design is context isolation. 
The model's internal state is effectively reset between items. 
This is an advantage that LLM behavioral analysis has over human behavioral analysis. 
With humans, there is a within-subjects carry-over effect that contaminates subsequent responses of human participants, whereas statelessness of LLMs does not suffer from this limitation.
For a fixed experimental condition $\boldsymbol{\lambda}$, the predicted response $Y_{k,\boldsymbol{\lambda}}$ for item $k$ depends solely on the specific isolated prompt $x_{k,\boldsymbol{\lambda}}$. 
Following \citet{wadi2025a}, we treat the items as conditionally independent given $\boldsymbol{\lambda}$, meaning the joint probability mass function over the scale factorizes exactly: $P(\mathbf{Y}_{\boldsymbol{\lambda}} = \mathbf{y} \mid \boldsymbol{\lambda}) = \prod_{k=1}^{K} P(Y_{k,\boldsymbol{\lambda}} = y_k \mid \boldsymbol{\lambda})$.

\subsection{Constraint}

Before analyzing behavior, we must ensure the model adheres to the psychometric instrument. LLMs generate predictions over a large vocabulary ($\mathcal{V}$), but a Likert scale only permits specific ordinal tokens (e.g., digits 1 through 7). 

For a given prompt $x$, let $P_{\mathrm{raw}}(t \mid x)$ denote the model's next-token probability mass function over the full vocabulary $\mathcal{V}$, where $t \in \mathcal{V}$ is a candidate token. 
Let $\mathcal{V}_{\mathrm{val}}\subset\mathcal{V}$ be the valid token set with a surjective mapping $\phi:\mathcal{V}_{\mathrm{val}}\to\mathcal{Y}$ associating valid tokens to ordinal values. 
To account for tokenizer idiosyncrasies (e.g., BPE leading spaces like \texttt{" 7"} versus \texttt{"7"}), $\mathcal{V}_{\mathrm{val}}$ includes all valid digit surface forms, but strictly excludes semantic equivalents (e.g., \texttt{"seven"}) to penalize syntactic constraint failures.
We define the model's failure rate as $1-\sum_{t\in\mathcal{V}_{\mathrm{val}}}P_{\mathrm{raw}}(t\mid x_{k,\boldsymbol{\lambda}})$, 
which quantifies how often the model fence-sits (i.e., refuses to answer) or hallucinates out of the constraint. 
For all subsequent behavioral analyses, we utilize the renormalized conditional distribution on $\mathcal{V}_{\mathrm{val}}$. Because multiple tokens can map to the same digit, we sum over the pre-image to yield an exact ordinal item PMF: 
$P(Y_{k,\boldsymbol{\lambda}}=y \mid \boldsymbol{\lambda})=\sum_{t \in \phi^{-1}(y)} P\!\left(t \mid x_{k,\boldsymbol{\lambda}},\,t\in\mathcal{V}_{\mathrm{val}}\right)$.
This follows the convention in human behavioral experiments,
which excludes human responses that do not adhere to instructions \citep{cronbach1950further}.

\subsection{Consensus}

Quantifying the expected value (mean) of a Likert response is necessary but insufficient, as identical means can mask fundamentally different behavioral regimes (e.g., polarizing mass on \{1, 7\} vs. stable mass on \{4, 4\}). 

Standard uncertainty metrics like Shannon entropy over valid tokens fail here because they are invariant to ordinal distance. 
To address this, we compute a \emph{multivariate} generalization of Consensus \citep{tastle2007consensus} directly on the model's PMF. This metric explicitly scales probabilities by their distance from the mean. Let $\boldsymbol{\mu}$ be the itemwise mean vector and $d_{\max}=\sqrt{K}\,(y_{\max}-y_{\min})$ be the maximum diagonal distance on the Likert scale. The multivariate Consensus ($\mathrm{Cns}$) is defined as\footnote{Dissension ($\mathrm{Dsn}$) is defined as $1 - \mathrm{Cns}$}:

{\small
\begin{equation}
\label{eq:consensus-multivariate}
\begin{aligned}
&\mathrm{Cns}(\mathbf{Y}_{\boldsymbol{\lambda}})
=
1+\sum_{\mathbf{y}\in\mathcal{Y}^K}
P(\mathbf{y})
\log_2\!\left(1-\frac{\|\mathbf{y}-\boldsymbol{\mu}\|_2}{d_{\max}}\right).
\end{aligned}
\end{equation}
}
Unlike entropy, this metric correctly quantifies the model's internal consistency. 
It severely penalizes polarization at the extremes and rewards concentrated probability mass, allowing us to accurately measure the model's opinion certainty (Table~\ref{tab:consensus-dissension-visual}).
It also resolves the issue of collapsing PMFs from the convolution operator (Fig.~\ref{fig:consensus-original-issue}).

\subsection{Construct}

A common approach to analyzing LLM behavior involves comparing descriptive statistics, such as average response scores across different prompt manipulations. 
While this practice is highly informative for observing general trends, analyzing raw point estimates inherently obscures information about the underlying variance. 
Consequently, it becomes difficult to ascertain whether the observed difference between groups is statistically significant, or whether the magnitude of the difference (effect size) is meaningful in practice.
Moreover, a key component of controlled experiments is understanding interactions between factors. 
Two factors can interact to amplify or decrease their individual effects \citep{wadi2026interplay}.

To overcome these limitations, we adapt the Analysis of Variance (ANOVA) framework for LLMs. 
A statistically grounded factorial framework allows us to systematically answer three critical questions: (1) whether the behavioral shift between conditions is statistically significant, (2) isolated from confounding variables, what the exact effect size and magnitude of that shift is, and (3) how experimental factors interact to jointly shape the model's behavior.

In standard psychometrics, a latent trait, such as ethnocentrism, cannot be reliably measured by a single question. 
Rather, it is quantified using an instrument containing a total of $K$ individual Likert items. 
Let $Y_{k,\boldsymbol{\lambda}}$ represent the model's ordinal response to item $k$ under a specific experimental condition $\boldsymbol{\lambda}$ (e.g., a specific target country and LLM). 
The composite construct score $S_{\boldsymbol{\lambda}}$ is defined as the sum of these individual item responses ($S_{\boldsymbol{\lambda}} = \sum_{k=1}^K Y_{k,\boldsymbol{\lambda}}$). 
Because our framework operates on exact probability distributions rather than single scalar outputs, we evaluate the full probability mass function (PMF) of this composite score.
 
Given the conditional independence of items (Sec. \ref{sec:independence-assumption}), according to standard probability theory, the distribution of a sum of independent variables is computed via discrete convolution (denoted by the operator $\circledast$). Therefore, the exact probability distribution of the total construct score $P_{S_{\boldsymbol{\lambda}}}$ is derived by convolving the individual item PMFs: $P_{S_{\boldsymbol{\lambda}}}=P_{Y_{1,\boldsymbol{\lambda}}}\circledast\cdots\circledast P_{Y_{K,\boldsymbol{\lambda}}}$.
Unlike standard evaluation methods that compute descriptive statistics over a handful of noisy, sampled text generations, 
the convolution approach analytically derives the exact probability distribution of the final construct score ($P_{S_{\boldsymbol{\lambda}}}$). 
This propagates all aleatoric uncertainty from the token level up to the final construct level.

Next, we need to systematically decompose LLM responses ($P_{S_{\boldsymbol{\lambda}}}$) into distinct, interpretable components. 
Rather than evaluating each experimental condition ($\boldsymbol{\lambda}$) as a whole, 
we decompose the model's overall construct distribution into three parts: 
the model's baseline behavior, 
the isolated Main Effects (e.g., how much changing the Target Country specifically shifts the behavior), 
and the Interaction Effects (e.g., how the Target Country interacts with a specific LLM). 
We formalize this using a distributional Hoeffding decomposition. 
Let $S_0$ represent the grand-mixture baseline across all conditions, $E_c(\lambda_c)$ represent the Main Effect distribution for factor $c$ at level $\lambda_c$, and $E_U(\boldsymbol{\lambda}_U)$ represent the higher-order Interaction Effect distribution for a subset of factors $U$. 
This decomposition satisfies the following identity in expectation:

\begin{equation}
\label{eq:anova-identity-main}
\small
\mathbb{E}[S_{\boldsymbol{\lambda}}] = \underbrace{\mathbb{E}[S_0]}_{\text{Grand Mean}} + \sum_{c \in \mathcal{C}} \underbrace{\mathbb{E}[E_c(\lambda_c)]}_{\text{Main Effects}} + \sum_{\substack{U \subseteq \mathcal{C} \\ |U| \ge 2}} \underbrace{\mathbb{E}[E_U(\boldsymbol{\lambda}_U)]}_{\text{Interactions}},
\end{equation}

This exact-PMF provides the guarantee that the expected values of the derived probability distributions ($\mathbb{E}[E_c(\lambda_c)]$ and $\mathbb{E}[E_U(\boldsymbol{\lambda}_U)]$) match the proven fixed-effects parameters calculated in traditional human ANOVA experiments. 
We formalize this alignment with the following theorem (Proof in App. \ref{cor:main-effect-anova}):
\begin{theorem}
\label{thm:anova-consistency-main}
For any fully crossed experimental design, the expectations of the isolated effect distributions ($E_c$, $E_U$) recover the classical unique Hoeffding decomposition.
\end{theorem}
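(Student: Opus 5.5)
The plan rests on one observation: the expectation operator is linear, so the whole theorem can be proved at the level of means. Concretely, I would write $m(\boldsymbol{\lambda}) := \mathbb{E}[S_{\boldsymbol{\lambda}}]$ for each cell $\boldsymbol{\lambda}\in\mathcal{D}$. I would then show that each isolated effect distribution has a mean given by the standard inclusion--exclusion (Möbius) formula applied to $m$ under the uniform product measure on $\mathcal{D}$.

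The first step is to fix the definitions the paper will use, which I expect to be the following.
\begin{itemize}
\item $S_0$ is the uniform grand mixture $\frac{1}{|\mathcal{D}|}\sum_{\boldsymbol{\lambda}} P_{S_{\boldsymbol{\lambda}}}$, so $\mathbb{E}[S_0]=\bar m$ is the grand mean.
\item For a subset $U$ of factors, let $M_U(\boldsymbol{\lambda}_U)$ be the mixture of $P_{S_{\boldsymbol{\lambda}}}$ over the cells agreeing with $\boldsymbol{\lambda}_U$. Its mean is $m_U(\boldsymbol{\lambda}_U)=\mathbb{E}_{\boldsymbol{\lambda}_{-U}}[m(\boldsymbol{\lambda})]$, the classical marginal mean.
\item $E_c(\lambda_c)$ is an OT-paired contrast between $M_{\{c\}}(\lambda_c)$ and $S_0$. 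It is the law of $T(X)-X$ with $X\sim S_0$ and $T$ the monotone (quantile) coupling map.
\end{itemize}

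The key lemma is that for any coupling $\pi$ of two laws $P$ and $Q$ on $\mathbb{Z}$ with finite means, the difference variable $X'-X$, with $(X,X')\sim\pi$, has mean $\mathbb{E}_Q-\mathbb{E}_P$. This follows from linearity alone, because the marginals of the coupling are fixed. The choice of OT plan therefore matters only for the shape of $E_c$, never for its expectation. Applying the lemma gives $\mathbb{E}[E_c(\lambda_c)]=m_{\{c\}}(\lambda_c)-\bar m$. This is exactly the classical main effect $\alpha_c(\lambda_c)$.

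For interactions, $E_U$ is built recursively: the OT contrast of $M_U$ against the reference obtained by adding back all lower-order effect distributions (the signed mixture, or an iterated contrast). I would prove by induction on $|U|$ that
\[
\mathbb{E}[E_U(\boldsymbol{\lambda}_U)]=\sum_{V\subseteq U}(-1)^{|U|-|V|}\, m_V(\boldsymbol{\lambda}_V),
\]
using the coupling lemma together with linearity at each stage. This is precisely the Hoeffding/Möbius term $f_U$ of $m$. The defining property of this term is that it averages to zero over any single coordinate in $U$. That property holds because the design is fully crossed, so the uniform measure on $\mathcal{D}$ is a product measure, and averaging $m_V$ over $\lambda_j$ with $j\in V$ yields $m_{V\setminus\{j\}}$.

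Summing over all $U\subseteq\mathcal{C}$ and using Möbius inversion on the Boolean lattice recovers $m(\boldsymbol{\lambda})$. This gives identity \eqref{eq:anova-identity-main}. Uniqueness then follows from the classical result: any decomposition into terms that depend only on $\boldsymbol{\lambda}_U$ and have zero marginal averages coincides with the Hoeffding one, because these components are mutually orthogonal in $L^2$ of the product measure.

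The main obstacle is the induction for $|U|\ge 2$. The effect distributions for higher-order interactions are defined through signed combinations or iterated OT contrasts, and I must check that the reference distribution's mean really equals $\sum_{V\subsetneq U}$ of the lower-order terms. If the paper defines $E_U$ via a (possibly signed) mixture rather than a coupling, I would instead apply linearity of the mean over signed measures directly. Either route reduces the claim to the scalar Möbius identity.
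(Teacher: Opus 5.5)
Your proposal is correct and follows essentially the paper's route: the paper likewise uses the fact that any coupling preserves marginals, so by linearity (and by averaging over finite uniform mixtures) the mean of each effect distribution reduces to the Möbius contrast $\sum_{V\subseteq U}(-1)^{|U|-|V|}(T_V\mu)(\boldsymbol{\lambda}_V)$ of marginal means, which is the unique Hoeffding component from Theorem~\ref{thm:anova-expectation}. The differences are only in the definitions you guessed and in the uniqueness argument. The paper's $E_c$ and $E_U$ are uniform mixtures over blocks $\boldsymbol{\lambda}_{-U}$ of within-block comonotone contrasts: $E_c$ is contrasted against the blocked grand mean $S_{0\mid\boldsymbol{\lambda}_{-c}}$, and $E_U$ couples all $2^{|U|}$ Möbius terms through one shared quantile. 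So no induction on $|U|$ is needed, and your ``direct linearity'' fallback is exactly what the paper does. The paper also proves uniqueness by applying the averaging operators $T_U$ rather than by $L^2$ orthogonality, which is an equivalent argument.
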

In practical terms, Theorem \ref{thm:anova-consistency-main} guarantees that when we measure a higher-order behavior, such as a "country-of-origin bias" (an interaction effect), we have mathematically stripped away the model's baseline ethnocentrism (main effect). 
This ensures researchers do not double-count biases, achieving the exact same control over confounding variables used by human behavioral scientists.

Finally, to translate these effect distributions into interpretable behavioral insights, we extract the following summary metrics for our case study, reported for any given effect distribution $E$:
\begin{itemize}
    \item \textbf{Mean Shift ($\mathbb{E}[E]$):} The expected value of the distribution, representing the actual magnitude (effect size) of the isolated behavioral effect.
    \item \textbf{Predictive Dispersion ($\mathrm{SD}(E)$):} The standard deviation of the effect, capturing the model's aleatoric uncertainty regarding the shift.
    \item \textbf{Discrete Probability of Direction ($\mathrm{dPD}$):} Defined as $\max(\mathbb{P}(E > 0), \mathbb{P}(E < 0))$, this metric quantifies the extent to which the behavioral shift exhibits a consistent directionality (i.e., strictly positive or strictly negative), independent of the effect's magnitude (App. \ref{app:dpd}).
    \item \textbf{Signal-to-Noise Ratio ($\mathrm{SNR}$):} Calculated as $|\mathbb{E}[E]|/\mathrm{SD}(E)$, SNR quantifies the statistical reliability of the effect, indicating whether the observed behavioral shift is a robust, distinct phenomenon or merely an artifact of the model's internal uncertainty.
\end{itemize}
With these metrics, we now apply this methodology to our main sociotechnical concern. 
We test whether LLMs exhibit systematically biased ethnocentric behaviors based on their country of origin.
See App. \ref{app:metric-interpretation} for details of each metric.

\section{Experiment}

\subsection{Design}

We test our framework using the Consumer Ethnocentrism Tendencies Scale (CETSCALE; \citealp{shimp1987consumer}). 
This is a widely validated $K{=}17$-item Likert instrument with a response space of $\mathcal{Y}=\{1,\dots,7\}$. While a 17-item survey is much smaller than typical NLP benchmark datasets, it constitutes a complete, rigorously tested measurement tool in the behavioral sciences. 
Using this exact, unmodified scale ensures that we are reliably measuring the intended bias without diluting the test. 

We employ a symmetrical, fully crossed $5\times 4$ fixed-effects design across factors $\{\textsc{Model}, \textsc{Target}\}$.
The model levels include open-weights models representing four distinct countries-of-origin:
$\mathcal{L}_{\textsc{Model}}=\{$Llama~3.3~70B and Gemma~3~27B (USA); Qwen3~Next~80B (China); Aya~Expanse~32B (Canada); and Ministral~14B (France)$\}$.
The Target Country levels mirror these origins:
$\mathcal{L}_{\textsc{Target}}=\{\text{USA},\text{China},\text{Canada},\text{France}\}$.
This symmetrical $5 \times 4$ design is critical as it allows our framework to mathematically isolate (i) the global main effects of $\textsc{Model}$ and $\textsc{Target}$, and (ii) the precise $\textsc{Model}\times\textsc{Target}$ interaction effects (e.g., asking a Chinese-developed model to express its ethnocentrism towards USA), which represent the actual country-of-origin bias.

All conditions share an identical stimulus structure. A fixed system message strictly enforces the response schema ("...Respond with ONLY the digit."). 
Each of the 17 CETSCALE items acts as a user-message template ($T_k$) containing a target country placeholder (e.g., \{People\}, \{Adj\}). 
The \textsc{Target} factor deterministically instantiates these placeholders via a strict lexicon (Figure \ref{fig:prompt-setup}). 
Importantly, prompt wording is treated as a fixed experimental control rather than tuned per model, with the aim to mirror stimulus control in human behavioral experiments.

For each $(\textsc{Model},\textsc{Target},k)\in \mathcal{L}_{\textsc{Model}}\times\mathcal{L}_{\textsc{Target}}\times\{1,\dots,17\}$, we perform a single forward pass and record the exact next-token PMF ($P_{\mathrm{raw}}(t\mid x_{k,\boldsymbol{\lambda}})$) over the full vocabulary.\footnote{Because we directly extract the analytical probability distribution over the logits, our measurement is strictly deterministic. It bypasses autoregressive decoding entirely, rendering generation hyperparameters such as temperature or top-
p irrelevant.}
We define $\mathcal{V}_{\mathrm{val}}$ as the specific valid digit tokens $\{1,\dots,7\}$ for each model's respective tokenizer. 
In total, this yields $5\times 4\times 17 = 340$ exact item-level predictive distributions. These PMFs are directly passed through the Constraint Layer to measure formatting failures, the Consensus Layer to measure ordinal certainty, and the Construct Layer to isolate distributional Hoeffding effect sizes.

\subsection{Results}

\paragraph{Constraint.}
As shown in Figure \ref{fig:failure-rate}, Gemma 3, Llama 3.3, and Qwen3 demonstrate near-perfect adherence to the task constraints, concentrating their probability mass almost entirely on valid ordinal tokens ($M<0.001$, $\mathrm{SD}<0.001$). 
In contrast, Ministral exhibits a small but consistent failure to constrain its mass to the requested digits ($M=0.002, \mathrm{SD}=0.001$). More notably, Aya Expanse displays the highest aggregate failure rate ($M=0.006$) driven by large variance ($\mathrm{SD}=0.049$). This indicates that utilizing Aya or Ministral in automated agentic settings could lead to sporadic breakdowns due to their probabilistic inability to adhere strictly to psychometric constraints.

\begin{figure}[htbp]
  \centering
    \centerline{\includegraphics[width=\columnwidth]{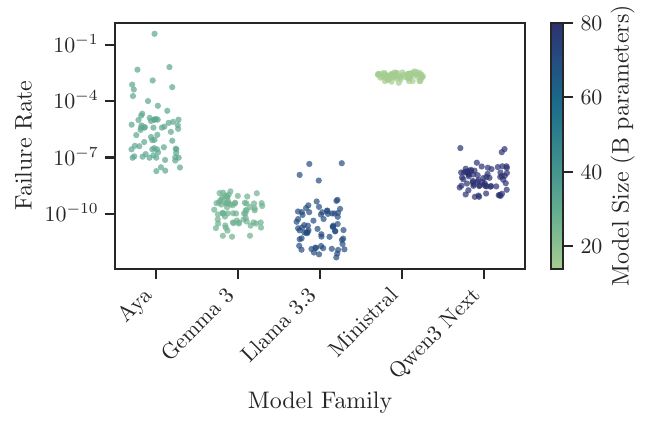}}
    \caption{Failure Rate across the evaluated LLMs.}
    \label{fig:failure-rate}
    \vspace{-3mm}
\end{figure}

\paragraph{Consensus.}
\label{sec:exp-consensus}
Across all target countries, four models show exceptionally high internal agreement regarding the instrument, maintaining $\mathrm{Cns}>0.88$ (Figure \ref{fig:cns}). 
Ministral is a major outlier, exhibiting substantially lower consensus across all targets ($\mathrm{Cns}\approx 0.65$--$0.67$). This shows that, even after enforcing valid digits, Ministral's item-level PMFs frequently place large probability mass on conflicting ends of the Likert scales. 
This stark contrast highlights the critical necessity of an ordinal-aware dispersion metric. Standard entropy would fail to capture this specific behavioral dissonance (Table \ref{tab:cns-entropy}).

\begin{figure}[htbp]
    \centering
    \centerline{\includegraphics[width=\columnwidth]{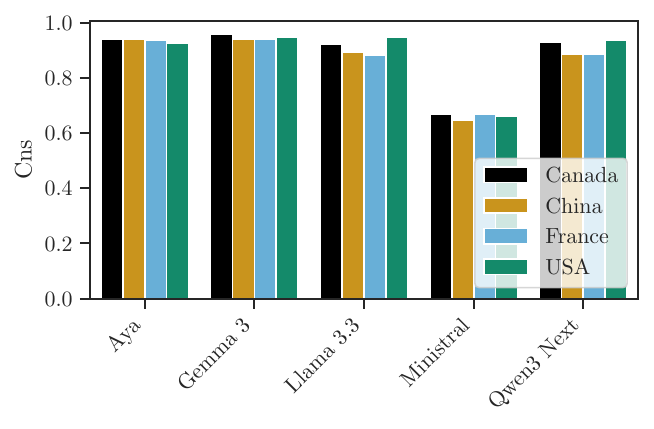}}
    \caption{Multivariate Consensus ($\mathrm{Cns}$) for the CETSCALE instrument. Lower values indicate severe behavioral polarization.}
    \label{fig:cns}
    \vspace{-3mm}
\end{figure}

\begin{figure}[ht]
    \centering
    \begin{subfigure}{0.9\columnwidth}
    \centering
      \includegraphics[width=0.9\columnwidth]{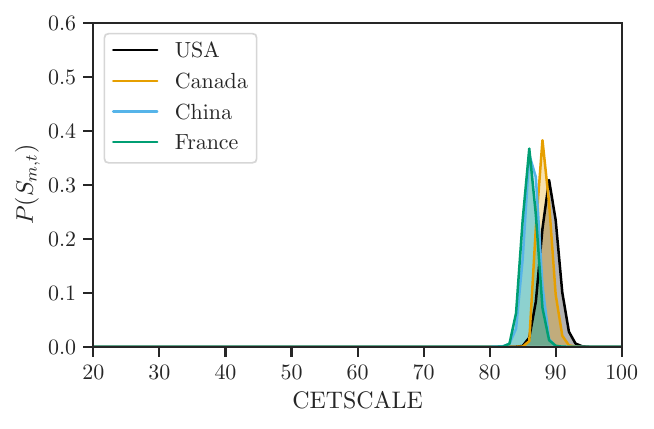}
      \caption{Aya Expanse 32B}
    \end{subfigure}
    \begin{subfigure}{0.9\columnwidth}
    \centering
      \includegraphics[width=0.9\columnwidth]{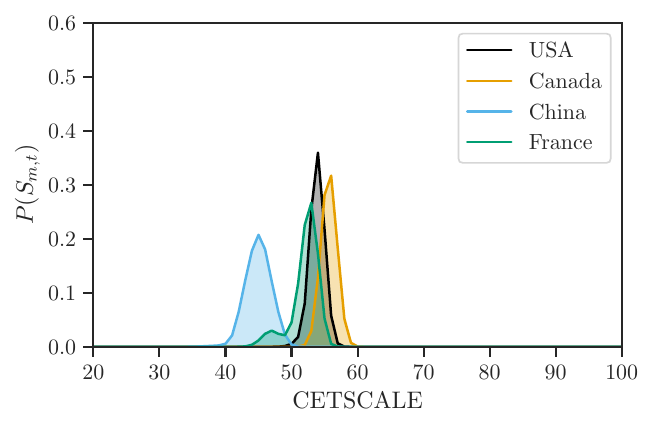}
      \caption{Qwen3 Next 80B}
    \end{subfigure}
    \caption{Distribution of CETSCALE for Aya Expanse 32B (high) vs. Qwen3 Next 80B (low) ethnocentrism and Target Country.}
    \label{fig:specific-pmf-2}
    \vspace{-3mm}
\end{figure}

\paragraph{Construct.}
To contextualize the LLMs' response to the CETSCALE, 
we compare each model's predictive distribution for \textsc{Target}=USA against the human sample means and standard deviations reported in the original CETSCALE validation studies (\citealp{shimp1987consumer}; Table \ref{tab:cetscale-human-llm-comparison}; Figure~\ref{fig:human-llm})\footnote{The human data (collected prior to 1987) serves strictly as a historical magnitude anchor to calibrate the CETSCALE, not as a baseline for modern human attitudes. Note that this reference concerns expected-score magnitude only: a model's PMF dispersion reflects the internal uncertainty of a single entity and is not commensurable with the interpersonal variance of a human population sample."}. 
Several LLMs exhibit expected composite scores comparable to, or explicitly exceeding, those observed in the most ethnocentric human populations. 
See the full distributions in Figure~\ref{fig:specific-pmf}.

\begin{table}[htbp]
  \caption{Comparison of CETSCALE scores between historical human populations \citep{shimp1987consumer} and LLMs evaluated in this study (restricted to the \textsc{Target}=USA condition). Higher composite score indicates higher ethnocentrism.}
  \label{tab:cetscale-human-llm-comparison}
  \centering
    \begin{small}
      \begin{sc}
        \begin{tabular}{llrr}
          \toprule
          Subject & Sample / Model & $\mathbb{E}[S]$ & SD[S] \\
          \midrule
          \multirow[t]{6}{*}{Human} 
          & Detroit (USA) & 68.58 & 25.96 \\
          & Carolinas (USA) & 61.28 & 24.41 \\
          & Denver (USA) & 57.84 & 26.10 \\
          & Los Angeles (USA) & 56.62 & 26.37 \\
          & Students (Pre) & 51.92 & 16.37 \\
          & Students (Post) & 53.39 & 16.52 \\
          \midrule
          \multirow[t]{5}{*}{LLM} 
          & Aya Expanse 32B & 89.11 & 1.32 \\
          & Llama 3.3 70B & 71.99 & 0.95 \\
          & Ministral 14B & 70.20 & 5.25 \\
          & Gemma 3 27B & 60.32 & 0.94 \\
          & Qwen3 Next 80B & 53.85 & 1.17 \\
          \bottomrule
        \end{tabular}
      \end{sc}
    \end{small}
    \vspace{-3mm}
\end{table}

\paragraph{Main Effects.}
Extracting precise main effect distributions (Table \ref{tab:main-effects}), reveals large and systematic behavioral divides. 
\textsc{Aya-32B} exhibits a severe positive deviation from the grand mean ($\mathbb{E}=21.19$, $\mathrm{SNR}=1.63$, $\mathrm{dPD}>0.99$), indicating a pervasive ethnocentrism that far exceeds human baselines. 
Conversely, \textsc{Qwen3-80B} demonstrates a highly robust negative main effect ($\mathbb{E}=-14.63$, $\mathrm{SNR}=1.20$, $\mathrm{dPD}=0.93$), resulting in expected scores similar to liberal human populations.

Analyzing the main effects for the Target Countries (Table \ref{tab:main-effects}; and the pairwise contrasts in Table \ref{tab:country-contrasts}; Figure~\ref{fig:country-main}) reveals that the most statistically robust biases involve China. 
Across all models, there is a systemic decrease in ethnocentrism when the target is China ($\mathbb{E}=-6.46$, $\mathrm{SNR}=1.04$), contrasting sharply with the relative favoritism shown toward North American targets (e.g., USA-China contrast $\mathbb{E}=9.30$, $\mathrm{SNR}=1.34$).

\begin{table}[htbp]
  \caption{Summaries of main effects.}
  \label{tab:main-effects}
  \centering
    \begin{small}
      \begin{sc}
\begin{tabular}{lrrll}
\toprule
Parameter & $\mathbb{E}$ & $\mathrm{SD}$ & $\mathrm{SNR}$ & $\mathrm{dPD}$ \\
\midrule
$\mu_{\emptyset}$ & 66.25 & 14.21 &  &  \\
Aya Expanse 32B & 21.19 & 13.01 & 1.63 & 0.99 \\
Gemma 3 27B & -11.99 & 13.15 & 0.91 & 0.77 \\
Llama 3.3 70B & 0.31 & 12.79 & 0.02 & 0.55 \\
Ministral 14B & 5.11 & 9.59 & 0.53 & 0.62 \\
Qwen3 Next 80B & -14.63 & 12.17 & 1.20 & 0.93 \\
USA & 2.84 & 5.43 & 0.52 & 0.55 \\
Canada & 4.62 & 5.72 & 0.81 & 0.90 \\
China & -6.46 & 6.19 & 1.04 & 0.95 \\
France & -1.00 & 4.72 & 0.21 & 0.54 \\
\bottomrule
\end{tabular}
      \end{sc}
    \end{small}
    \vspace{-3mm}
\end{table}

\paragraph{Interaction Effects and Country-of-Origin Bias.}
We define country-of-origin bias as a positive interaction between a model and its own country of origin, measured as an isolated deviation from the grand-mean baseline. 
The interaction distributions ($E_U$) show that several models exhibit structured, non-additive interaction patterns (Figure~\ref{fig:interactions}; Table~\ref{tab:coefficient-table-marginal}), though the strength and directionality of these effects vary considerably across models.
The clearest regional pattern appears for \textsc{Gemma 3-27B} and \textsc{Llama 3.3-70B} (both U.S.-developed), which show positive interactions toward the North American bloc (USA and Canada) alongside a pronounced negative interaction with China. 
Both models exhibit positive own-country (USA) interactions ($+3.21$ and $+2.59$), consistent with ingroup favoritism, although these particular ingroup cells are directionally moderate ($\mathrm{dPD}=0.63$ and $0.58$).
We emphasize that clean own-country favoritism is not universal across all five models. 
\textsc{Aya-32B} (Canada, $-3.77$) and \textsc{Qwen3 Next-80B} (China, $-0.20$) do not exhibit a positive own-country interaction, and \textsc{Ministral-14B} shows a positive but modest France interaction ($+2.45$) that is smaller than its interaction with China.
This showcases how the PMF ANOVA framework isolates each interaction from the model's baseline behavior and main effects and allows researchers to empirically test whether a construct such as country-of-origin bias is present, absent, or reversed for a given model.

\begin{figure}[htbp]
  \centering
  \centerline{\includegraphics[width=\columnwidth]{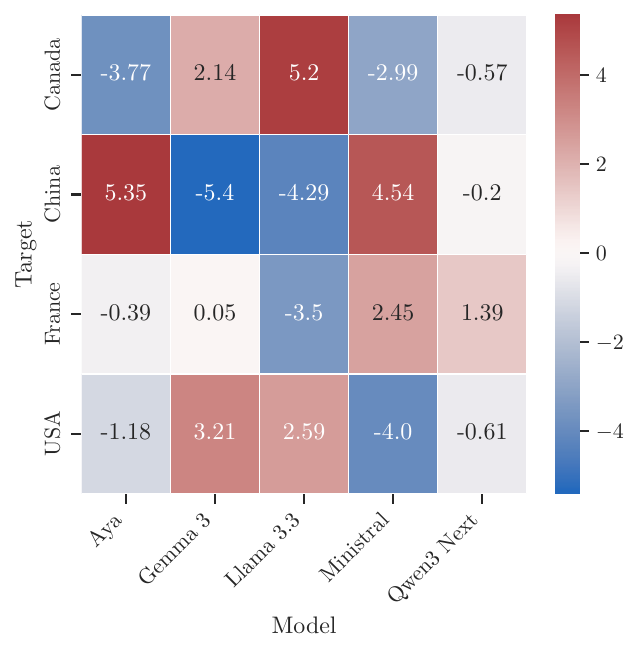}}
    \caption{Interactions of Model and Target Country.}
    \label{fig:interactions}
    \vspace{-3mm}
\end{figure}

\paragraph{Aggregate vs. Factorial Benchmarks.}
To empirically demonstrate the danger of conflating causal mechanisms, we compared factorial interactions against an aggregate benchmark (App.~\ref{tab:aggregate-vs-factorial}). 
If a researcher were to evaluate Target Country bias by taking an aggregate average, they would conclude there is a universal negative bias against France ($\mathbb{E}=-1.00$) and China ($\mathbb{E}=-6.46$). 
However, factorial decomposition shows this aggregate view is directionally incorrect for specific models: Ministral-14B actually exhibits positive ingroup favoritism toward France ($+2.45$). 
Similarly, while the aggregate benchmark suggests a universal penalization of China, both Aya-32B ($+5.35$) and Ministral-14B ($+4.54$) exhibit strong positive interactions. 

\subsection{The Cost of Generative Sampling}
\label{sec:sampling-cost}

To quantify the cost of the standard NLP Monte Carlo sampling paradigm, we benchmarked a simulated sampling approach against our exact distributions (acting as ground truth; App.~\ref{app:sampling-cost}). 
Standard text sampling introduces severe, previously unquantified reliability issues in behavioral evaluation. 

First, sampling noise degrades the detection of subtle biases. 
Because the standard error decays only as $1/\sqrt{N}$, a standard sampling budget of $N=100$ still yields a standard error of up to 0.52 composite-score points for low-consensus models (App.~\ref{app:sampling-noise}). 
To determine how often this noise leads to false conclusions, we calculated the flip probability ($P_{\text{flip}}$): the chance that finite sampling recovers an effect with the wrong sign (App.~\ref{app:flip-analysis}). 
At $N=10$, standard sampling flips the sign of small effects ($|E| < 1$) 18\% of the time, and even at $N=100$, it flips them 6.3\% of the time. 
The exact-PMF framework achieves the $N \to \infty$ limit in a single forward pass, ensuring no interaction effect is masked or reversed by sampling variance.
Second, non-default decoding (e.g., temperature scaling) introduces systematic biases that no amount of sampling can fix. 
Our analysis (App.~\ref{app:decoding-bias}) shows that for models with low internal consensus (e.g., Ministral), modifying temperature artificially shifts the latent construct score by up to 2.79 points. 

\subsection{Prompt Sensitivity}
\label{sec:prompt-sensitivity}

A common critique in NLP behavioral evaluation is that model outputs are highly sensitive to prompt phrasing. 
A key strength of casting evaluation as a fully crossed factorial design is its extensibility. 
Any potential confounder can be introduced as an additional factor and isolated via the distributional ANOVA.

To demonstrate this, we promoted the administration (system) prompt to a first-class experimental factor ($\lambda_{\text{Prompt}}$). 
As detailed in App.~\ref{app:prompt-framing}, we tested five additional framing variants (ranging from a market-research persona to minimal formatting; Table~\ref{tab:prompt-variants}) across three models and four target countries, yielding 1,224 exact item-level PMFs.

The full ANOVA decomposition (Tables~\ref{tab:prompt-effects} and \ref{tab:prompt-effects-p2}) show three key findings. 
First, the core country-of-origin signals are robust. 
The \textsc{Target} main effects reproduce the directional pattern of the main study under statistical control for framing. 
Second, prompt framing acts as a causal main effect of its own, shifting the latent construct score by up to 13.5 points depending on the phrasing used. 
Third, the framework formally quantifies model-specific prompt sensitivity via the $\textsc{Model} \times \textsc{Prompt}$ interaction. 
We isolate that while Gemma~3 exhibits low sensitivity to wording changes, both Aya and Ministral interact strongly with specific framings.

\section{Conclusion}

In this work, we introduced a new paradigm for the behavioral evaluation of LLMs, shifting the focus from large-scale, unstructured benchmark datasets to controlled factorial experiments. 
By bridging human psychometrics with LLM mechanics, we addressed critical gaps in experimental design, measurement, and analysis. 
By operating directly on exact, token-level Probability Mass Functions (PMFs) without generative sampling noise, we formulated an analytical pipeline, featuring a multivariate ordinal consensus metric and a distributional ANOVA. This pipeline allows researchers to mathematically isolate causal behavioral shifts and factor interactions without artificially inflating variance.

We validated this framework with a case study on consumer ethnocentrism. 
We isolated systemic country-of-origin biases that would otherwise be obscured by confounding variables in standard observational datasets. 
Ultimately, by prioritizing causal experimental control and exact probability distributions over text sampling, this general-purpose framework provides a statistically rigorous foundation for future research into LLM alignment, values, and latent behavioral traits.

\section*{Limitations}

While our mathematical framework is designed to be applied to any behavioral evaluation, our empirical experiments focus exclusively on consumer ethnocentrism. We deliberately chose to analyze a single construct in depth to clearly illustrate how the distributional ANOVA operates under the hood. Now that the methodology is established, applying this exact-PMF framework to a much wider array of psychological traits, moral values, and geopolitical biases is an exciting direction for future work. Therefore, the specific findings in this paper should not be interpreted as comprehensive claims about the general political ideology or cultural alignment of the tested models.

Furthermore, all experiments were conducted in English. 
Whether the observed systemic biases hold when models are queried in other languages or scripts remains an open empirical question. 
Additionally, as these are snapshot evaluations, our findings reflect the models' latent states at the time of testing and may not generalize to future weight updates or subsequent model versions.

A common critique in NLP evaluation concerns model sensitivity to prompt variations. 
We adopt the strict position from behavioral psychometrics that the prompt template is a standardized measurement instrument, not a tunable hyperparameter. 
In human surveying, rewording a validated item constitutes a fundamental change to the construct being measured, rather than a robustness failure of the experiment. 
We stress, however, that our framework treats any experimental variable as a crossed factor, so prompt phrasing is itself a first-class factor $\lambda_{\text{Prompt}}$ that the design directly supports. 
Introducing a set of semantically-equivalent paraphrases as levels of $\lambda_{\text{Prompt}}$ requires no change to the underlying theory. 
The Prompt main effect and the Model $\times$ Prompt and Target $\times$ Prompt interactions are recovered by the identical distributional-ANOVA method. 
In this view, prompt sensitivity becomes a measurable, isolable effect rather than an uncontrolled confound.

Methodologically, our framework relies on exact next-token probability distributions and is explicitly designed for constrained, single-token response paradigms (e.g., Likert scales). 
While major proprietary providers (e.g., OpenAI, Anthropic, Google) currently expose log-probabilities via their APIs, providers that entirely obscure their next-token probabilities fall outside the scope of this approach. 
Furthermore, as LLMs increasingly utilize multi-token reasoning protocols (e.g., Chain-of-Thought) prior to generating a final response, extending our Consensus and Construct statistics to integrate over latent multi-token trajectory spaces remains a complex, open challenge for future work.

\section*{Ethical Considerations}

This framework is intended as a diagnostic tool for model evaluation and behavioral research.
Scores on any single psychometric construct should not be interpreted as a comprehensive safety certification, 
nor should findings about specific models be generalized to the organizations or cultures associated with their development.
We caution that comparison to human benchmarks is provided for calibration purposes only and should not be used to 
normalize or excuse systematic bias in language models.
Finally, as with any measurement framework, there is a risk that optimizing directly against these metrics 
could produce models that score favorably without genuinely reducing the underlying biases the instruments are designed to detect.

\section*{Acknowledgements}

The authors acknowledge the use of AI writing assistants for proofreading and improving the overall clarity of the writing.


\bibliography{custom,custom_2}

\appendix
\section{Appendix}
\label{sec:appendix}

\subsection{Experiment Design}
\label{app:experiment-design}

We restate the experimental design and notation used throughout the paper
in a self-contained format here and provide the proofs.

We adopt a generalized fully crossed factorial experimental design.
All independent variables, including language model identity, are treated as factors embedded within a unified design space.

Let $\mathcal{C} = \{1,\dots,C\}$ denote the index set of experimental factors.
Each factor $c \in \mathcal{C}$ admits a finite set of discrete levels, with cardinality $\Lambda_c \in \mathbb{N}$.
We denote the level selected for factor $c$ by
\[
\lambda_c \in \{1,\dots,\Lambda_c\}.
\]
 
The complete experimental design is given by the Cartesian product of the level sets of all factors,
\begin{equation}
\label{eq:design-space}
\mathcal{D}
=
\prod_{c \in \mathcal{C}}
\{1,\dots,\Lambda_c\}.
\end{equation}

For any subset of factors $U \subseteq \mathcal{C}$, we denote
$\boldsymbol{\lambda}_U := (\lambda_c)_{c \in U}$ the vector of factor levels
indexed by $U$.
The vector of the levels for the complete factor set is hence denoted as 
$\boldsymbol{\lambda}_{\mathcal{C}}$ 
(abbreviated as
$\boldsymbol{\lambda}$).

The dependent variable is measured through a psychometric instrument consisting of $K$ items. 
Each item is designed to elicit a response on a shared ordinal scale formalized as an ordered set $(\mathcal{Y}, \preceq)$
\begin{equation}
    \label{eq:Y-support}
    \mathcal{Y} = \{y_{\min}, \dots, y_{\max}\} .
\end{equation}
Each prompt instructs the model to respond with a single digit only (one token) indicating the ordinal category, $\mathcal{Y}$, with no additional text. We therefore treat the model's response as the next-token distribution.

The joint response from all items, $K$, corresponds to a vector 
\begin{equation}
    \label{eq:y-vector}
    \mathbf{y} = (y_1, \dots, y_K) \in \mathcal{Y}^K.
\end{equation}

In behavioral experiments only a subset of the complete output space
corresponds to valid responses. 
Let $\mathcal{V}$ denote the model's full token vocabulary, and let $\mathcal{V}_{\text{val}} \subset {\mathcal{V}}$ denote the set of valid tokens for the task {with the same cardinality as $\mathcal{Y}$, $|\mathcal{V}_{\text{val}}|=|\mathcal{Y}|$}. 
Let $P_{\text{raw}}(t \mid x)$ denote the model's next-token probability distribution given a prompt $x$, defined over the full vocabulary {$\mathcal{V}$}. We define the valid probability mass as
\begin{equation}
    P_{\text{val}}(x) = \sum_{t \in \mathcal{V}_{\text{val}}} P_{\text{raw}}(t \mid x).    
\end{equation}

To interface with the language model, we define a surjective mapping 
\begin{equation}
    \label{eq:phi}
\phi: \mathcal{V}_{\text{val}} \to \mathcal{Y}    
\end{equation}
that accounts for tokenizer-specific variations of digits (e.g., leading spaces) and associates a specific subset of valid vocabulary tokens $\mathcal{V}_{\text{val}} \subset {\mathcal{V}}$ with their numeric values, $\mathcal{Y}$.
This mapping transforms the token distribution into a probability mass function over ordinal integers, allowing us to use ordinal dispersion measures to calculate the internal consistency of the model's predictive distribution.

\subsection{Conditional Independence via Context Isolation}
\label{sec:independence-assumption}

A central property of this design is that the model's internal state is reset between items. For a fixed experimental condition $\boldsymbol{\lambda}$, the response $Y_{k,\boldsymbol{\lambda}}$ for item $k$ depends solely on the specific prompt $x_{k,\boldsymbol{\lambda}}$ and the model parameters. 
Following \citet{wadi2025a}, we treat the items as conditionally independent given $\boldsymbol{\lambda}$.
Formally, for any realization vector $\mathbf{y} \in \mathcal{Y}^K$, the joint probability mass function factorizes as
\begin{equation}
\label{eq:joint-distribution-app}
    P(\mathbf{Y}_{\boldsymbol{\lambda}} = \mathbf{y} \mid \boldsymbol{\lambda}) = \prod_{k=1}^{K} P(Y_{k,\boldsymbol{\lambda}} = y_k \mid \boldsymbol{\lambda}).
\end{equation}
This assumption justifies the use of product measures for joint distributions (Appendix \ref{app:consensus}) and convolution for composite sums (Appendix \ref{app:construct}).

\subsection{The Consensus Layer}
\label{app:consensus}
Having confirmed the model's focus on valid outputs, 
we now superimpose the numeric, ordinal structure of the task 
onto the analysis. 
We require a measure that quantifies whether the model's attitude towards the numeric scale is behaviorally consistent. Consider a single item with a $7$-point Likert scale ($\mathcal{Y}=\{1,\dots,7\}$). A distribution assigning equal mass to opposing endpoints with the following probability mass functions (PMFs)
\[
P(Y=1)=0.5, \quad P(Y=7)=0.5,
\]
exhibits dissension, as the model simultaneously expresses diametrically opposing views. In contrast, a distribution concentrated on adjacent scale points,
\[
P(Y=5)=0.5, \quad P(Y=6)=0.5,
\]
is logically consistent, reflecting only local uncertainty regarding the precise intensity of the attitude.

\begin{table}[t]
  \caption{Visualization of LLM Consensus (Cns) and Dissension (Dsn) cases. Each histogram depicts a characteristic output distribution on a 1--7 scale.}
  \label{tab:consensus-dissension-visual}
  \begin{center}
    \begin{small}
      \begin{sc}
        \begin{tabular}{ccm{2.7cm}}
          \toprule
          Cns & Entropy & \multicolumn{1}{c}{Distribution} \\
          \midrule
          1.00 & 0.00 & \includegraphics[width=\linewidth]{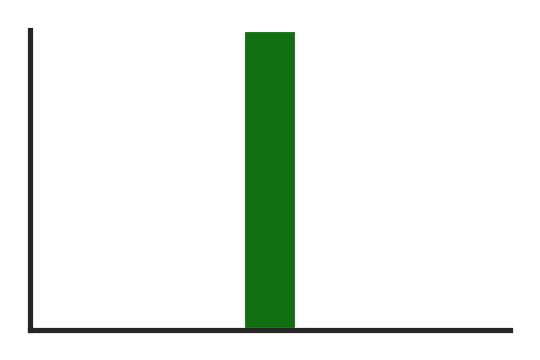} \\
          
          0.84 & 1.57 & \includegraphics[width=\linewidth]{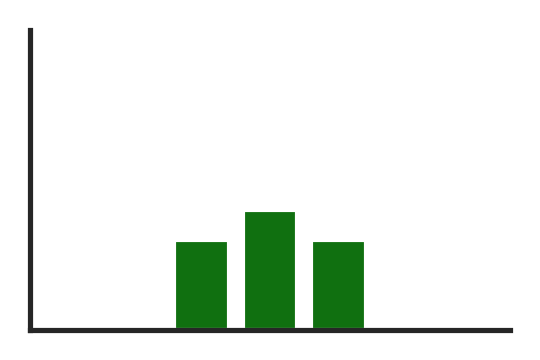} \\
          
          0.47 & 2.81 & \includegraphics[width=\linewidth]{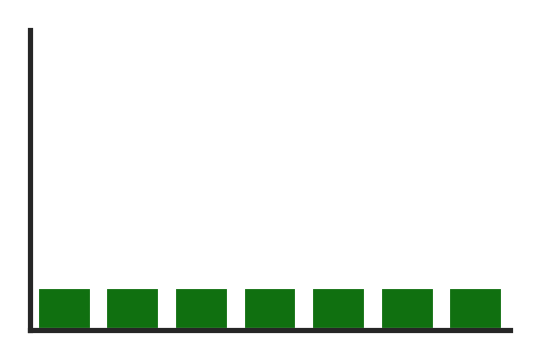} \\
          
          0.74 & 1.00 & \includegraphics[width=\linewidth]{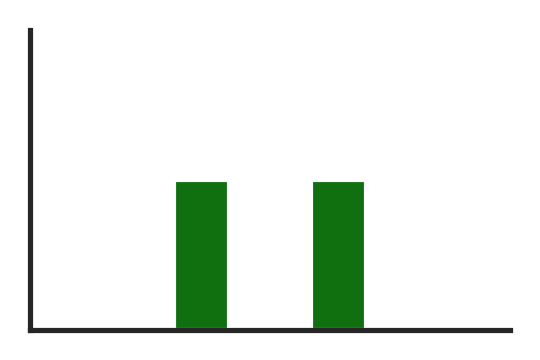} \\

          0.00 & 1.00 & \includegraphics[width=\linewidth]{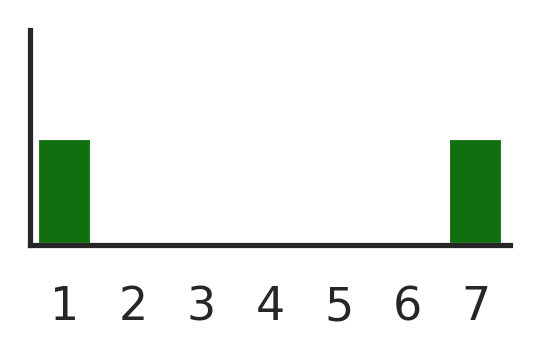} \\
          \bottomrule
        \end{tabular}
      \end{sc}
    \end{small}
  \end{center}
  \vskip -0.1in
\end{table}

To obtain a measure of consensus from a sample of individuals, 
\citet{tastle2007consensus} introduced Consensus ($\mathrm{Cns}$) as
\begin{equation}
    \small 
    \label{eq:consensus-original}
        \mathrm{Cns}(Y) = 1 + \sum_{y \in \mathcal{Y}} \biggl[ P(Y=y) \times \log_2 \left( 1 - \frac{|y - \mu_Y|}{d_{\max}} \right) \biggr]
\end{equation}
where $\mu_Y = \sum_{y \in \mathcal{Y}} y P(Y=y)$ is the expected value of the ordinal distribution and $d_{\max} =  y_{\max} - y_{\min}$.
Table \ref{tab:consensus-dissension-visual} illustrates Consensus on different PMFs.
The issue with using entropy to measure consensus is shown in the 4th 
($P(3)=0.5$ and $P(5)=0.5$) 
and 5th row 
($P(1)=0.5$ and $P(7)=0.5$), 
where the latter shows lower levels of consensus, but entropy gives the same value
($Entropy=1.00$).

Whereas the original Consensus statistic is defined over a sample
of aggregated scalar responses,
our setting differs in two fundamental ways.
First, with language models as subjects, we have direct access to the PMF
of the data generation process rather than sampled responses.
Second, behavioral instruments such as Likert scales typically consist of
multiple items, each with varying levels of Consensus and Dissension.
Calculating Consensus over aggregated responses
disregards the inter-item Dissension.

\begin{figure}[t]
    \includegraphics[width=\columnwidth]{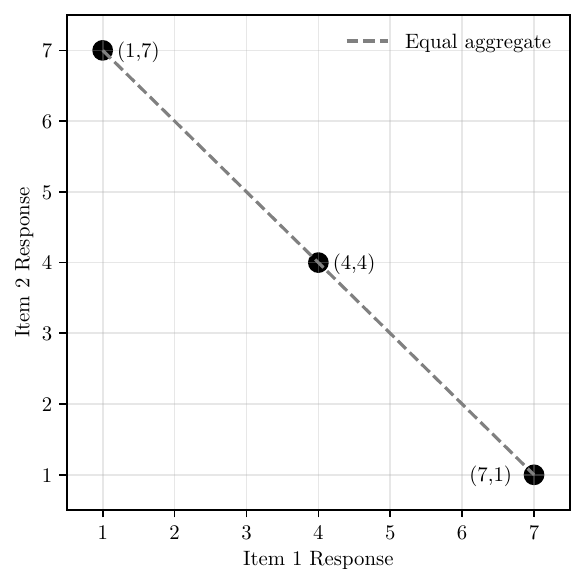}
    \caption{
    The failure of aggregate Consensus in capturing between-item dissension.
    Distinct response patterns in a two-item Likert space
    (e.g., $(4,4)$ vs.\ $(1,7)$)
    collapse to the same scalar score after aggregation.
    The original Consensus measure operates on this projection
    and cannot detect between-item dissension.
    }
    \label{fig:consensus-original-issue}
\end{figure}

Standard aggregation procedures, 
such as summation or averaging,
project this space onto a single scalar dimension. 
Figure~\ref{fig:consensus-original-issue} illustrates this effect for a two-item, 7-point Likert scale. Response patterns such as $(4,4)$, $(1,7)$, and $(7,1)$ all yield the same aggregate sum, causing the inter-item Dissension to be lost. However, these points occupy very different locations in the original two-dimensional response space, with $(1,7)$ and $(7,1)$ representing maximal disagreement between items.

Thus, we derive the Multivariate Consensus (Consensus from now on) across all items on a common ordinal scale. 
For each item $k \in \{1,\dots,K\}$, we utilize the mapping $\phi$ (Eq.~\ref{eq:phi}) to project the renormalized token distribution onto the ordinal scale $\mathcal{Y}$ (Eq.~\ref{eq:y-vector}). We define the item-level ordinal random variable $Y_k$ with probability mass function

\begin{equation}
\label{eq:phi-inverse-mapping}
    P(Y_k = y) = P\left(t = \phi^{-1}(y) \mid x_k, t \in \mathcal{V}_{\text{val}}\right),
\end{equation}
where $x_k$ is the prompt for item $k$. This step formally converts the model's token prediction into a numeric behavioral response.
We compute the expected response for each item, forming the centroid of the joint distribution
\begin{equation}
\label{eq:centroid}
    \boldsymbol{\mu} = (\mu_1, \dots, \mu_K),
\end{equation}
where $\mu_k = \sum_{y \in \mathcal{Y}} y P(Y_k = y)$.
The centroid, $\boldsymbol{\mu}$, represents the model's expected position on each Likert item in the $K$-dimensional space. We define the maximum possible distance between any response vector and the centroid as the diagonal of the hypercube defined by the scale extrema
\begin{equation}
\label{eq:d-max}
    d_{\max} = \sqrt{K \,(y_{\max} - y_{\min})^2}.
\end{equation}

This normalizes distances to the unit interval and ensures comparability across instruments with different numbers of items. The Multivariate Consensus is defined as the expected logarithmic agreement of the joint distribution with respect to the centroid (Eq.~\ref{eq:consensus-multivariate}).

Intuitively, this measure assigns higher consensus when the probability mass is concentrated near the centroid across all items, and lower consensus when mass is dispersed or polarized. Unlike entropy-based measures, Consensus is sensitive to ordinal distances. When $K=1$, the multivariate formulation reduces to the original Consensus \citep{tastle2007consensus}, with the key distinction that we operate directly on the true PMF rather than on sampled responses.
Multivariate Dissension ($\mathrm{Dsn}$) is defined as 
the complement of Consensus, 
representing the degree of polarization or 
dispersion away from the mean.

\begin{equation}    
\label{eq:dissension}
\begin{aligned}
\mathrm{Dsn}(\mathbf{Y}) = 1 - \mathrm{Cns}(\mathbf{Y})
\end{aligned}
\end{equation}

\subsection{The Construct Layer}
\label{app:construct}

Our goal is to quantify differences in the dependent variable across
experimental conditions and models, while fully propagating
uncertainty induced by the model's generative process. 
We formalize inference over latent scale scores
using an analytical fixed-effects ANOVA/Hoeffding decomposition.
For a specific condition vector $\boldsymbol{\lambda}$, 
the composite scale score, $S_{\boldsymbol{\lambda}}$, 
is the aggregation of item-level random variable responses

\begin{equation}
\label{eq:s-lambda}
S_{\boldsymbol{\lambda}}
=
\sum_{k=1}^K Y_{k,\boldsymbol{\lambda}}.
\end{equation}

Let $P_{Y_{k, \boldsymbol{\lambda}}}$ denote the PMF of item $k$ under condition $\boldsymbol{\lambda}$. Given independence (Sec.~\ref{sec:independence-assumption}), the distribution of the composite score is obtained via discrete convolution
\begin{equation}
\label{eq:item-wise-convolution}
    P_{S_{\boldsymbol{\lambda}}} = P_{Y_{1, \boldsymbol{\lambda}}} \circledast P_{Y_{2, \boldsymbol{\lambda}}} \circledast \dots \circledast P_{Y_{K, \boldsymbol{\lambda}}}.
\end{equation}

\subsubsection{Factorial Decomposition Over Distributions}

We formalize an analytical ANOVA/Hoeffding decomposition of the composite scale score. 
Our framework constructs a unique family of centered effect distributions $\{E_U\}_{U \subseteq \mathcal{C}}$, representing main effects and interactions, such that the total expected score satisfies the additive identity
\begin{equation}
\label{eq:decomposition-preview}
\small
\mathbb{E}[S_{\boldsymbol{\lambda}}] = \underbrace{\mathbb{E}[S_0]}_{\text{Grand Mean}} + \sum_{c \in \mathcal{C}} \underbrace{\mathbb{E}[E_c(\lambda_c)]}_{\text{Main Effects}} + \sum_{\substack{U \subseteq \mathcal{C} \\ |U| \ge 2}} \underbrace{\mathbb{E}[E_U(\boldsymbol{\lambda}_U)]}_{\text{Interactions}},
\end{equation}

while retaining the full probability mass function for every term in the summation.

For each experimental condition  
\(\boldsymbol{\lambda} \in \mathcal{D}\), the composite scale score
$S_{\boldsymbol{\lambda}}$ 
is a discrete random variable with probability mass function
\(P_{S_{\boldsymbol{\lambda}}}\) (Eq.~\ref{eq:s-lambda}).
Rather than reducing \(S_{\boldsymbol{\lambda}}\) to a point estimate,
we treat the mapping
\begin{equation}
\boldsymbol{\lambda}
\;\mapsto\;
P_{S_{\boldsymbol{\lambda}}}    
\end{equation}

as the primitive object of analysis.  
This allows all uncertainty induced by the model's generative process to be propagated exactly through subsequent analyses.
Given the fully crossed factorial design, 
all factor combinations are symmetric.
Therefore, 
the design space $\mathcal{D}$ takes on a uniform probability
distribution $\pi$, 
assigning equal weight
to each experimental condition.

\begin{assumption}[Fully Crossed Fixed-Effects Design]
\label{ass:fully-crossed}
The experimental design space
\(
\mathcal{D} = \prod_{c \in \mathcal{C}} \{1,\dots,\Lambda_c\}
\)
is finite and fully crossed. All expectations over
\(\mathcal{D}\) are taken with respect to the uniform measure.
\end{assumption}

\begin{proposition}[Linearity of Expectation under Mixture Marginalization]
\label{prop:linearity-mixture}
Let $\mathcal{D}$ be a finite factorial design with probability measure $\pi$, and let $P_{\bar S} = \mathbb{E}_{\boldsymbol{\lambda} \sim \pi} [P_{S_{\boldsymbol{\lambda}}}]$ be the mixture distribution. Then,
\begin{equation}
\label{eq:mixture-expectation}
\mathbb{E}[\bar S]
=
\mathbb{E}_{\boldsymbol{\lambda} \sim \pi}
\!\left[
\mathbb{E}[S_{\boldsymbol{\lambda}}]
\right].
\end{equation}
\end{proposition}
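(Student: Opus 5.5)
The plan is to prove Proposition~\ref{prop:linearity-mixture} by writing both sides as finite double sums over the common support and exchanging the order of summation. Nothing deeper than finiteness is needed. Every $S_{\boldsymbol{\lambda}}$ is a sum of $K$ item responses taking values in $\mathcal{Y}$ (Eq.~\ref{eq:s-lambda}), so every $P_{S_{\boldsymbol{\lambda}}}$ is supported on the finite set
\[
\mathcal{S}=\{K y_{\min},\dots,K y_{\max}\}.
\]
The design $\mathcal{D}$ is also finite (Assumption~\ref{ass:fully-crossed}). All sums below are therefore finite, and no convergence or Fubini-type justification is required.

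First I would make the mixture precise. It is the pointwise average
\[
P_{\bar S}(s)=\sum_{\boldsymbol{\lambda}\in\mathcal{D}}\pi(\boldsymbol{\lambda})\,P_{S_{\boldsymbol{\lambda}}}(s), \qquad s\in\mathcal{S}.
\]
I would check that this is a valid PMF: it is nonnegative, and it sums to one because $\sum_s P_{S_{\boldsymbol{\lambda}}}(s)=1$ for each $\boldsymbol{\lambda}$ and $\sum_{\boldsymbol{\lambda}}\pi(\boldsymbol{\lambda})=1$. This justifies calling $\bar S$ a random variable with law $P_{\bar S}$.

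Next I would compute $\mathbb{E}[\bar S]=\sum_{s\in\mathcal{S}} s\,P_{\bar S}(s)$. Substituting the definition of $P_{\bar S}(s)$ turns this into a double sum. I would swap the finite sums and pull out $\pi(\boldsymbol{\lambda})$, which gives
\[
\mathbb{E}[\bar S]=\sum_{\boldsymbol{\lambda}\in\mathcal{D}}\pi(\boldsymbol{\lambda})\sum_{s\in\mathcal{S}} s\,P_{S_{\boldsymbol{\lambda}}}(s).
\]
The inner sum is $\mathbb{E}[S_{\boldsymbol{\lambda}}]$, so the whole expression is $\mathbb{E}_{\boldsymbol{\lambda}\sim\pi}[\mathbb{E}[S_{\boldsymbol{\lambda}}]]$, which is Eq.~\ref{eq:mixture-expectation}.

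There is no real obstacle. The one point needing care is interpretive: $\mathbb{E}_{\boldsymbol{\lambda}\sim\pi}[P_{S_{\boldsymbol{\lambda}}}]$ must be read as a pointwise average of PMFs, not as a distribution of averaged scores such as $\sum_{\boldsymbol{\lambda}}\pi(\boldsymbol{\lambda})S_{\boldsymbol{\lambda}}$. For expectations the two readings agree by linearity, but for the full distribution they do not, and later uses of $S_0$ as a grand mixture depend on the mixture reading. As a remark, I would note that the argument does not use the uniformity of $\pi$ or the item independence from Sec.~\ref{sec:independence-assumption}; it holds for any probability measure on the finite design.
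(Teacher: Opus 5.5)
Your proof is correct and follows the same route as the paper. The paper's proof is a single line citing linearity of finite sums and the definition of expectation, and you have written out that step explicitly by exchanging the two finite sums over $\mathcal{S}$ and $\mathcal{D}$.
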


The proof follows immediately from the linearity of finite sums and the definition of expectation.

\begin{corollary}[Expectation of Factorial Marginal Distributions]
\label{cor:anova-consistency}
Let $S_U(\boldsymbol{\lambda}_U)$ denote a factorial
marginal distribution defined by marginalizing $P_{S_{\boldsymbol{\lambda}}}$ over
$\boldsymbol{\lambda}_{-U}$ under the uniform measure.
Then
\[
\mathbb{E}\!\left[S_U(\boldsymbol{\lambda}_U)\right]
=
\mathbb{E}_{\boldsymbol{\lambda}_{-U}}
\!\left[
\mathbb{E}[S_{\boldsymbol{\lambda}}]
\right].
\]
\end{corollary}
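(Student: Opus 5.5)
The plan is to reduce the statement to Proposition~\ref{prop:linearity-mixture}, applied to a suitably chosen sub-design. Fix $U \subseteq \mathcal{C}$ and a level vector $\boldsymbol{\lambda}_U$.

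First, write the factorial marginal explicitly. Under Assumption~\ref{ass:fully-crossed}, the uniform measure on $\mathcal{D}$ conditioned on $\boldsymbol{\lambda}_U$ is the uniform measure $\pi_{-U}$ on the slice
\[
\mathcal{D}_{-U} = \prod_{c \notin U}\{1,\dots,\Lambda_c\}.
\]
This is the point where full crossing is used: the slice is a full Cartesian product, so it is independent of the particular $\boldsymbol{\lambda}_U$. The marginal is therefore the finite mixture
\[
P_{S_U(\boldsymbol{\lambda}_U)}(s) = \frac{1}{|\mathcal{D}_{-U}|}\sum_{\boldsymbol{\lambda}_{-U} \in \mathcal{D}_{-U}} P_{S_{(\boldsymbol{\lambda}_U,\boldsymbol{\lambda}_{-U})}}(s).
\]
Each $P_{S_{\boldsymbol{\lambda}}}$ is supported on the finite set $\{K y_{\min},\dots,K y_{\max}\}$. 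A convex combination of PMFs on this set is again a PMF on it, so $S_U(\boldsymbol{\lambda}_U)$ is a well-defined random variable with finite support and a finite expectation.

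Second, apply Proposition~\ref{prop:linearity-mixture} with the finite design $\mathcal{D}_{-U}$, the measure $\pi_{-U}$, and the family $\boldsymbol{\lambda}_{-U} \mapsto P_{S_{(\boldsymbol{\lambda}_U,\boldsymbol{\lambda}_{-U})}}$. This gives the claim directly. If one prefers not to rely on the proposition, the computation is a single interchange of two finite sums:
\[
\sum_s s \cdot \frac{1}{|\mathcal{D}_{-U}|}\sum_{\boldsymbol{\lambda}_{-U}} P_{S_{\boldsymbol{\lambda}}}(s) = \frac{1}{|\mathcal{D}_{-U}|}\sum_{\boldsymbol{\lambda}_{-U}} \mathbb{E}[S_{\boldsymbol{\lambda}}].
\]

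Finally, check the edge cases. For $U = \emptyset$ the statement is exactly Proposition~\ref{prop:linearity-mixture} and recovers the grand mixture $S_0$. For $U = \mathcal{C}$ the marginalization is trivial and $S_U = S_{\boldsymbol{\lambda}}$.

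I do not expect a genuine obstacle. The only step needing care is the first one: the phrase ``marginalizing under the uniform measure'' must be read as mixing over the full slice $\mathcal{D}_{-U}$ with equal weights. Assumption~\ref{ass:fully-crossed} is exactly what makes the conditional measure uniform and identical across levels $\boldsymbol{\lambda}_U$. In an incomplete design, the slices would differ and the averaging would not be over a common set.
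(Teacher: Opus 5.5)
Your proposal is correct and matches the paper's proof, which simply invokes Proposition~\ref{prop:linearity-mixture} with $\pi$ taken to be the uniform distribution over $\boldsymbol{\lambda}_{-U}$. Your explicit interchange of the two finite sums and your edge-case checks only spell out what that one-line reduction leaves implicit.
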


The proof follows directly from Prop.~\ref{prop:linearity-mixture}
by taking $\pi$ to be the uniform distribution over
$\boldsymbol{\lambda}_{-U}$.

\subsubsection{Grand Mixture Distribution}

To establish a baseline for factorial effects, we construct the distributional equivalent of the classical ANOVA Grand Mean. We compute this as the Grand Mixture Distribution ($P_{S_0}$), obtained by averaging the composite score distributions uniformly across the entire design space $\mathcal{D}$
\begin{equation}
\label{eq:grand-mixture}
P_{S_0}
=
\mathbb{E}_{\boldsymbol{\lambda} \sim \pi}
\!\left[
P_{S_{\boldsymbol{\lambda}}}
\right]
=
\frac{1}{|\mathcal{D}|}
\sum_{\boldsymbol{\lambda} \in \mathcal{D}}
P_{S_{\boldsymbol{\lambda}}}.  
\end{equation}

The random variable \(S_0\) encapsulates the global background distribution of the construct, encompassing all sources of variation in the experiment.

\subsubsection{Marginal Distributions}
\label{app:marginal-slices}

ANOVA Marginal Mean averages responses over complementary factors to isolate a specific level. We extend it by computing the Marginal Slice Distribution ($P_{M_c(\lambda_c)}$). This is obtained by taking the uniform mixture of composite score distributions over all factors other than $c$,

{\small
\begin{equation}
\label{eq:marginal-means}
\small
\begin{split}
P_{M_c(\lambda_c)}&=\mathbb{E}_{\boldsymbol{\lambda}_{-c}}\!\left[P_{S_{(\lambda_c,\boldsymbol{\lambda}_{-c})}}\right]\\
                  &=\frac{1}{|\mathcal{D}_{-c}|}\sum_{\boldsymbol{\lambda}_{-c}\in\mathcal{D}_{-c}}P_{S_{(\lambda_c,\boldsymbol{\lambda}_{-c})}}.
\end{split}
\end{equation}
}

The random variable \(M_c(\lambda_c)\) is the predictive distribution of the
composite score when factor \(c\) is fixed at level \(\lambda_c\), averaging
uniformly over all remaining factors. 
At the expectation level (Prop.~\ref{prop:linearity-mixture}),
\(\mathbb{E}[M_c(\lambda_c)]\) equals the classical marginal mean.

\begin{corollary}[Grand mixture as an average of marginal slices]
\label{cor:grand-as-average-of-marginals}
Under Assumption~\ref{ass:fully-crossed}, for any factor \(c\),
\begin{equation}
\label{eq:grand-as-average-of-marginals}
P_{S_0}
=
\frac{1}{\Lambda_c}
\sum_{\lambda_c=1}^{\Lambda_c}
P_{M_c(\lambda_c)}.
\end{equation}
\end{corollary}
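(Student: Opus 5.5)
The plan is to verify the identity at the level of probability mass functions, evaluated pointwise on the support of the composite score. Fix a factor $c$ and a score value $s$. By definition (Eq.~\ref{eq:marginal-means}),
\[
P_{M_c(\lambda_c)}(s)=\frac{1}{|\mathcal{D}_{-c}|}\sum_{\boldsymbol{\lambda}_{-c}\in\mathcal{D}_{-c}}P_{S_{(\lambda_c,\boldsymbol{\lambda}_{-c})}}(s),
\]
where $\mathcal{D}_{-c}=\prod_{c'\neq c}\{1,\dots,\Lambda_{c'}\}$. Averaging this over $\lambda_c\in\{1,\dots,\Lambda_c\}$ with weight $1/\Lambda_c$ gives a double sum over the pairs $(\lambda_c,\boldsymbol{\lambda}_{-c})$, with total weight $1/(\Lambda_c\,|\mathcal{D}_{-c}|)$ on each pair.

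The key step uses Assumption~\ref{ass:fully-crossed}. Because the design is fully crossed, $\mathcal{D}=\{1,\dots,\Lambda_c\}\times\mathcal{D}_{-c}$ is an exact Cartesian product. Therefore:
\begin{itemize}
\item the map $(\lambda_c,\boldsymbol{\lambda}_{-c})\mapsto\boldsymbol{\lambda}$ is a bijection onto $\mathcal{D}$;
\item consequently $|\mathcal{D}|=\Lambda_c\,|\mathcal{D}_{-c}|$.
\end{itemize}
Reindexing the double sum over $\mathcal{D}$ turns it into $\frac{1}{|\mathcal{D}|}\sum_{\boldsymbol{\lambda}\in\mathcal{D}}P_{S_{\boldsymbol{\lambda}}}(s)$. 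By Eq.~\eqref{eq:grand-mixture} this is $P_{S_0}(s)$.

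Since $s$ was arbitrary, the two PMFs coincide, which proves Eq.~\eqref{eq:grand-as-average-of-marginals}. Equivalently, this is the tower property for the uniform measure $\pi$: marginalizing first over $\boldsymbol{\lambda}_{-c}$ and then over $\lambda_c$ equals marginalizing over $\boldsymbol{\lambda}$ jointly, because under the product structure the uniform measure on $\mathcal{D}$ factorizes as the product of uniform measures on the two coordinates.

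The only genuine obstacle is the factorization step, that is, the identity $|\mathcal{D}|=\Lambda_c|\mathcal{D}_{-c}|$ together with the uniform weights. In an incomplete or unbalanced design, the cells with different $\lambda_c$ would carry different numbers of conditions. The unweighted average of slices would then differ from the grand mixture and would require weights proportional to cell counts. I would point out explicitly that full crossing is exactly what removes this issue.

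I would also note in passing that the result holds for the distributions themselves, not only for their means. Taking expectations with Prop.~\ref{prop:linearity-mixture} then recovers the classical fact that the grand mean is the average of the marginal means.
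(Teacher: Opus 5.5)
Your proof is correct and takes essentially the same approach as the paper. Both arguments use the Cartesian-product factorization $\mathcal{D}=\{1,\dots,\Lambda_c\}\times\mathcal{D}_{-c}$ and the identity $|\mathcal{D}|=\Lambda_c|\mathcal{D}_{-c}|$ to reindex the double sum; the paper runs the computation from $P_{S_0}$ toward the average of slices, whereas you run it in the reverse direction and pointwise in $s$.
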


\begin{proof}
Fix \(c \in \mathcal{C}\). The grand mixture distribution is
\begin{equation}
\label{eq:proof-grand-def}
P_{S_0}
=
\frac{1}{|\mathcal{D}|}
\sum_{\boldsymbol{\lambda}\in\mathcal{D}}
P_{S_{\boldsymbol{\lambda}}}.
\end{equation}
Under Assumption~\ref{ass:fully-crossed}, the design space factors as
\(
\mathcal{D} \;=\; \{1,\dots,\Lambda_c\}\times \mathcal{D}_{-c},
\)
where
\(
\mathcal{D}_{-c} := \prod_{d\in\mathcal{C}\setminus\{c\}}\{1,\dots,\Lambda_d\}.
\)
Hence each \(\boldsymbol{\lambda}\in\mathcal{D}\) can be written uniquely as
\(\boldsymbol{\lambda}=(\lambda_c,\boldsymbol{\lambda}_{-c})\) with
\(\lambda_c\in\{1,\dots,\Lambda_c\}\) and \(\boldsymbol{\lambda}_{-c}\in\mathcal{D}_{-c}\).
We may rewrite the sum in Eq.~\eqref{eq:proof-grand-def} as a double sum
\begin{equation}
\label{eq:proof-double-sum}
P_{S_0}
=
\frac{1}{|\mathcal{D}|}
\sum_{\lambda_c=1}^{\Lambda_c}
\sum_{\boldsymbol{\lambda}_{-c}\in\mathcal{D}_{-c}}
P_{S_{(\lambda_c,\boldsymbol{\lambda}_{-c})}}.
\end{equation}
Because the design is fully crossed,
\(
|\mathcal{D}| = \Lambda_c \, |\mathcal{D}_{-c}|.
\)
Substituting this into Eq.~\eqref{eq:proof-double-sum} gives

{\small
\begin{align}
P_{S_0}
&=
\frac{1}{\Lambda_c\,|\mathcal{D}_{-c}|}
\sum_{\lambda_c=1}^{\Lambda_c}
\sum_{\boldsymbol{\lambda}_{-c}\in\mathcal{D}_{-c}}
P_{S_{(\lambda_c,\boldsymbol{\lambda}_{-c})}}
\label{eq:proof-factor-cardinality}\\[4pt]
&=
\frac{1}{\Lambda_c}
\sum_{\lambda_c=1}^{\Lambda_c}
\left(
\frac{1}{|\mathcal{D}_{-c}|}
\sum_{\boldsymbol{\lambda}_{-c}\in\mathcal{D}_{-c}}
P_{S_{(\lambda_c,\boldsymbol{\lambda}_{-c})}}
\right).
\label{eq:proof-recognize-marginal}
\end{align}
}

The expression in parentheses in Eq.~\eqref{eq:proof-recognize-marginal} is
exactly the marginal slice distribution \(P_{M_c(\lambda_c)}\) from
Eq.~\eqref{eq:marginal-means}. Therefore,
\[
P_{S_0}
=
\frac{1}{\Lambda_c}
\sum_{\lambda_c=1}^{\Lambda_c}
P_{M_c(\lambda_c)},
\]
which proves Eq.~\eqref{eq:grand-as-average-of-marginals}.
\end{proof}

\subsubsection{Expectation-Level Decomposition}
\label{app:expectation-decomposition}

Our primary estimands in the Construct Layer are predictive
distributions (PMFs) and contrast distribution constructed via
mixtures and paired subtraction. 
Because such contrasts depend on a chosen coupling, they do not, in general, admit an additive decomposition at the level of random variables.
However, our distributional framework connects to classical fixed-effects ANOVA at the \emph{expectation} level (Eq.~\ref{eq:decomposition-preview}). 
Let $\mu(\boldsymbol{\lambda})$ be the deterministic mean surface over the factorial design
\begin{equation}
\label{eq:mu-def}
\mu(\boldsymbol{\lambda})
:=
\mathbb{E}\!\left[S_{\boldsymbol{\lambda}}\right],
\qquad
\boldsymbol{\lambda}\in\mathcal{D}.
\end{equation}
Here, we state the Hoeffding/ANOVA decomposition of $\mu$ on the finite
product space $\mathcal{D}$. 
We provide the unique collection of expectation-level
effect components $\{\mu_U\}_{U\subseteq\mathcal{C}}$ that our distributional
effect estimands target in expectation.

\begin{theorem}[Expectation-Level ANOVA/Hoeffding Decomposition]
\label{thm:anova-expectation}
Assume a finite fully crossed design $\mathcal{D}$ endowed with the uniform
product measure (Assumption~\ref{ass:fully-crossed}). 
There exists a unique collection of functions
$\{\mu_U\}_{U\subseteq\mathcal{C}}$ such that
\begin{equation}
\label{eq:anova-decomp}
\mu(\boldsymbol{\lambda})
=
\sum_{U\subseteq\mathcal{C}}
\mu_U(\boldsymbol{\lambda}_U),
\end{equation}
and each component satisfies the identifiability (sum-to-zero) constraints

{\small
\begin{equation}
\label{eq:anova-identifiability}
\sum_{\lambda_c=1}^{\Lambda_c}
\mu_U(\boldsymbol{\lambda}_U)
=
0
\qquad
\text{for all }U\subseteq\mathcal{C}\text{ and all }c\in U.
\end{equation}
}
\end{theorem}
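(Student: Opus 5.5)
We prove existence by an explicit inclusion--exclusion construction and uniqueness by a dimension/orthogonality argument on the finite product space $\mathcal{D}$ with the uniform measure $\pi$.

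\textbf{Existence.} For each $U\subseteq\mathcal{C}$, define the averaging operator
\[
(A_U\mu)(\boldsymbol{\lambda}_U):=\mathbb{E}_{\boldsymbol{\lambda}_{-U}}\!\left[\mu(\boldsymbol{\lambda}_U,\boldsymbol{\lambda}_{-U})\right].
\]
This is well defined because $\mathcal{D}=\mathcal{D}_U\times\mathcal{D}_{-U}$ under Assumption~\ref{ass:fully-crossed}. By Corollary~\ref{cor:anova-consistency}, it is exactly the expectation of the factorial marginal distribution $S_U(\boldsymbol{\lambda}_U)$. Set
\[
\mu_U(\boldsymbol{\lambda}_U):=\sum_{V\subseteq U}(-1)^{|U|-|V|}(A_V\mu)(\boldsymbol{\lambda}_V).
\]
Möbius inversion on the Boolean lattice gives $\sum_{U\subseteq W}\mu_U=A_W\mu$ for every $W$. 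Taking $W=\mathcal{C}$, where $A_{\mathcal{C}}\mu=\mu$, yields Eq.~\eqref{eq:anova-decomp}.

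\textbf{Sum-to-zero constraints.} Fix $U$ and $c\in U$, and pair each $V\subseteq U\setminus\{c\}$ with $V\cup\{c\}$. Because the measure is a uniform product, averaging over $\lambda_c$ commutes with the other averages. Hence $\frac{1}{\Lambda_c}\sum_{\lambda_c}(A_{V\cup\{c\}}\mu)=A_V\mu$. The two paired terms carry opposite signs, so they cancel, and $\sum_{\lambda_c}\mu_U=0$ follows.

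\textbf{Uniqueness.} This is the step needing the most care. Suppose $\sum_U\nu_U(\boldsymbol{\lambda}_U)=0$ identically, with each $\nu_U$ satisfying the constraints~\eqref{eq:anova-identifiability}. It suffices to show every $\nu_U=0$.

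First, for $U\neq W$, the functions $\nu_U$ and $\nu_W$ are orthogonal in $L^2(\pi)$. To see this, pick $c\in U\triangle W$, say $c\in U\setminus W$. Since $\pi$ is a product measure, $\nu_W$ does not depend on $\lambda_c$, so we can sum over $\lambda_c$ first. Then $\mathbb{E}_\pi[\nu_U\nu_W]$ factors through $\sum_{\lambda_c}\nu_U=0$, and the inner product vanishes.

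Now take the inner product of $\sum_U\nu_U=0$ with a fixed $\nu_W$. Orthogonality leaves $\mathbb{E}_\pi[\nu_W^2]=0$. Because $\pi$ puts positive mass on every point of the finite set $\mathcal{D}$, this forces $\nu_W\equiv0$.

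The only delicate points are bookkeeping. One must make sure the product structure from Assumption~\ref{ass:fully-crossed} is what lets the averages commute and lets the inner product factor. One must also treat the case $U=\emptyset$: there the constraint is vacuous, and $\mu_\emptyset=\mathbb{E}[S_0]$ by Proposition~\ref{prop:linearity-mixture}, which identifies the grand-mean term.
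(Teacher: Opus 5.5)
Your proof is correct, but it runs the argument in the opposite order from the paper and uses a different mechanism for uniqueness.

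The paper defines the components recursively, $\mu_U := T_U\mu - \sum_{V\subsetneq U}\mu_V$. This makes the partial-sum identity $T_U\mu=\sum_{V\subseteq U}\mu_V$ hold by construction. The paper then proves the sum-to-zero constraints by induction on $|V|$, and only afterwards recovers your inclusion--exclusion formula as a corollary via M\"obius inversion. You start from the closed form instead. That lets you verify the constraints in one step by cancelling the pairs $V$ and $V\cup\{c\}$, with no induction.

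For uniqueness, the paper applies the projection $T_U$ to $\sum_W h_W=0$ and uses the constraints to kill every $h_W$ with $W\not\subseteq U$. It then solves the resulting triangular system $\sum_{W\subseteq U}h_W=0$ by induction on $|U|$. You instead show that components with distinct index sets are orthogonal in $L^2(\pi)$, and take an inner product with each $\nu_W$.

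The paper's route is purely linear-algebraic and reuses exactly the operators $T_U$ from its existence step. Yours is the standard Efron--Stein/Hoeffding argument, and it yields something the paper never states: the orthogonal sum-of-squares partition $\mathbb{E}_\pi[(\mu-\mu_\emptyset)^2]=\sum_{U\neq\emptyset}\mathbb{E}_\pi[\mu_U^2]$, which is the classical ANOVA variance decomposition.

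One small wording point: $\nu_W$ is free of $\lambda_c$ because $c\notin W$, not because $\pi$ is a product measure. The fully crossed product structure is what lets you sum over $\lambda_c$ first with $\boldsymbol{\lambda}_{-c}$ held fixed.
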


\begin{proof}
    \label{prf:anova-expectation}
We work on the finite product space $\mathcal{D}$ with the uniform product
measure implied by Assumption~\ref{ass:fully-crossed}.

For any function $f:\mathcal{D}\to\mathbb{R}$ and any subset
$U\subseteq\mathcal{C}$, define the uniform marginalization operator

{\small
\begin{align}
\label{eq:TU-def}
(T_U f)(\boldsymbol{\lambda}_U)
&:=
\mathbb{E}_{\boldsymbol{\lambda}_{-U}}\!\left[
f(\boldsymbol{\lambda}_U,\boldsymbol{\lambda}_{-U})
\right]\\
&=
\frac{1}{|\mathcal{D}_{-U}|}
\sum_{\boldsymbol{\lambda}_{-U}\in\mathcal{D}_{-U}}
f(\boldsymbol{\lambda}_U,\boldsymbol{\lambda}_{-U}),
\end{align}
}

where $\mathcal{D}_{-U}:=\prod_{c\in\mathcal{C}\setminus U}\{1,\dots,\Lambda_c\}$.
By construction, $T_U f$ depends only on $\boldsymbol{\lambda}_U$, and
$T_{\mathcal{C}} f=f$.

Define components recursively in increasing order of $|U|$
\begin{align}
\label{eq:hoeffding-recursion}
\mu_\emptyset
&:= (T_\emptyset \mu)
\qquad\text{(a constant)},\\
\mu_U(\boldsymbol{\lambda}_U)
&:=
(T_U \mu)(\boldsymbol{\lambda}_U)
-
\sum_{V\subsetneq U}
\mu_V(\boldsymbol{\lambda}_V),\\ 
&U\neq\emptyset.
\notag
\end{align}
This is well-defined because the right-hand side for $\mu_U$ only involves
$\mu_V$ with strictly smaller index sets $V\subsetneq U$, which have already
been defined.

From Eq.~\eqref{eq:hoeffding-recursion} we immediately obtain, for every
$U\subseteq\mathcal{C}$,
\begin{equation}
\label{eq:TU-sum-identity}
(T_U \mu)(\boldsymbol{\lambda}_U)
=
\sum_{V\subseteq U}\mu_V(\boldsymbol{\lambda}_V).
\end{equation}
For $U=\emptyset$ this holds by definition. 
For $U\neq\emptyset$ it is a simple
rearrangement of \eqref{eq:hoeffding-recursion}.
Setting $U=\mathcal{C}$ and using $T_{\mathcal{C}}\mu=\mu$ yields
\[
\mu(\boldsymbol{\lambda})
=
\sum_{V\subseteq\mathcal{C}}\mu_V(\boldsymbol{\lambda}_V),
\]
which is Eq.~\eqref{eq:anova-decomp}.

\paragraph{Identifiability.}
Fix $U\subseteq\mathcal{C}$ and $c\in U$. Uniformly averaging $\mu_U$ over
$\lambda_c$ gives

{\small
\begin{equation}
\mathbb{E}_{\lambda_c}\!\left[\mu_U(\boldsymbol{\lambda}_U)\right]
=
\mathbb{E}_{\lambda_c}\!\left[(T_U\mu)(\boldsymbol{\lambda}_U)\right]
-
\sum_{V\subsetneq U}\mathbb{E}_{\lambda_c}\!\left[\mu_V(\boldsymbol{\lambda}_V)\right].  
\end{equation}
}

Because the measure is a product measure, averaging $T_U\mu$ over $\lambda_c$
removes coordinate $c$ from the conditioning,
\[
\mathbb{E}_{\lambda_c}\!\left[(T_U\mu)(\boldsymbol{\lambda}_U)\right]
=
(T_{U\setminus\{c\}}\mu)(\boldsymbol{\lambda}_{U\setminus\{c\}}).
\]
For $V\subsetneq U$, if $c\in V$ then $\mathbb{E}_{\lambda_c}[\mu_V]=0$ by
induction on $|V|$. If $c\notin V$ then $\mu_V$ does not depend on $\lambda_c$
and the average leaves it unchanged. Hence,

{\small
\begin{equation}
\mathbb{E}_{\lambda_c}\!\left[\mu_U(\boldsymbol{\lambda}_U)\right]
= 
(T_{U\setminus\{c\}}\mu)(\boldsymbol{\lambda}_{U\setminus\{c\}})
-
\sum_{V\subseteq U\setminus\{c\}}\mu_V(\boldsymbol{\lambda}_V).  
\end{equation}
}

The right-hand side is zero by applying Eq.~\eqref{eq:TU-sum-identity} with
$U\setminus\{c\}$. 
Therefore
$\mathbb{E}_{\lambda_c}[\mu_U(\boldsymbol{\lambda}_U)]=0$, which under uniformity
is equivalent to Eq.~\eqref{eq:anova-identifiability}.

\paragraph{Uniqueness.}

Suppose $\{\tilde\mu_U\}$ is another collection satisfying
Eq.~\eqref{eq:anova-decomp} and Eq.~\eqref{eq:anova-identifiability}. Define
$h_U:=\tilde\mu_U-\mu_U$. Then $\sum_{U}h_U(\boldsymbol{\lambda}_U)=0$ for all
$\boldsymbol{\lambda}$ and each $h_U$ satisfies the same sum-to-zero constraints.
Apply $T_U$ to $\sum_W h_W=0$, where $W$ is any subset of $\mathcal{C}$. If $W\subseteq U$, then $T_U h_W=h_W$. If
$W\not\subseteq U$, pick $c\in W\setminus U$. 
Marginalizing over $-U$ averages
over $\lambda_c$, and the sum-to-zero constraint implies $T_U h_W=0$. Thus,
\[
0
=
T_U\!\left[\sum_{W\subseteq\mathcal{C}} h_W\right]
=
\sum_{W\subseteq U} h_W
\qquad\text{for all }U\subseteq\mathcal{C}.
\]
Taking $U=\emptyset$ gives $h_\emptyset=0$. Proceed by induction on $|U|$.
Assuming $h_V=0$ for all $V\subsetneq U$, the identity above implies $h_U=0$.
Hence $h_U\equiv 0$ for all $U$, proving uniqueness.
\end{proof}

\begin{corollary}[M\"obius inversion form]
\label{cor:mobius-form}
The components in Theorem~\ref{thm:anova-expectation} admit the explicit
inclusion--exclusion form

{\small
\begin{equation}
\label{eq:mobius-inversion}
\mu_U(\boldsymbol{\lambda}_U)
=
\sum_{V\subseteq U}
(-1)^{|U|-|V|}
\,(T_V\mu)(\boldsymbol{\lambda}_V),
\qquad U\subseteq\mathcal{C}.
\end{equation}
}

\end{corollary}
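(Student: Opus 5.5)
The statement to prove is Corollary~\ref{cor:mobius-form}: the Hoeffding components of Theorem~\ref{thm:anova-expectation} equal the inclusion--exclusion sums of the marginalizations $T_V\mu$. I will prove this by Möbius inversion on the Boolean lattice of subsets of $\mathcal{C}$. The starting point is the identity \eqref{eq:TU-sum-identity} established in the proof of Theorem~\ref{thm:anova-expectation}, namely
\[
(T_U\mu)(\boldsymbol{\lambda}_U)=\sum_{V\subseteq U}\mu_V(\boldsymbol{\lambda}_V)\quad\text{for every }U\subseteq\mathcal{C}.
\]
Write $g(U):=T_U\mu$ and $f(V):=\mu_V$, viewed as functions on $\mathcal{D}$ that depend only on the indicated coordinates. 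Then $g(U)=\sum_{V\subseteq U}f(V)$, and the claim is the inverted relation $f(U)=\sum_{V\subseteq U}(-1)^{|U|-|V|}g(V)$.

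For the inversion itself I will substitute the right-hand side and swap sums:
\[
\sum_{V\subseteq U}(-1)^{|U|-|V|}\sum_{W\subseteq V}\mu_W=\sum_{W\subseteq U}\mu_W\sum_{W\subseteq V\subseteq U}(-1)^{|U|-|V|}.
\]
The inner sum runs over $V=W\cup A$ with $A\subseteq U\setminus W$. It therefore equals $\sum_{A\subseteq U\setminus W}(-1)^{|U\setminus W|-|A|}=(1-1)^{|U\setminus W|}$, which is $1$ if $W=U$ and $0$ otherwise. Only $\mu_U$ survives, which gives \eqref{eq:mobius-inversion}.

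I must also check that evaluating at $\boldsymbol{\lambda}_U$ is consistent. Each $T_V\mu$ with $V\subseteq U$ depends only on $\boldsymbol{\lambda}_V$, which is a subvector of $\boldsymbol{\lambda}_U$, so the identity holds pointwise for every $\boldsymbol{\lambda}_U$.

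The main obstacle is only bookkeeping: keeping the summation exchange and the binomial cancellation clean. No analytic difficulty arises, because the design is finite. Uniqueness of the $\mu_U$ (Theorem~\ref{thm:anova-expectation}) guarantees that these explicit formulas are the components, rather than some other decomposition.

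If one prefers to avoid the displayed cancellation, induction on $|U|$ is an alternative. Using the recursion \eqref{eq:hoeffding-recursion}, assume the formula holds for every $V\subsetneq U$. Then $\mu_U=T_U\mu-\sum_{V\subsetneq U}\sum_{W\subseteq V}(-1)^{|V|-|W|}T_W\mu$. Collecting the coefficient of each $T_W\mu$ uses the same alternating-sum identity over the interval $[W,U)$.
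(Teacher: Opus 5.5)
Your proposal is correct and follows the paper's route. Both start from the identity \eqref{eq:TU-sum-identity}, $(T_U\mu)=\sum_{V\subseteq U}\mu_V$, and invert it on the Boolean lattice $(2^{\mathcal{C}},\subseteq)$; the paper simply cites M\"obius inversion (Rota), whereas you verify it directly through the cancellation $\sum_{W\subseteq V\subseteq U}(-1)^{|U|-|V|}=(1-1)^{|U\setminus W|}$, which makes your argument self-contained.
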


\begin{proof}
Equation~\eqref{eq:TU-sum-identity} states that for each fixed $U$,
$(T_U\mu)=\sum_{V\subseteq U}\mu_V$. This is exactly the zeta-transform relation
on the Boolean lattice $(2^{\mathcal{C}},\subseteq)$. Applying M\"obius inversion
(\citealt{rota1964foundations}) yields Eq.~\eqref{eq:mobius-inversion}.
\end{proof}

\begin{remark}
By Prop.~\ref{prop:linearity-mixture} and the definition of the global marginal
slice distribution $M_U(\boldsymbol{\lambda}_U)$ (Eq.~\eqref{eq:marginal-slice-U}),
\[
\mathbb{E}\!\left[M_U(\boldsymbol{\lambda}_U)\right]
=
(T_U\mu)(\boldsymbol{\lambda}_U).
\]
Thus, $T_U\mu$ can be interpreted as the classical marginal mean surface, and
$\mu_U$ in Eq.~\eqref{eq:mobius-inversion} are the associated ANOVA/Hoeffding
effect components targeted by our distributional effect estimands in
expectation.
\end{remark}

\subsubsection{Main Effects Distribution}
\label{app:main-effect-dists}

Classical ANOVA main effects quantify centered deviations of a
factor-level marginal mean from the grand mean. In our distributional setting,
we seek an effect distribution whose expectation recovers the ANOVA main
effect (Eq.~\ref{eq:decomposition-preview}). 
There are various choices for computing the difference distribution. 
One option is to compute the
convolution-based subtraction
$M_c(\lambda_c)-S_0$.
The main problem with this derivation is that it treats the slice distribution of the factor, $M_c(\lambda_c)$, and the grand mixture, $S_0$, as independent. 
This is an incorrect assumption because $S_0$ and $M_c(\lambda_c)$ are clearly
correlated ($M_c(\lambda_c)$ is a component of the mixture $S_0$).

Moreover, convolution of two random variables adds their variance together.
This inflates the variance associated with the main effect of factor $c$.
Specifically, the
convolution-based subtraction
$M_c(\lambda_c)-S_0$ corresponds to a \emph{between-subjects} contrast (difference
of two independent draws), which can yield substantial variability even when
the two distributions are \emph{identical}.

We therefore derive the main-effect distributions by minimizing the variance of the paired difference between the specific level and the baseline. 
This approach isolates the systematic shift attributable to the factor while removing the "noise" common to both distributions.
To achieve this, we employ a comonotone coupling (i.e., 1D monotone optimal transport), which maximizes the covariance between paired draws \citep{villani2008optimal}. 

The pairing has two components. We have a blocked construction that respects
the factorial design by holding the complementary factors
$\boldsymbol{\lambda}_{-c}$ fixed.
Within each block, we have a canonical
comonotone coupling, which minimizes the expected squared difference between paired
draws among all couplings with the same marginals.

\begin{definition}[Paired difference of two univariate discrete distributions]
\label{def:paired-difference}
Let $X$ and $Y$ be univariate discrete random variables with finite support and
CDFs $F_X$ and $F_Y$. Let $U\sim\mathrm{Unif}(0,1)$ and define the (left-continuous)
generalized inverse CDF
\[
F^{-1}(u) := \inf\{z\in\mathbb{R}: F(z)\ge u\}.
\]
The comonotone coupling is given by
$X^\uparrow := F_X^{-1}(U)$ and $Y^\uparrow := F_Y^{-1}(U)$.
We define the \emph{paired difference} random variable as
\begin{equation}
\label{eq:paired-diff-rv}
X \ominus Y \;:=\; X^\uparrow - Y^\uparrow.
\end{equation}
This construction preserves marginals ($X^\uparrow\stackrel{d}{=}X$,
$Y^\uparrow\stackrel{d}{=}Y$), hence
$\mathbb{E}[X\ominus Y]=\mathbb{E}[X]-\mathbb{E}[Y]$.
\end{definition}

Let $\mathrm{supp}(X)=\{x_1<\dots<x_m\}$ with masses $p_i=P(X=x_i)$ and define
$P_i=\sum_{r=1}^i p_r$ with $P_0=0$. Similarly, let
$\mathrm{supp}(Y)=\{y_1<\dots<y_n\}$ with masses $q_j$ and
$Q_j=\sum_{s=1}^j q_s$ with $Q_0=0$.
The comonotone coupling assigns joint mass

{\small
\begin{align}
\label{eq:comonotone-coupling-mass}
w_{ij}
&=
\max\!\Bigl\{
0,\;
\min(P_i,Q_j) - \max(P_{i-1},Q_{j-1})
\Bigr\},\\
&i=1,\dots,m,\; j=1,\dots,n.
\end{align}
}

The paired-difference PMF is the pushforward under $(x,y)\mapsto x-y$,
\begin{equation}
\label{eq:paired-diff-pmf}
P_{X\ominus Y}(z)
=
\sum_{i=1}^m\sum_{j=1}^n
w_{ij}\,\mathbbm{1}\{z=x_i-y_j\}.
\end{equation}
All terms are finite and are computed deterministically from the two input PMFs.

To respect the factorial dependence structure (i.e., $S_0$ is constructed
from the same family $\{S_{\boldsymbol{\lambda}}\}$ as $M_c(\lambda_c)$), we form
a grand-mean baseline within each block $\boldsymbol{\lambda}_{-c}$ by
averaging only over the levels of factor $c$
\begin{equation}
\label{eq:blocked-grand-mean}
P_{S_{0\mid \boldsymbol{\lambda}_{-c}}}
:=
\frac{1}{\Lambda_c}
\sum_{\lambda_c'=1}^{\Lambda_c}
P_{S_{(\lambda_c',\boldsymbol{\lambda}_{-c})}}.
\end{equation}
Mixing these blocked baselines uniformly over $\boldsymbol{\lambda}_{-c}$ recovers
the original grand mixture distribution
\(
\frac{1}{|\mathcal{D}_{-c}|}\sum_{\boldsymbol{\lambda}_{-c}} P_{S_{0\mid \boldsymbol{\lambda}_{-c}}}
= P_{S_0}.
\)

The (centered) main-effect distribution for factor $c$ at level
$\lambda_c$ is the uniform mixture of \emph{paired} within-block contrasts
\begin{equation}
\label{eq:main-effect-dist}
E_c(\lambda_c)
:=
\mathbb{E}_{\boldsymbol{\lambda}_{-c}}
\!\left[
S_{(\lambda_c,\boldsymbol{\lambda}_{-c})}
\;\ominus\;
S_{0\mid \boldsymbol{\lambda}_{-c}}
\right].
\end{equation}
At the PMF level this becomes

{\small
\begin{equation}
\label{eq:main-effect-pmf}
P_{E_c(\lambda_c)}
=
\frac{1}{|\mathcal{D}_{-c}|}
\sum_{\boldsymbol{\lambda}_{-c}\in\mathcal{D}_{-c}}
P_{\,S_{(\lambda_c,\boldsymbol{\lambda}_{-c})}\ominus S_{0\mid \boldsymbol{\lambda}_{-c}}}.
\end{equation}
}

Each paired-difference PMF in Eq.~\eqref{eq:main-effect-pmf} is computed exactly
via the mass-matching formula in Eq.~\eqref{eq:paired-diff-pmf}. 
No estimation or approximation is used at any stage.

\begin{corollary}[Expectation-level centering and sum-to-zero]
\label{cor:main-effect-anova}
For each factor $c\in\mathcal{C}$ and level $\lambda_c$,
\begin{equation}
\label{eq:main-effect-expectation}
\mathbb{E}\!\left[E_c(\lambda_c)\right]
=
\mathbb{E}\!\left[M_c(\lambda_c)\right]
-
\mathbb{E}\!\left[S_0\right].
\end{equation}
Moreover, the paired main effects satisfy the standard ANOVA sum-to-zero
constraint in expectation
\begin{equation}
\label{eq:main-effect-sum-to-zero}
\sum_{\lambda_c=1}^{\Lambda_c}\mathbb{E}\!\left[E_c(\lambda_c)\right] = 0.
\end{equation}
\end{corollary}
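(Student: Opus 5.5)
The plan is to reduce everything to expectations of mixtures and of paired differences, and then invoke Proposition~\ref{prop:linearity-mixture} and Corollary~\ref{cor:grand-as-average-of-marginals}.

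\textbf{Step 1: expectation of a single paired contrast.} Fix a block $\boldsymbol{\lambda}_{-c}$. By Definition~\ref{def:paired-difference}, the comonotone coupling preserves both marginals. Hence
\[
\mathbb{E}\bigl[S_{(\lambda_c,\boldsymbol{\lambda}_{-c})}\ominus S_{0\mid\boldsymbol{\lambda}_{-c}}\bigr]=\mathbb{E}[S_{(\lambda_c,\boldsymbol{\lambda}_{-c})}]-\mathbb{E}[S_{0\mid\boldsymbol{\lambda}_{-c}}].
\]
This needs only finite support, which holds since every $S_{\boldsymbol{\lambda}}$ lives on $\{Ky_{\min},\dots,Ky_{\max}\}$. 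Next, $S_{0\mid\boldsymbol{\lambda}_{-c}}$ is the uniform mixture (Eq.~\eqref{eq:blocked-grand-mean}), so Proposition~\ref{prop:linearity-mixture} with $\pi$ uniform on the levels of $c$ gives
\[
\mathbb{E}[S_{0\mid\boldsymbol{\lambda}_{-c}}]=\tfrac{1}{\Lambda_c}\sum_{\lambda_c'}\mu(\lambda_c',\boldsymbol{\lambda}_{-c}).
\]

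\textbf{Step 2: average over blocks.} $P_{E_c(\lambda_c)}$ is the uniform mixture of the paired-difference PMFs (Eq.~\eqref{eq:main-effect-pmf}). Applying Proposition~\ref{prop:linearity-mixture} again, now with $\pi$ uniform on $\mathcal{D}_{-c}$, shows that $\mathbb{E}[E_c(\lambda_c)]$ is the block average of the Step~1 differences. The first term averages to $\mathbb{E}[M_c(\lambda_c)]$ by Corollary~\ref{cor:anova-consistency}. For the second term, swapping the two finite averages and using full crossing (Assumption~\ref{ass:fully-crossed}, $|\mathcal{D}|=\Lambda_c|\mathcal{D}_{-c}|$) gives $\frac{1}{|\mathcal{D}|}\sum_{\boldsymbol{\lambda}}\mu(\boldsymbol{\lambda})$. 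By Eq.~\eqref{eq:grand-mixture} and Proposition~\ref{prop:linearity-mixture}, this equals $\mathbb{E}[S_0]$, which establishes Eq.~\eqref{eq:main-effect-expectation}.

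\textbf{Step 3: sum-to-zero.} Summing Eq.~\eqref{eq:main-effect-expectation} over $\lambda_c$ gives $\sum_{\lambda_c}\mathbb{E}[M_c(\lambda_c)]-\Lambda_c\mathbb{E}[S_0]$. Corollary~\ref{cor:grand-as-average-of-marginals} together with linearity of expectation over the mixture gives $\mathbb{E}[S_0]=\frac{1}{\Lambda_c}\sum_{\lambda_c}\mathbb{E}[M_c(\lambda_c)]$, so the sum is zero. I will also note that the result coincides with $\mu_{\{c\}}(\lambda_c)=(T_{\{c\}}\mu)(\lambda_c)-\mu_\emptyset$ from Corollary~\ref{cor:mobius-form}. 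This identification is the link to Theorem~\ref{thm:anova-consistency-main} for main effects.

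\textbf{Main obstacle.} The only delicate point is Step~1: justifying that expectation passes through the coupling. The paired difference is not a convolution, so one cannot argue through independence. Instead I will write $\mathbb{E}[X\ominus Y]=\sum_{i,j}w_{ij}(x_i-y_j)$ and check directly from Eq.~\eqref{eq:comonotone-coupling-mass} that $\sum_j w_{ij}=p_i$ and $\sum_i w_{ij}=q_j$. This holds because the intervals $(P_{i-1},P_i]$ and $(Q_{j-1},Q_j]$ each partition $(0,1]$, and $w_{ij}$ is the length of their intersection. With that verified, the rest is bookkeeping with finite sums.
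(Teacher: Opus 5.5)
Your proposal is correct and follows essentially the same route as the paper. Marginal preservation of the comonotone coupling gives the paired-mean identity within each block, and averaging over $\mathcal{D}_{-c}$ with $|\mathcal{D}|=\Lambda_c|\mathcal{D}_{-c}|$ identifies the two terms as $\mathbb{E}[M_c(\lambda_c)]$ and $\mathbb{E}[S_0]$. The differences are minor: for sum-to-zero you sum the first identity and invoke Corollary~\ref{cor:grand-as-average-of-marginals}, whereas the paper shows the slightly stronger fact that $\sum_{\lambda_c}\mathbb{E}[S_{(\lambda_c,\boldsymbol{\lambda}_{-c})}\ominus S_{0\mid\boldsymbol{\lambda}_{-c}}]=0$ already within every block $\boldsymbol{\lambda}_{-c}$; and your explicit check of the marginals of $w_{ij}$ is harmless but unnecessary, since $F^{-1}(U)\sim F$ gives it directly.
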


\begin{proof}
    \label{prf:main-effect-anova}
By definition (Eq.~\eqref{eq:main-effect-pmf}), the main-effect random variable
$E_c(\lambda_c)$ is the uniform mixture over blocks
$\boldsymbol{\lambda}_{-c}\in\mathcal{D}_{-c}$ of the paired contrast
$S_{(\lambda_c,\boldsymbol{\lambda}_{-c})}\ominus S_{0\mid \boldsymbol{\lambda}_{-c}}$.
Equivalently, we can represent the mixture hierarchically.
Draw $\boldsymbol{\lambda}_{-c}\sim \mathrm{Unif}(\mathcal{D}_{-c})$, then draw
\[
\left(
S_{(\lambda_c,\boldsymbol{\lambda}_{-c})}^\uparrow,\;
S_{0\mid \boldsymbol{\lambda}_{-c}}^\uparrow
\right)
\]
from the comonotone coupling.

Set
$E_c(\lambda_c) = S_{(\lambda_c,\boldsymbol{\lambda}_{-c})}^\uparrow
- S_{0\mid \boldsymbol{\lambda}_{-c}}^\uparrow$.
Because $\mathcal{D}_{-c}$ is finite, we may apply iterated expectations.

{\small
\begin{equation}
\label{eq:proof-iterated-exp}
\mathbb{E}\!\left[E_c(\lambda_c)\right]
=
\frac{1}{|\mathcal{D}_{-c}|}
\sum_{\boldsymbol{\lambda}_{-c}\in\mathcal{D}_{-c}}
\mathbb{E}\!\left[
S_{(\lambda_c,\boldsymbol{\lambda}_{-c})}\ominus S_{0\mid \boldsymbol{\lambda}_{-c}}
\right].
\end{equation}
}

Fix a block $\boldsymbol{\lambda}_{-c}$. By Def.~\ref{def:paired-difference},
\[
S_{(\lambda_c,\boldsymbol{\lambda}_{-c})}\ominus S_{0\mid \boldsymbol{\lambda}_{-c}}
=
S_{(\lambda_c,\boldsymbol{\lambda}_{-c})}^\uparrow
-
S_{0\mid \boldsymbol{\lambda}_{-c}}^\uparrow,
\]
where
$S_{(\lambda_c,\boldsymbol{\lambda}_{-c})}^\uparrow \stackrel{d}{=}
S_{(\lambda_c,\boldsymbol{\lambda}_{-c})}$ and
$S_{0\mid \boldsymbol{\lambda}_{-c}}^\uparrow \stackrel{d}{=}
S_{0\mid \boldsymbol{\lambda}_{-c}}$.
Therefore the expectations match their marginals

{\small
\begin{align}  
  \mathbb{E}\!\left[S_{(\lambda_c,\boldsymbol{\lambda}_{-c})}^\uparrow\right]
  &=
  \mathbb{E}\!\left[S_{(\lambda_c,\boldsymbol{\lambda}_{-c})}\right], \\
  \mathbb{E}\!\left[S_{0\mid \boldsymbol{\lambda}_{-c}}^\uparrow\right]
  &=
  \mathbb{E}\!\left[S_{0\mid \boldsymbol{\lambda}_{-c}}\right].
\end{align}
}

Using linearity of expectation, which does \emph{not} require independence,

{\small
\begin{equation}
\label{eq:proof-paired-mean}
\mathbb{E}\!\left[
S_{(\lambda_c,\boldsymbol{\lambda}_{-c})}\ominus S_{0\mid \boldsymbol{\lambda}_{-c}}
\right]
=
\mathbb{E}\!\left[S_{(\lambda_c,\boldsymbol{\lambda}_{-c})}\right]
-
\mathbb{E}\!\left[S_{0\mid \boldsymbol{\lambda}_{-c}}\right].
\end{equation}
}

Eq.~\eqref{eq:proof-paired-mean} holds for \emph{any} coupling with
the given marginals. The comonotone coupling is chosen to minimize dispersion,
but preserves the mean.

Substitute Eq.~\eqref{eq:proof-paired-mean} into Eq.~\eqref{eq:proof-iterated-exp}:

{\small
\begin{align}
\mathbb{E}\!\left[E_c(\lambda_c)\right]
&=
\frac{1}{|\mathcal{D}_{-c}|}
\sum_{\boldsymbol{\lambda}_{-c}\in\mathcal{D}_{-c}}
\left(
\mathbb{E}\!\left[S_{(\lambda_c,\boldsymbol{\lambda}_{-c})}\right]
-
\mathbb{E}\!\left[S_{0\mid \boldsymbol{\lambda}_{-c}}\right]
\right) \notag\\[4pt]
&=
\underbrace{
\frac{1}{|\mathcal{D}_{-c}|}
\sum_{\boldsymbol{\lambda}_{-c}\in\mathcal{D}_{-c}}
\mathbb{E}\!\left[S_{(\lambda_c,\boldsymbol{\lambda}_{-c})}\right]
}_{(\star)} \\
&-
\underbrace{
\frac{1}{|\mathcal{D}_{-c}|}
\sum_{\boldsymbol{\lambda}_{-c}\in\mathcal{D}_{-c}}
\mathbb{E}\!\left[S_{0\mid \boldsymbol{\lambda}_{-c}}\right]
}_{(\dagger)}.
\label{eq:proof-split}
\end{align}
}

For $(\star)$, by the definition of the marginal slice distribution
$P_{M_c(\lambda_c)}$ (Eq.~\eqref{eq:marginal-means}) and Prop.~\ref{prop:linearity-mixture},
\[
\mathbb{E}[M_c(\lambda_c)]
=
\frac{1}{|\mathcal{D}_{-c}|}
\sum_{\boldsymbol{\lambda}_{-c}\in\mathcal{D}_{-c}}
\mathbb{E}\!\left[S_{(\lambda_c,\boldsymbol{\lambda}_{-c})}\right].
\]

For $(\dagger)$, expand the blocked grand mean
(Eq.~\eqref{eq:blocked-grand-mean}) and apply Theorem~\ref{prop:linearity-mixture},

{\small
\begin{align}
\frac{1}{|\mathcal{D}_{-c}|}\sum_{\boldsymbol{\lambda}_{-c}}
\mathbb{E}\!\left[S_{0\mid \boldsymbol{\lambda}_{-c}}\right]
&=
\frac{1}{|\mathcal{D}_{-c}|}\sum_{\boldsymbol{\lambda}_{-c}}
\mathbb{E}\!\left[
\frac{1}{\Lambda_c}\sum_{\lambda_c'=1}^{\Lambda_c}
S_{(\lambda_c',\boldsymbol{\lambda}_{-c})}
\right] \\
&=
\frac{1}{|\mathcal{D}_{-c}|}\sum_{\boldsymbol{\lambda}_{-c}}
\frac{1}{\Lambda_c}\sum_{\lambda_c'=1}^{\Lambda_c}
\mathbb{E}\!\left[S_{(\lambda_c',\boldsymbol{\lambda}_{-c})}\right] \\
&=
\frac{1}{|\mathcal{D}|}\sum_{\boldsymbol{\lambda}\in\mathcal{D}}
\mathbb{E}\!\left[S_{\boldsymbol{\lambda}}\right]
=
\mathbb{E}[S_0].
\end{align}
}

We used $|\mathcal{D}|=\Lambda_c\,|\mathcal{D}_{-c}|$ and the definition of
the grand mixture $S_0$ (Eq.~\eqref{eq:grand-mixture}).

Substituting these two identifications into Eq.~\eqref{eq:proof-split} yields
\[
\mathbb{E}\!\left[E_c(\lambda_c)\right]
=
\mathbb{E}\!\left[M_c(\lambda_c)\right]
-
\mathbb{E}\!\left[S_0\right],
\]
which proves Eq.~\eqref{eq:main-effect-expectation}.

Sum Eq.~\eqref{eq:proof-iterated-exp} over $\lambda_c\in\{1,\dots,\Lambda_c\}$

{\small
\begin{align}
\sum_{\lambda_c=1}^{\Lambda_c} &\mathbb{E}\!\left[E_c(\lambda_c)\right]
=
\frac{1}{|\mathcal{D}_{-c}|}
\sum_{\boldsymbol{\lambda}_{-c}\in\mathcal{D}_{-c}}
\sum_{\lambda_c=1}^{\Lambda_c} \\
&\mathbb{E}\!\left[
S_{(\lambda_c,\boldsymbol{\lambda}_{-c})}\ominus S_{0\mid \boldsymbol{\lambda}_{-c}}
\right].
\end{align}
}

Apply Eq.~\eqref{eq:proof-paired-mean} inside the inner sum

{\small
\begin{align}
&\sum_{\lambda_c=1}^{\Lambda_c}
\mathbb{E}\!\left[
S_{(\lambda_c,\boldsymbol{\lambda}_{-c})}\ominus S_{0\mid \boldsymbol{\lambda}_{-c}}
\right]
= \\
&\sum_{\lambda_c=1}^{\Lambda_c}
\left(
\mathbb{E}[S_{(\lambda_c,\boldsymbol{\lambda}_{-c})}] 
-
\mathbb{E}[S_{0\mid \boldsymbol{\lambda}_{-c}}]
\right)
= \\
&\sum_{\lambda_c=1}^{\Lambda_c}\mathbb{E}[S_{(\lambda_c,\boldsymbol{\lambda}_{-c})}]
-
\Lambda_c \,\mathbb{E}[S_{0\mid \boldsymbol{\lambda}_{-c}}].
\end{align}
}

But by definition of the blocked grand mean and linearity of expectation,
\[
\mathbb{E}[S_{0\mid \boldsymbol{\lambda}_{-c}}]
=
\frac{1}{\Lambda_c}\sum_{\lambda_c'=1}^{\Lambda_c}
\mathbb{E}[S_{(\lambda_c',\boldsymbol{\lambda}_{-c})}],
\]
so
$\Lambda_c \mathbb{E}[S_{0\mid \boldsymbol{\lambda}_{-c}}]
=
\sum_{\lambda_c'=1}^{\Lambda_c}\mathbb{E}[S_{(\lambda_c',\boldsymbol{\lambda}_{-c})}]$.
Hence the inner expression is identically zero for every
$\boldsymbol{\lambda}_{-c}$, and therefore the outer average is also zero
\[
\sum_{\lambda_c=1}^{\Lambda_c}\mathbb{E}\!\left[E_c(\lambda_c)\right] = 0,
\]
which proves Eq.~\eqref{eq:main-effect-sum-to-zero}.
\end{proof}

\subsubsection{Interaction Effects Distribution}
\label{app:interactions}

Whereas main effects quantify additive shifts attributable to individual factors,
interaction effects quantify departures from additivity when multiple factors
are fixed jointly. In classical fixed-effects ANOVA/Hoeffding decompositions,
the $U$-way interaction component is obtained by M\"obius inversion of marginal
means on the Boolean lattice of factor subsets.

In our framework, since we are dealing directly with PMFs of the population,
we define interaction effects using a
paired, design-blocked contrast.
We hold the complementary factors $\boldsymbol{\lambda}_{-U}$ fixed (blocking).
Within each block, we couple all terms in the alternating sum via a
shared latent quantile variable (comonotone coupling). 
This yields a paired distribution for the interaction contrast.

For any nonempty subset of factors $U \subseteq \mathcal{C}$ and level vector
$\boldsymbol{\lambda}_U$, define the $U$-way marginal slice distribution as the
uniform mixture over all remaining factors
\begin{equation}
\label{eq:marginal-slice-U}
P_{M_U(\boldsymbol{\lambda}_U)}
=
\frac{1}{|\mathcal{D}_{-U}|}
\sum_{\boldsymbol{\lambda}_{-U}\in\mathcal{D}_{-U}}
P_{S_{(\boldsymbol{\lambda}_U,\boldsymbol{\lambda}_{-U})}}.
\end{equation}
We further set $M_\emptyset := S_0$.

Fix a nonempty subset of factors $U\subseteq\mathcal{C}$ and a realization of
the complementary factors $\boldsymbol{\lambda}_{-U}\in\mathcal{D}_{-U}$.
For any $V\subseteq U$, define the within-block design space
\[
\mathcal{D}_{U\setminus V}
:=
\prod_{c\in U\setminus V}\{1,\dots,\Lambda_c\}.
\]
We derive the blocked marginal slice distribution as the uniform mixture
over the remaining factors in $U\setminus V$, holding $\boldsymbol{\lambda}_{-U}$
fixed

\begin{align}
\label{eq:blocked-slice}
&P_{M_{V\mid \boldsymbol{\lambda}_{-U}}(\boldsymbol{\lambda}_V)} \\
&:=\frac{1}{|\mathcal{D}_{U\setminus V}|}
\sum_{\boldsymbol{\lambda}_{U\setminus V}\in\mathcal{D}_{U\setminus V}}
P_{S_{(\boldsymbol{\lambda}_V,\boldsymbol{\lambda}_{U\setminus V},\boldsymbol{\lambda}_{-U})}}.
\end{align}

This unifies several cases,
\begin{align}
&M_{U\mid \boldsymbol{\lambda}_{-U}}(\boldsymbol{\lambda}_U)
\equiv S_{(\boldsymbol{\lambda}_U,\boldsymbol{\lambda}_{-U})}, \\
&M_{\emptyset\mid \boldsymbol{\lambda}_{-U}}
=
\frac{1}{|\mathcal{D}_U|}
\sum_{\boldsymbol{\lambda}_U\in\mathcal{D}_U}
S_{(\boldsymbol{\lambda}_U,\boldsymbol{\lambda}_{-U})},
\end{align}

where $M_{\emptyset\mid \boldsymbol{\lambda}_{-U}}$ is the blocked grand mean
(over factors $U$) within the fixed complement $\boldsymbol{\lambda}_{-U}$.

Averaging blocked slices over $\boldsymbol{\lambda}_{-U}$ recovers the global
marginal slices from Eq.~\eqref{eq:marginal-slice-U}.
\begin{equation}
\label{eq:blocked-to-global-slice}
P_{M_V(\boldsymbol{\lambda}_V)}
=
\frac{1}{|\mathcal{D}_{-U}|}
\sum_{\boldsymbol{\lambda}_{-U}\in\mathcal{D}_{-U}}
P_{M_{V\mid \boldsymbol{\lambda}_{-U}}(\boldsymbol{\lambda}_V)}.
\end{equation}
In particular, taking $V=\emptyset$ yields
$\frac{1}{|\mathcal{D}_{-U}|}\sum_{\boldsymbol{\lambda}_{-U}}P_{M_{\emptyset\mid \boldsymbol{\lambda}_{-U}}}
= P_{S_0}$.

To define a distribution for an alternating sum without imposing independence,
we use a shared-quantile (comonotone) coupling.

\begin{definition}[Comonotone-coupled linear contrast]
\label{def:comonotone-contrast}
Let $\{X_r\}_{r=1}^R$ be univariate discrete random variables with finite
support and CDFs $\{F_r\}_{r=1}^R$. Let $U^\ast\sim\mathrm{Unif}(0,1)$ and define
$X_r^\uparrow := F_r^{-1}(U^\ast)$ using the (left-continuous) generalized
inverse from Def.~\ref{def:paired-difference}. For coefficients
$\boldsymbol{\alpha}=(\alpha_1,\dots,\alpha_R)\in\mathbb{R}^R$, define the
comonotone-coupled linear contrast
\begin{equation}
\label{eq:comonotone-linear-contrast}
\mathcal{L}^\uparrow(\boldsymbol{\alpha};X_1,\dots,X_R)
:=
\sum_{r=1}^R \alpha_r X_r^\uparrow.
\end{equation}
For $R=2$ and $\boldsymbol{\alpha}=(1,-1)$, this reduces to the paired
difference $X_1\ominus X_2$ (Def.~\ref{def:paired-difference}).
\end{definition}

Because each $X_r^\uparrow=F_r^{-1}(U^\ast)$ is a step function of $U^\ast$,
the contrast in Eq.~\eqref{eq:comonotone-linear-contrast} is piecewise constant
on a finite partition of $[0,1]$.
Let
\[
\mathcal{B}
:=
\{0,1\}\;\cup\;
\bigcup_{r=1}^R
\left\{
F_r(x)\,:\, x\in\mathrm{supp}(X_r)
\right\},
\]
and let $0=u_0<u_1<\dots<u_L=1$ be the sorted distinct elements of $\mathcal{B}$.
For $\ell=1,\dots,L$, define
\[
z_\ell
:=
\sum_{r=1}^R \alpha_r F_r^{-1}(u_\ell).
\]
Then the PMF of $Z:=\mathcal{L}^\uparrow(\boldsymbol{\alpha};X_1,\dots,X_R)$ is
\begin{equation}
\label{eq:comonotone-contrast-pmf}
P_Z(z)
=
\sum_{\ell=1}^L
(u_\ell-u_{\ell-1})\,
\mathbbm{1}\{z=z_\ell\},
\end{equation}
aggregating masses for identical $z_\ell$ values. This computation is exact and
deterministic.

Fix $U\subseteq\mathcal{C}$ with $|U|\ge 2$. For a given level vector
$\boldsymbol{\lambda}_U$ and block $\boldsymbol{\lambda}_{-U}$, define the
within-block interaction-effect random variable as the paired M\"obius contrast
over blocked slices
\begin{equation}
\label{eq:interaction-block}
H_{U\mid \boldsymbol{\lambda}_{-U}}(\boldsymbol{\lambda}_U)
:=
\sum_{V\subseteq U}
(-1)^{|U|-|V|}
\left(M_{V\mid \boldsymbol{\lambda}_{-U}}(\boldsymbol{\lambda}_V)\right)^\uparrow,
\end{equation}
where all terms are coupled comonotonically via the same $U^\ast$ in
Def.~\ref{def:comonotone-contrast}. The PMF of
$H_{U\mid \boldsymbol{\lambda}_{-U}}(\boldsymbol{\lambda}_U)$ is computed exactly
using Eq.~\eqref{eq:comonotone-contrast-pmf} with $R=2^{|U|}$ and coefficients
$\alpha_V=(-1)^{|U|-|V|}$.

Finally, we define the global $U$-way interaction effect distribution by mixing
uniformly over complementary factors:
\begin{equation}
\label{eq:interaction-global}
P_{E_U(\boldsymbol{\lambda}_U)}
=
\frac{1}{|\mathcal{D}_{-U}|}
\sum_{\boldsymbol{\lambda}_{-U}\in\mathcal{D}_{-U}}
P_{H_{U\mid \boldsymbol{\lambda}_{-U}}(\boldsymbol{\lambda}_U)}.
\end{equation}

\paragraph{Two-way interactions.}
For $U=\{a,b\}$, within a fixed block $\boldsymbol{\lambda}_{-ab}$ the paired
two-way interaction is

\begin{align}
H_{\{a,b\}\mid \boldsymbol{\lambda}_{-ab}}(\lambda_a,\lambda_b)
&=
S_{(\lambda_a,\lambda_b,\boldsymbol{\lambda}_{-ab})}^\uparrow \\
&-
M_{a\mid \boldsymbol{\lambda}_{-ab}}^\uparrow(\lambda_a) \\
&-
M_{b\mid \boldsymbol{\lambda}_{-ab}}^\uparrow(\lambda_b) \\
&+
M_{\emptyset\mid \boldsymbol{\lambda}_{-ab}}^\uparrow, 
\end{align}
and $P_{E_{\{a,b\}}(\lambda_a,\lambda_b)}$ is the uniform mixture of these
within-block PMFs over $\boldsymbol{\lambda}_{-ab}$.

\begin{corollary}[Expectation-level ANOVA consistency and centering]
\label{cor:paired-interaction-anova}
Let $\mu(\boldsymbol{\lambda})=\mathbb{E}[S_{\boldsymbol{\lambda}}]$ and let
$\{\mu_U\}_{U\subseteq\mathcal{C}}$ be the unique Hoeffding/ANOVA components from
Theorem~\ref{thm:anova-expectation}. For any $U\subseteq\mathcal{C}$ with
$|U|\ge 2$ and any $\boldsymbol{\lambda}_U$,
\begin{equation}
\label{eq:paired-interaction-anova}
\mathbb{E}\!\left[E_U(\boldsymbol{\lambda}_U)\right]
=
\mu_U(\boldsymbol{\lambda}_U).
\end{equation}
In particular, the interaction effects are centered in expectation. For any
$c\in U$,
\begin{equation}
\label{eq:paired-interaction-sum-to-zero}
\sum_{\lambda_c=1}^{\Lambda_c}
\mathbb{E}\!\left[E_U(\boldsymbol{\lambda}_U)\right]
=
0.
\end{equation}
\end{corollary}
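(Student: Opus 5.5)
The plan is to reduce the statement to linearity of expectation in three steps. First I compute the expectation of each within-block Möbius contrast. Then I average over blocks to get global marginal means. Finally I match the resulting inclusion--exclusion formula to Corollary~\ref{cor:mobius-form}.

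\textbf{Step 1: within-block expectation.} Fix $U$ with $|U|\ge 2$, a level vector $\boldsymbol{\lambda}_U$, and a block $\boldsymbol{\lambda}_{-U}$. By Definition~\ref{def:comonotone-contrast}, $H_{U\mid\boldsymbol{\lambda}_{-U}}(\boldsymbol{\lambda}_U)$ is a finite linear combination of the random variables $(M_{V\mid\boldsymbol{\lambda}_{-U}}(\boldsymbol{\lambda}_V))^\uparrow$. Each of these has the same law as the corresponding blocked slice, because $F_r^{-1}(U^\ast)\stackrel{d}{=}X_r$. Linearity of expectation needs no independence, so the shared-quantile coupling is irrelevant at this level:
\[
\mathbb{E}[H_{U\mid\boldsymbol{\lambda}_{-U}}(\boldsymbol{\lambda}_U)]=\sum_{V\subseteq U}(-1)^{|U|-|V|}\,\mathbb{E}[M_{V\mid\boldsymbol{\lambda}_{-U}}(\boldsymbol{\lambda}_V)].
\]
By Proposition~\ref{prop:linearity-mixture} applied to the uniform mixture in Eq.~\eqref{eq:blocked-slice}, each blocked slice has mean $\mathbb{E}[M_{V\mid\boldsymbol{\lambda}_{-U}}(\boldsymbol{\lambda}_V)]=|\mathcal{D}_{U\setminus V}|^{-1}\sum_{\boldsymbol{\lambda}_{U\setminus V}}\mu(\boldsymbol{\lambda}_V,\boldsymbol{\lambda}_{U\setminus V},\boldsymbol{\lambda}_{-U})$.

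\textbf{Step 2: average over blocks.} The global effect $E_U(\boldsymbol{\lambda}_U)$ is a uniform mixture over blocks (Eq.~\eqref{eq:interaction-global}), so Proposition~\ref{prop:linearity-mixture} again gives that $\mathbb{E}[E_U(\boldsymbol{\lambda}_U)]$ is the block average of the Step~1 quantity. I exchange the finite sums over $V$ and over $\boldsymbol{\lambda}_{-U}$. Under Assumption~\ref{ass:fully-crossed} the uniform product measure factorizes as $\mathcal{D}_{-V}=\mathcal{D}_{U\setminus V}\times\mathcal{D}_{-U}$, with $|\mathcal{D}_{-V}|=|\mathcal{D}_{U\setminus V}|\,|\mathcal{D}_{-U}|$. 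The iterated average over $\boldsymbol{\lambda}_{U\setminus V}$ and then $\boldsymbol{\lambda}_{-U}$ therefore equals the single average over $\boldsymbol{\lambda}_{-V}$, which is $(T_V\mu)(\boldsymbol{\lambda}_V)$. This is the same bookkeeping as in the proof of Corollary~\ref{cor:grand-as-average-of-marginals}, and it is where full crossing is used. It also covers $V=\emptyset$ (giving the grand mean) and $V=U$ (giving the cell means averaged over $\boldsymbol{\lambda}_{-U}$). Hence
\[
\mathbb{E}[E_U(\boldsymbol{\lambda}_U)]=\sum_{V\subseteq U}(-1)^{|U|-|V|}(T_V\mu)(\boldsymbol{\lambda}_V).
\]

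\textbf{Step 3: identify the Hoeffding component and centering.} The last display is exactly $\mu_U(\boldsymbol{\lambda}_U)$ by Corollary~\ref{cor:mobius-form}, which proves Eq.~\eqref{eq:paired-interaction-anova}. The sum-to-zero property Eq.~\eqref{eq:paired-interaction-sum-to-zero} then follows immediately from the identifiability constraints Eq.~\eqref{eq:anova-identifiability} of Theorem~\ref{thm:anova-expectation}.

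Combining this with Corollary~\ref{cor:main-effect-anova} for main effects proves Theorem~\ref{thm:anova-consistency-main}. For $|U|=1$, Eq.~\eqref{eq:main-effect-expectation} reads $(T_{\{c\}}\mu)-T_\emptyset\mu=\mu_{\{c\}}$. For $U=\emptyset$, $\mathbb{E}[S_0]=\mu_\emptyset$.

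The only step that needs care is the measure factorization in Step~2. None of the steps poses a real obstacle, because the comonotone coupling never affects first moments.
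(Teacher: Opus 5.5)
Your proposal is correct and follows the paper's proof essentially step for step. You use marginal preservation under the shared-quantile coupling plus linearity of expectation within each block. You collapse the iterated uniform averages over $\boldsymbol{\lambda}_{U\setminus V}$ and $\boldsymbol{\lambda}_{-U}$ into $(T_V\mu)(\boldsymbol{\lambda}_V)$ using the product structure of the design, identify the alternating sum with $\mu_U$ via the M\"obius form, and read off centering from the identifiability constraints.
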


\begin{proof}
Fix $\boldsymbol{\lambda}_{-U}$. By construction, the comonotone coupling in
Eq.~\eqref{eq:interaction-block} preserves the marginal distribution of each
term $M_{V\mid \boldsymbol{\lambda}_{-U}}(\boldsymbol{\lambda}_V)$, hence it does
not alter expectations. Therefore, by linearity of expectation,

{\small
\begin{equation}
\label{eq:proof-interaction-block-mean}
\mathbb{E}\!\left[H_{U\mid \boldsymbol{\lambda}_{-U}}(\boldsymbol{\lambda}_U)\right]
=
\sum_{V\subseteq U}(-1)^{|U|-|V|}
\mathbb{E}\!\left[M_{V\mid \boldsymbol{\lambda}_{-U}}(\boldsymbol{\lambda}_V)\right].
\end{equation}
}

Next, expand the blocked slice definition (Eq.~\eqref{eq:blocked-slice}) and
apply linearity of expectation to obtain

\begin{align}
&\mathbb{E}\!\left[M_{V\mid \boldsymbol{\lambda}_{-U}}(\boldsymbol{\lambda}_V)\right] \\
&=
\frac{1}{|\mathcal{D}_{U\setminus V}|}
\sum_{\boldsymbol{\lambda}_{U\setminus V}\in\mathcal{D}_{U\setminus V}}
\mu(\boldsymbol{\lambda}_V,\boldsymbol{\lambda}_{U\setminus V},\boldsymbol{\lambda}_{-U}).  
\end{align}

Now average Eq.~\eqref{eq:proof-interaction-block-mean} uniformly over
$\boldsymbol{\lambda}_{-U}$ (Eq.~\eqref{eq:interaction-global}). Because the
design is a finite product space, averaging over $\boldsymbol{\lambda}_{-U}$
and over $\boldsymbol{\lambda}_{U\setminus V}$ is equivalent to averaging over
all complementary coordinates $\boldsymbol{\lambda}_{-V}$. Thus

\begin{align}
&\mathbb{E}\!\left[E_U(\boldsymbol{\lambda}_U)\right] \\
&=
\frac{1}{|\mathcal{D}_{-U}|}
\sum_{\boldsymbol{\lambda}_{-U}}
\mathbb{E}\!\left[H_{U\mid \boldsymbol{\lambda}_{-U}}(\boldsymbol{\lambda}_U)\right]\\
&=
\sum_{V\subseteq U}(-1)^{|U|-|V|}
\underbrace{
\frac{1}{|\mathcal{D}_{-V}|}
\sum_{\boldsymbol{\lambda}_{-V}}
\mu(\boldsymbol{\lambda}_V,\boldsymbol{\lambda}_{-V})
}_{=(T_V\mu)(\boldsymbol{\lambda}_V)}.
\end{align}

The final expression is exactly the M\"obius inversion / Hoeffding component
$\mu_U(\boldsymbol{\lambda}_U)$ \citep{rota1964foundations}, equivalent to the recursion in
Theorem~\ref{thm:anova-expectation}.
This proves Eq.~\eqref{eq:paired-interaction-anova}. The centering property
Eq.~\eqref{eq:paired-interaction-sum-to-zero} then follows from the
identifiability constraints in Theorem~\ref{thm:anova-expectation}.
\end{proof}

\subsubsection{Hypothesis Testing}
\label{sec:hypothesis-testing}

We distinguish our hypothesis evaluation from null-hypothesis significance
testing (NHST). Rather than accept--reject decisions based on $p$-values, which
presuppose sampling variability from finite data, we evaluate directional
scientific claims using exact predictive effect distributions computed
analytically from the model's next-token probabilities.

In our framework, the primitive objects of inference are centered effect random
variables, namely paired main effects $E_c(\lambda_c)$
(Eq.~\eqref{eq:main-effect-dist}) and paired interaction effects
$E_U(\boldsymbol{\lambda}_U)$ (Eq.~\eqref{eq:interaction-global}). Their PMFs are
computed deterministically via finite mixtures and paired comonotone contrast
operations.
Accordingly, there is no sampling error. Uncertainty in these effect
distributions reflects intrinsic stochasticity of the model's predictive
behavior under the stimulus (i.e., the aleatoric uncertainty).

We summarize each centered effect distribution $E$ using
mean shift ($\mathbb{E}[E]$),
predictive dispersion ($\operatorname{SD}(E)$),
discrete Probability of Direction ($\mathrm{dPD}(E)$),
and a standardized signal-to-noise ratio ($\mathrm{SNR}(E)$) based on the SD
of the paired effect distribution.

\subsubsection{Discrete Probability of Direction}
\label{app:dpd}

To evaluate directional hypotheses, we adopt a discrete formulation of the
Probability of Direction (PD). In Bayesian inference, PD is typically defined
for continuous posteriors, where $\mathbb{P}(E=0)=0$ \citep{Makowski2019}. Here,
predictive effect distributions are discrete and may place non-zero mass on
exact equality. We therefore treat $\{E=0\}$ as a distinct behavioral outcome.

Let $E$ denote a centered effect random variable, either $E_c(\lambda_c)$ or
$E_U(\boldsymbol{\lambda}_U)$. The discrete Probability of Direction (dPD) is
\begin{equation}
\label{eq:dpd}
\mathrm{dPD}(E)
=
\max\!\bigl(
\mathbb{P}(E > 0),
\mathbb{P}(E < 0)
\bigr),
\end{equation}
dPD quantifies the extent to which the effect exhibits consistent
directionality, irrespective of sign.

\subsubsection{Effect Size (Mean Shift)}

We define effect size as the expectation of the centered effect distribution
\begin{equation}
\label{eq:effect-size}
\mathbb{E}[E]
=
\sum_{z} z \, P_{E}(z),
\end{equation}
where the sum ranges over the finite support of $E$.
$\mathbb{E}[E]$ is the magnitude of the effect $E$.

\subsubsection{Predictive Dispersion (SD)}

Intrinsic variability of the centered effect is captured by its predictive
standard deviation
\begin{equation}
\label{eq:sd-effect}
\operatorname{SD}(E)
=
\sqrt{
\sum_{z}
\bigl(z - \mathbb{E}[E]\bigr)^2
\, P_{E}(z)
}.
\end{equation}
Because our effect distributions are constructed via paired contrasts,
$\operatorname{SD}(E)$ reflects residual predictive heterogeneity in the effect
itself, rather than dispersion induced by taking differences of independent
draws.

\subsubsection{Signal-to-Noise Ratio}
\label{app:snr}

We summarize standardized effect magnitude using a signal-to-noise ratio
computed directly on the paired effect distribution
\begin{equation}
\label{eq:snr}
\mathrm{SNR}(E)
=
\frac{|\mathbb{E}[E]|}{\operatorname{SD}(E)}.
\end{equation}
When $\operatorname{SD}(E)=0$, the effect is deterministic under the paired
contrast. In this case $\mathrm{SNR}(E)$ is treated as
$+\infty$ if $\mathbb{E}[E]\neq 0$ and $0$ if $\mathbb{E}[E]=0$.

The signal-to-noise ratio \(\mathrm{SNR}\) is
closely related to standardized effect sizes such as Cohen's \(d\)
\citep{cohen2013statistical}, which quantifies mean deviations relative to
within-group variability. However, in our setting, variability reflects intrinsic
stochasticity of the language model rather than sampling error. We therefore
use conventional heuristics (e.g., \(\approx 0.2, 0.5, 0.8\) for small, medium,
large) only as descriptive reference points, not as decision thresholds.

\subsubsection{Interpretation}
\label{app:metric-interpretation}
Inferential conclusions are drawn from the joint configuration of
$\mathrm{dPD}(E)$, $\mathbb{E}[E]$, $\operatorname{SD}(E)$, and $\mathrm{SNR}(E)$.
High $\mathrm{dPD}(E)$ indicates strong directional regularity, while large
$\mathrm{SNR}(E)$ indicates that the expected shift dominates the intrinsic
predictive dispersion of the paired effect distribution. Discrepancies between
these quantities distinguish regimes such as stable but weak effects, large but
ambivalent effects, or low-signal effects with high dispersion (Table~\ref{tab:synthetic-dpd-snr}).

\begin{table*}[t]
\centering
\begin{tabular}{l l r r r r}
\toprule
Case &
$P(E)$ &
$\mathbb{E}[E]$ &
SD &
SNR &
dPD \\
\midrule

A &
$\{-25{:}0.9,\,-20{:}0.1\}$ &
$-24.5$ &
$1.5$ &
$16.3$ &
$1.00$ \\

B &
$\{1{:}0.99,\,100{:}0.01\}$ &
$11.0$ &
$99.0$ &
$0.11$ &
$1.00$ \\

C &
$\{100{:}0.51,\,-1{:}0.49\}$ &
$50.5$ &
$49.5$ &
$1.02$ &
$0.51$ \\

D &
$\{5{:}0.34,\,-5{:}0.33,\,0{:}0.33\}$ &
$0.05$ &
$4.1$ &
$0.01$ &
$0.34$ \\

\bottomrule
\end{tabular}
\caption{
Illustrative predictive effect distributions exhibiting distinct behavioral
regimes. Cases~A and~B have identical directional consistency
(\(\mathrm{dPD}=1\)) but sharply different standardized magnitudes
(\(\mathrm{SNR}\)). Case~C shows a large expected effect despite weak
directional consistency. Case~D exhibits neither directional regularity nor an
appreciable standardized signal.
}
\label{tab:synthetic-dpd-snr}
\end{table*}

\subsubsection{Trend Effects}
\label{app:trend-effects}

Some experimental factors admit an inherent ordering and a meaningful numeric
scale 
(e.g., model size in billions of parameters or a discretized
continuous manipulation intensity). 
To quantify the marginal \emph{per-unit}
sensitivity of the composite construct to such a factor without imposing a
parametric functional form, we compute paired finite
difference distributions across adjacent factor levels and aggregate them into an average
trend distribution.

Let $c\in\mathcal{C}$ denote an ordered factor with $L=\Lambda_c$ levels that
admit numeric values
\[
\lambda^{(c)}_1 < \lambda^{(c)}_2 < \dots < \lambda^{(c)}_L
\qquad (\lambda^{(c)}_\ell \in \mathbb{R}).
\]
Adjacent spacings may be uneven. Define increments
\[
\delta^{(c)}_\ell
=
\lambda^{(c)}_{\ell+1}-\lambda^{(c)}_\ell
\;>\;0,
\qquad
\ell=1,\dots,L-1.
\]

To respect the factorial design, we compare adjacent levels \emph{within each
block} of the remaining factors $\boldsymbol{\lambda}_{-c}\in\mathcal{D}_{-c}$.
For each block and each adjacent pair of ordered levels, define the paired
local difference random variable
\begin{align}
\label{eq:local-diff-block}
D^{(c)}_{\ell\mid \boldsymbol{\lambda}_{-c}}:=S_{(\lambda^{(c)}_{\ell+1},\boldsymbol{\lambda}_{-c})}
\;\ominus\;
S_{(\lambda^{(c)}_{\ell},\boldsymbol{\lambda}_{-c})}&, \\
\ell=1,\dots,L-1&,
\end{align}
where $\ominus$ denotes the paired comonotone difference from
Def.~\ref{def:paired-difference}. Each PMF
$P_{D^{(c)}_{\ell\mid \boldsymbol{\lambda}_{-c}}}$ is computed exactly via the
discrete mass-matching formula in Eq.~\eqref{eq:paired-diff-pmf}.

We then marginalize over blocks by a uniform mixture
\begin{equation}
\label{eq:local-diff}
P_{D^{(c)}_\ell}
=
\frac{1}{|\mathcal{D}_{-c}|}
\sum_{\boldsymbol{\lambda}_{-c}\in\mathcal{D}_{-c}}
P_{D^{(c)}_{\ell\mid \boldsymbol{\lambda}_{-c}}}.
\end{equation}
Intuitively, $D^{(c)}_\ell$ is the predictive distribution of the change in the
composite score induced by moving from $\lambda^{(c)}_\ell$ to
$\lambda^{(c)}_{\ell+1}$, averaging uniformly over all other experimental
factors while preserving a paired (within-block) comparison.

Define the per-unit local slope random variable as the rescaled difference
\begin{equation}
\label{eq:scaled-local-contrast}
\Delta^{(c)}_\ell
=
\frac{D^{(c)}_\ell}{\delta^{(c)}_\ell}.
\end{equation}
Equivalently, $P_{\Delta^{(c)}_\ell}$ is the pushforward of $P_{D^{(c)}_\ell}$
under the map $z\mapsto z/\delta^{(c)}_\ell$; its support is rescaled by
$1/\delta^{(c)}_\ell$ while probability masses are preserved.

To aggregate unevenly spaced local slopes into a single per-unit trend effect,
we define the \emph{per-unit trend} random variable $\gamma_c$ as the
length-weighted mixture of scaled local slopes:
\begin{align}
\label{eq:trend-mixture}
&P_{\gamma_c}
=
\sum_{\ell=1}^{L-1}
w^{(c)}_\ell
\, P_{\Delta^{(c)}_\ell}, \\
&w^{(c)}_\ell
=
\frac{\delta^{(c)}_\ell}{\sum_{m=1}^{L-1}\delta^{(c)}_m}
=
\frac{\delta^{(c)}_\ell}{\lambda^{(c)}_L-\lambda^{(c)}_1}&.
\end{align}

Dispersion or multimodality in $P_{\gamma_c}$ indicates heterogeneous local
behavior (e.g., non-monotonic or regime-dependent trends) across adjacent
factor-level transitions.

\begin{proposition}[Expectation-level secant-slope identity]
\label{prop:trend-expectation}
The expected per-unit trend equals the secant slope of the ordered-factor
marginal mean across the observed range
\begin{equation}
\label{eq:trend-expectation}
\mathbb{E}[\gamma_c]
=
\frac{
\mathbb{E}[M_c(\lambda^{(c)}_L)]
-
\mathbb{E}[M_c(\lambda^{(c)}_1)]
}{
\lambda^{(c)}_L-\lambda^{(c)}_1
}.
\end{equation}
\end{proposition}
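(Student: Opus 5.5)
The plan is to compute $\mathbb{E}[\gamma_c]$ in three layers: through the mixture over local slopes, then the rescaling, then the block mixture and paired difference. The sum over adjacent levels will then telescope. No property of the comonotone coupling beyond marginal preservation is needed. As in the proof of Corollary~\ref{cor:main-effect-anova}, linearity of expectation does not depend on the coupling.

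\textbf{Step 1: the outer mixture.} $P_{\gamma_c}$ is a finite mixture (Eq.~\eqref{eq:trend-mixture}) with weights $w^{(c)}_\ell$. By Prop.~\ref{prop:linearity-mixture}, with $\pi$ taken as the discrete measure $\{w^{(c)}_\ell\}$ on the index $\ell$,
\[
\mathbb{E}[\gamma_c]=\sum_{\ell=1}^{L-1} w^{(c)}_\ell\,\mathbb{E}[\Delta^{(c)}_\ell].
\]

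\textbf{Step 2: the rescaling.} $\Delta^{(c)}_\ell$ is the pushforward of $D^{(c)}_\ell$ under $z\mapsto z/\delta^{(c)}_\ell$, so $\mathbb{E}[\Delta^{(c)}_\ell]=\mathbb{E}[D^{(c)}_\ell]/\delta^{(c)}_\ell$. Substituting $w^{(c)}_\ell=\delta^{(c)}_\ell/(\lambda^{(c)}_L-\lambda^{(c)}_1)$, the spacings cancel, leaving
\[
\mathbb{E}[\gamma_c]=\frac{1}{\lambda^{(c)}_L-\lambda^{(c)}_1}\sum_{\ell=1}^{L-1}\mathbb{E}[D^{(c)}_\ell].
\]
This cancellation is the reason the length weighting was chosen, and it is the conceptual crux. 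Computationally it is trivial.

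\textbf{Step 3: the block mixture and paired difference.} $D^{(c)}_\ell$ is the uniform mixture over blocks $\boldsymbol{\lambda}_{-c}$ (Eq.~\eqref{eq:local-diff}), so Prop.~\ref{prop:linearity-mixture} applies again. Within each block, Def.~\ref{def:paired-difference} gives $\mathbb{E}[X\ominus Y]=\mathbb{E}[X]-\mathbb{E}[Y]$, because the comonotone coupling preserves marginals. Averaging over blocks and using Eq.~\eqref{eq:marginal-means} with Prop.~\ref{prop:linearity-mixture} then yields
\[
\mathbb{E}[D^{(c)}_\ell]=\mathbb{E}[M_c(\lambda^{(c)}_{\ell+1})]-\mathbb{E}[M_c(\lambda^{(c)}_\ell)].
\]

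\textbf{Step 4: telescoping.} Summing the Step 3 identity over $\ell=1,\dots,L-1$ telescopes to $\mathbb{E}[M_c(\lambda^{(c)}_L)]-\mathbb{E}[M_c(\lambda^{(c)}_1)]$. Inserting this into the Step 2 expression gives Eq.~\eqref{eq:trend-expectation}.

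The only point needing care is Step 3: the blocked mixture and the paired difference must be interchanged with expectation correctly. This is exactly the argument already carried out for Eq.~\eqref{eq:proof-paired-mean} and Eq.~\eqref{eq:proof-iterated-exp}, so I would cite it rather than redo it. Finiteness of all supports guarantees that every expectation exists, and $L\ge 2$ is implicitly required for the denominator to be nonzero.
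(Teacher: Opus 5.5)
Your proposal is correct and follows essentially the same route as the paper's proof. Both arguments use the same four ingredients: mean preservation of the comonotone paired difference within each block, block averaging to obtain $\mathbb{E}[M_c(\lambda^{(c)}_{\ell+1})]-\mathbb{E}[M_c(\lambda^{(c)}_{\ell})]$, the cancellation $w^{(c)}_\ell/\delta^{(c)}_\ell = 1/(\lambda^{(c)}_L-\lambda^{(c)}_1)$, and telescoping. The only difference is that you work from the outer mixture inward while the paper works from the innermost block outward, which does not affect the argument.
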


\begin{proof}
Fix $\ell\in\{1,\dots,L-1\}$ and a block $\boldsymbol{\lambda}_{-c}$.
By Def.~\ref{def:paired-difference}, the paired difference preserves the mean
\[
\mathbb{E}\!\left[D^{(c)}_{\ell\mid \boldsymbol{\lambda}_{-c}}\right]
=
\mathbb{E}\!\left[S_{(\lambda^{(c)}_{\ell+1},\boldsymbol{\lambda}_{-c})}\right]
-
\mathbb{E}\!\left[S_{(\lambda^{(c)}_{\ell},\boldsymbol{\lambda}_{-c})}\right].
\]
Averaging uniformly over $\boldsymbol{\lambda}_{-c}$ and using linearity of
expectation gives
\[
\mathbb{E}[D^{(c)}_\ell]
=
\mathbb{E}[M_c(\lambda^{(c)}_{\ell+1})]
-
\mathbb{E}[M_c(\lambda^{(c)}_{\ell})],
\]
because $M_c(\lambda^{(c)}_\ell)$ is the uniform mixture of
$S_{(\lambda^{(c)}_\ell,\boldsymbol{\lambda}_{-c})}$ over
$\boldsymbol{\lambda}_{-c}$ (Eq.~\eqref{eq:marginal-means}), and expectations
commute with finite mixtures (Theorem~\ref{prop:linearity-mixture}).

By definition of $\Delta^{(c)}_\ell$,
\[
\mathbb{E}[\Delta^{(c)}_\ell]
=
\frac{\mathbb{E}[D^{(c)}_\ell]}{\delta^{(c)}_\ell}.
\]
Taking expectations in the mixture definition \eqref{eq:trend-mixture} yields

\begin{align}
\mathbb{E}[\gamma_c]
&=
\sum_{\ell=1}^{L-1} w^{(c)}_\ell\,\mathbb{E}[\Delta^{(c)}_\ell]\\
&=
\frac{1}{\sum_{m=1}^{L-1}\delta^{(c)}_m}
\sum_{\ell=1}^{L-1} \\
&\left(
\mathbb{E}[M_c(\lambda^{(c)}_{\ell+1})]
-
\mathbb{E}[M_c(\lambda^{(c)}_{\ell})]
\right),
\end{align}
which telescopes to
\(
(\mathbb{E}[M_c(\lambda^{(c)}_L)]-\mathbb{E}[M_c(\lambda^{(c)}_1)])/
(\lambda^{(c)}_L-\lambda^{(c)}_1)
\),
proving Eq.~\eqref{eq:trend-expectation}.
\end{proof}

\paragraph{Directional interpretation.}
The full predictive distribution $P_{\gamma_c}$ supports directional summaries
(e.g., $\mathbb{P}(\gamma_c>0)$, $\mathbb{P}(\gamma_c<0)$) and can be summarized
using the same quantities defined for other predictive contrasts (e.g., dPD,
SD, and SNR based on $\operatorname{SD}(\gamma_c)$).

\subsection{Prompt template}

Figure \ref{fig:prompt-setup} shows how we form the prompt template and cross Target Country with each model.

\begin{figure*}[t]
\centering
\resizebox{\textwidth}{!}{%
\begin{tikzpicture}[
  font=\scriptsize,
  msg/.style={rounded corners, draw, thick, align=left, inner sep=6pt},
  sys/.style={msg, fill=gray!12, text width=7.2cm},
  usr/.style={msg, fill=blue!7, text width=7.2cm},
  lex/.style={msg, fill=green!7, text width=5.8cm},
  temp/.style={msg, fill=green!7, text width=5.8cm},
  arrow/.style={-Latex, thick}
]

\node[sys,
  label={[font=\scriptsize\bfseries, fill=white, inner sep=15pt]west:System}] (sys) {%
\ttfamily
You are a consumer in a market research study.\\
Response format: 1--7 (1=Strongly disagree, 7=Strongly agree).\\
Respond with ONLY the digit.
};

\node[usr, below=0.35cm of sys,
  label={[font=\scriptsize\bfseries, fill=white, inner sep=15pt]west:User}] (inst) {%
\ttfamily
Canadians should always buy Canadian-made products instead of imports.
};

\node[draw, dashed, thick, rounded corners, inner sep=10pt,
      fit=(sys)(inst),
      label={[font=\scriptsize\bfseries]north:Chat prompt}] (chatprompt) {};

\node[usr, right=18mm of inst,
  label={[font=\scriptsize\bfseries, fill=white, inner sep=5pt, xshift=-3mm]north:$T_k$}] (tmpl) {%
\ttfamily
\textcolor{red}{\{People\}} should always buy
\textcolor{red}{\{Adj\}}-made products instead of imports.
};

\node[lex, above=5mm of tmpl,
  label={[font=\scriptsize\bfseries, fill=white, inner sep=1pt]north:Target Country lexicon}] (lex) {%
\ttfamily
\begin{tabular}{@{}l l l@{}}
Target & \{People\} & \{Adj\} \\ \hline
USA    & Americans   & American \\
China  & the Chinese & Chinese \\
Canada & Canadians   & Canadian \\
France & the French  & French \\
\end{tabular}
};

\draw[arrow] (lex.south) -- node[above, font=\scriptsize]{} (tmpl.north);
\draw[arrow] (tmpl.west) -- node[above, font=\scriptsize]{Target = Canada} (inst.east);

 
\end{tikzpicture}%
}
\caption{Prompt setup: a fixed system prompt enforces the response schema, while Target Country (sample shown for Canada) deterministically instantiates item templates via a lexicon. Example shown is ${T_1}$ from the CETSCALE; each CETSCALE item has its own template.}
\label{fig:prompt-setup}
\end{figure*}

\subsection{Additional Analysis}
\label{app:model-size}
\paragraph{Effect of Model Size.}

While our primary factorial design treats language models as discrete categorical subjects, model capacity is inherently continuous.
Prior work has shown that scaling language models can systematically alter social and behavioral tendencies, including the amplification of biased responses \citep{wadi2025a}.
We therefore investigate whether ethnocentric tendencies, as measured by the CETSCALE, exhibit a systematic trend as a function of model size.
Specifically, we ask whether increasing the number of parameters induces a consistent directional change in the latent ethnocentrism score.
To address this question, we conduct a trend-effects analysis (derived in App.~\ref{app:trend-effects}).

We form four within-family size series, each evaluated under the same 
\textsc{Target} manipulation and CETSCALE instrument as in the main experiment.
\textsc{Aya} (8B, 32B), \textsc{Gemma~3} (1B, 4B, 12B, 27B), \textsc{Ministral}
(3B, 8B, 14B), and \textsc{Qwen3} (0.6B, 4B, 8B, 14B, 32B, Next-80B).
\footnote{Llama~3.3 (70B) does not form a multi-point series and is excluded from
trend estimation.} 
The trend effect of model size on ethnocentrism (Table~\ref{tab:trend-effect}) is large ($\mathbb{E}=1.65$, $\mathrm{SNR}=5.87$) and directionally consistent ($\mathrm{dPD}>0.99$) for \textsc{Aya} model family. 
This indicates steep linear scaling in ethnocentric tendencies. 
For every additional billion parameters, 
the model's expected ethnocentrism score increases by 1.65 points.
Over the range evaluated (8B to 32B),
this effectively shifts a model from the lowest observed ethnocentrism to the highest, purely by scaling.
For other models, size does not have a meaningful effect (low $\mathrm{SNR}$ and $\mathrm{dPD}$) on ethnocentrism.

\subsection{Additional tables/figures}

\begin{itemize}
  \item Table~\ref{tab:cns-entropy} shows a comparison of Multivariate Consensus versus Entropy for the country-of-origin experiment.
  \item Figure~\ref{fig:human-llm} shows a comparison of the human baseline with the LLMs tested on the CETSCALE.
  \item Table~\ref{tab:coefficient-table-marginal} shows the full ANOVA decomposition (main and interaction effects) of the composite CETSCALE score.
  \item Figure~\ref{fig:country-main} shows the main effect of the Target Country on the CETSCALE.
  \item Table~\ref{tab:country-contrasts} shows the pairwise marginal contrasts of the Target Countries.
  \item Figure~\ref{fig:specific-pmf} shows the exact PMF of the LLMs on the composite CETSCALE.
  \item Table~\ref{tab:trend-effect} shows the effect of LLM size (number of parameters) on the CETSCALE.
  \item Table~\ref{tab:aggregate-vs-factorial} shows a comparison between the estimated effect using aggregation (baseline) compared with factorial decomposition (our proposed method).
\end{itemize}

\begin{table}[t]
\centering
\small
\begin{tabular}{llrr}
\toprule
Model & Country & $Cns$ & Entropy \\
\midrule
\multirow[t]{4}{*}{Aya 32B} 
 & Canada & 0.941 & 4.945 \\
 & China  & 0.939 & 5.150 \\
 & France & 0.937 & 5.010 \\
 & USA    & 0.924 & 7.315 \\
\cline{1-4}
\multirow[t]{4}{*}{Gemma 3 27B} 
 & Canada & 0.959 & 3.565 \\
 & China  & 0.941 & 4.886 \\
 & France & 0.940 & 4.944 \\
 & USA    & 0.948 & 4.533 \\
\cline{1-4}
\multirow[t]{4}{*}{Llama 3.3 70B} 
 & Canada & 0.924 & 5.620 \\
 & China  & 0.894 & 8.579 \\
 & France & 0.882 & 5.800 \\
 & USA    & 0.947 & 3.929 \\
\cline{1-4}
\multirow[t]{4}{*}{Ministral 14B} 
 & Canada & 0.669 & 35.519 \\
 & China  & 0.646 & 37.521 \\
 & France & 0.668 & 36.444 \\
 & USA    & 0.660 & 35.759 \\
\cline{1-4}
\multirow[t]{4}{*}{Qwen3 Next 80B} 
 & Canada & 0.929 & 6.861 \\
 & China  & 0.886 & 9.008 \\
 & France & 0.886 & 8.381 \\
 & USA    & 0.935 & 5.950 \\
\bottomrule
\end{tabular}
\caption{Consensus and Entropy of Models and Target Countries}
\label{tab:cns-entropy}
\end{table}

\begin{figure}[t]
  \includegraphics[width=\columnwidth]{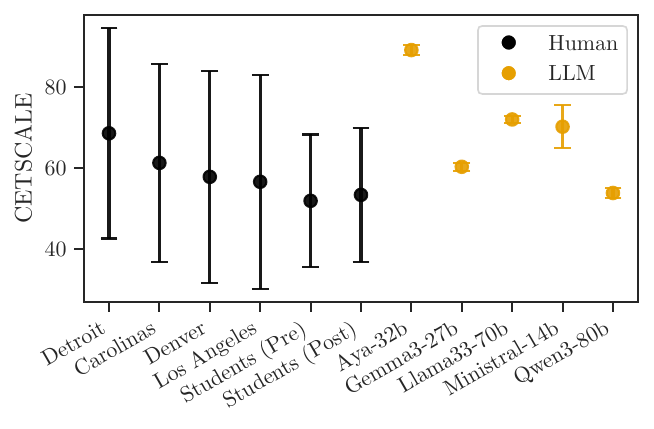}
  \caption{
Comparison of CETSCALE scores for the USA target condition between
human populations reported in \citet{shimp1987consumer} and LLMs evaluated
in this study.
Points denote expected values, and error bars indicate standard deviation.
All scores are based on the 17-item CETSCALE using a 7-point Likert scale.
Notably, several LLMs exhibit ethnocentrism levels comparable to or exceeding
those observed in highly ethnocentric human samples, while displaying
substantially lower intrinsic dispersion.
}
  \label{fig:human-llm}
\end{figure}

\begin{figure}[t]
  \includegraphics[width=\columnwidth]{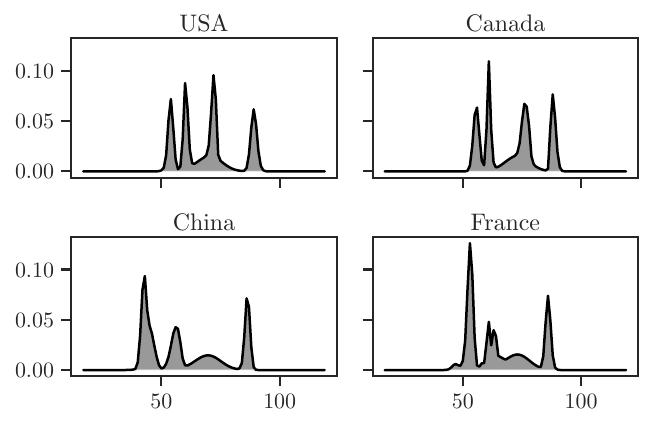}
  \caption{Main Effect of Target Country}
  \label{fig:country-main}
\end{figure}

\begin{table*}[t]
  \caption{Main effect and interaction table for the factorial model.}
  \label{tab:coefficient-table-marginal}
    \begin{center}
    \begin{small}
      \begin{sc}
\begin{tabular}{lrrll}
\toprule
\textbf{Parameter} & $\mathbb{E}$ & $\mathrm{SD}$ & $\mathrm{SNR}$ & $\mathrm{dPD}$ \\
\midrule
$\mu_{\emptyset}$ & 66.25 & 14.21 &  &  \\
\midrule
Aya-32B & 21.19 & 13.01 & 1.63 & 0.99 \\
Gemma 3-27B & -11.99 & 13.15 & 0.91 & 0.77 \\
Llama 3.3-70B & 0.31 & 12.79 & 0.02 & 0.55 \\
Ministral-14B & 5.11 & 9.59 & 0.53 & 0.62 \\
Qwen3 Next-80B & -14.63 & 12.17 & 1.20 & 0.93 \\
\midrule
USA & 2.84 & 5.43 & 0.52 & 0.55 \\
Canada & 4.62 & 5.72 & 0.81 & 0.90 \\
China & -6.46 & 6.19 & 1.04 & 0.95 \\
France & -1.00 & 4.72 & 0.21 & 0.54 \\
\midrule
Aya-32B $\times$ USA & -1.18 & 3.03 & 0.39 & 0.55 \\
Aya-32B $\times$ Canada & -3.77 & 2.88 & 1.31 & 0.94 \\
Aya-32B $\times$ China & 5.35 & 4.53 & 1.18 & 0.77 \\
Aya-32B $\times$ France & -0.39 & 2.89 & 0.14 & 0.42 \\
\midrule
Gemma 3-27B $\times$ USA & 3.21 & 5.34 & 0.60 & 0.63 \\
Gemma 3-27B $\times$ Canada & 2.14 & 5.63 & 0.38 & 0.47 \\
Gemma 3-27B $\times$ China & -5.40 & 8.63 & 0.63 & 0.61 \\
Gemma 3-27B $\times$ France & 0.05 & 5.34 & 0.01 & 0.51 \\
\midrule
Llama 3.3-70B $\times$ USA & 2.59 & 5.86 & 0.44 & 0.58 \\
Llama 3.3-70B $\times$ Canada & 5.20 & 5.49 & 0.95 & 0.77 \\
Llama 3.3-70B $\times$ China & -4.29 & 9.12 & 0.47 & 0.58 \\
Llama 3.3-70B $\times$ France & -3.50 & 5.55 & 0.63 & 0.67 \\
\midrule
Ministral-14B $\times$ USA & -4.00 & 3.10 & 1.29 & 0.88 \\
Ministral-14B $\times$ Canada & -2.99 & 3.16 & 0.95 & 0.73 \\
Ministral-14B $\times$ China & 4.54 & 4.11 & 1.11 & 0.72 \\
Ministral-14B $\times$ France & 2.45 & 2.98 & 0.82 & 0.87 \\
\midrule
Qwen3 Next-80B $\times$ USA & -0.61 & 2.83 & 0.22 & 0.58 \\
Qwen3 Next-80B $\times$ Canada & -0.57 & 2.87 & 0.20 & 0.46 \\
Qwen3 Next-80B $\times$ China & -0.20 & 5.11 & 0.04 & 0.46 \\
Qwen3 Next-80B $\times$ France & 1.39 & 2.81 & 0.49 & 0.53 \\
\bottomrule
\end{tabular}
      \end{sc}
    \end{small}
  \end{center}
  \vskip -0.1in
\end{table*}

\begin{table}[t]
  \caption{Pairwise marginal contrasts between Target Countries.}
  \label{tab:country-contrasts}
  \begin{center}
    \begin{small}
      \begin{sc}
\begin{tabular}{lrrrr}
\toprule
Contrast & $\mathbb{E}$ & $\mathrm{SD}$ & $\mathrm{SNR}$ & $\mathrm{dPD}$ \\
\midrule
USA - Canada & -1.78 & 1.83 & 0.97 & 0.74 \\
USA - China & 9.30 & 6.93 & 1.34 & 0.95 \\
USA - France & 3.84 & 4.40 & 0.87 & 0.80 \\
Canada - China & 11.08 & 7.60 & 1.46 & 1.00 \\
Canada - France & 5.62 & 5.05 & 1.11 & 0.84 \\
China - France & -5.47 & 3.79 & 1.44 & 0.80 \\
\bottomrule
\end{tabular}
      \end{sc}
    \end{small}
  \end{center}
  \vskip -0.1in
\end{table}

\begin{figure*}[t]
  \vskip 0.2in
  \begin{center}
    \begin{subfigure}{0.3\textwidth}
      \centering
      \includegraphics[width=\linewidth]{figures/specific-pmf-1.pdf}
      \caption{Aya Expanse 32B}
    \end{subfigure}
    \hfill
    \begin{subfigure}{0.3\textwidth}
      \centering
      \includegraphics[width=\linewidth]{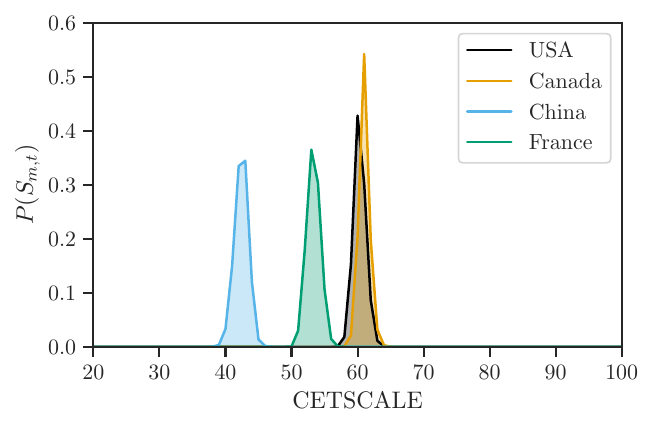}
      \caption{Gemma 3 27B}
    \end{subfigure}
    \hfill
    \begin{subfigure}{0.3\textwidth}
      \centering
      \includegraphics[width=\linewidth]{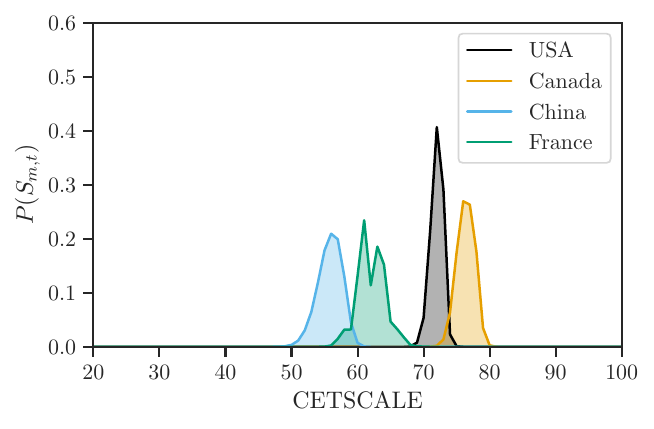}
      \caption{Llama 3.3 70B}
    \end{subfigure}

    \vspace{1em} 

    \begin{subfigure}{0.3\textwidth}
      \centering
      \includegraphics[width=\linewidth]{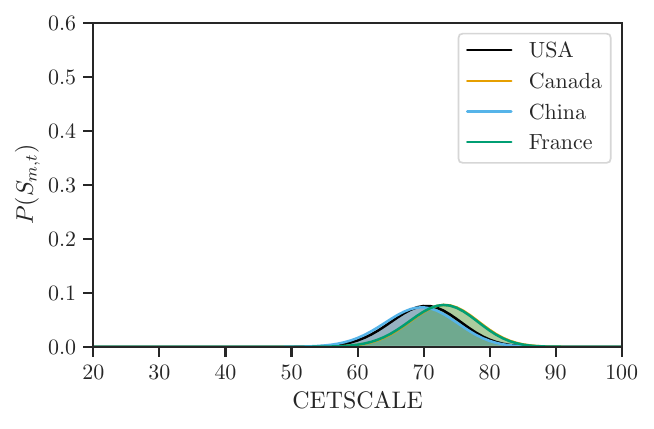}
      \caption{Ministral 14B}
    \end{subfigure}
    \hfil 
    \begin{subfigure}{0.3\textwidth}
      \centering
      \includegraphics[width=\linewidth]{figures/specific-pmf-5.pdf}
      \caption{Qwen3 Next 80B}
    \end{subfigure}

    \caption{Distribution of CETSCALE for Model and Target Country.}
    \label{fig:specific-pmf}
  \end{center}
  \vskip -0.2in
\end{figure*}

\begin{table}[t]
  \caption{Trend effects for model size (billions of parameters).}
  \label{tab:trend-effect}
  \begin{center}
    \begin{small}
      \begin{sc}
\begin{tabular}{lrrrr}
\toprule
Model Family & $\mathbb{E}$ & $\mathrm{SD}$ & $\mathrm{SNR}$ & $\mathrm{dPD}$ \\
\midrule
Aya & 1.65 & 0.28 & 5.87 & 1.00 \\
Gemma & 1.12 & 7.21 & 0.16 & 0.70 \\
Ministral & 0.87 & 1.12 & 0.78 & 0.56 \\
Qwen3 & -0.00 & 1.79 & 0.00 & 0.89 \\
\bottomrule
\end{tabular}

      \end{sc}
    \end{small}
  \end{center}
  \vskip -0.1in
\end{table}

\begin{table*}[t]
  \centering
  \caption{Aggregate versus factorial decomposition of Target-Country effects
  on CETSCALE ethnocentrism, computed with identical OT-coupled estimators.
  The aggregate benchmark (collapsing over \textsc{Model}) reports a single
  Target-Country main effect that is, by construction, uniform across models.
  The factorial \textsc{Model}\,$\times$\,\textsc{Target} interaction isolates
  the country-of-origin bias that aggregation cannot represent. Bold interactions
  indicate a sign reversal relative to the aggregate main effect (i.e.,
  conditions for which the aggregate view is directionally incorrect).}
  \label{tab:aggregate-vs-factorial}
  \begin{small}
  \begin{sc}
  \begin{tabular}{llrrrc}
  \toprule
  & & \multicolumn{4}{c}{Isolated Effect} \\
  \cmidrule(lr){3-6}
  Target & Model (Origin) & $\mathbb{E}$ & SD & SNR & dPD \\
  \midrule
  \multicolumn{6}{l}{\textit{Panel A. Aggregate benchmark (Target main effect; collapses over Model)}} \\
  USA    & --- (all models) &  2.84 & 5.43 & 0.52 & 0.55 \\
  Canada & --- (all models) &  4.62 & 5.72 & 0.81 & 0.90 \\
  China  & --- (all models) & -6.46 & 6.19 & 1.04 & 0.95 \\
  France & --- (all models) & -1.00 & 4.72 & 0.21 & 0.54 \\
  \midrule
  \multicolumn{6}{l}{\textit{Panel B. Factorial interaction (Model $\times$ Target; isolated country-of-origin bias)}} \\
  \multirow{5}{*}{USA}
    & Llama 3.3 70B (US)$^{\dagger}$ &  2.59 & 5.86 & 0.44 & 0.58 \\
    & Gemma 3 27B (US)$^{\dagger}$   &  3.21 & 5.34 & 0.60 & 0.63 \\
    & Aya Expanse 32B (CA)           & -1.18 & 3.03 & 0.39 & 0.55 \\
    & Ministral 14B (FR)             & \textbf{-4.00} & 3.10 & 1.29 & 0.88 \\
    & Qwen3 Next 80B (CN)            & -0.61 & 2.83 & 0.22 & 0.58 \\
  \cmidrule(lr){2-6}
  \multirow{5}{*}{Canada}
    & Llama 3.3 70B (US)             &  5.20 & 5.49 & 0.95 & 0.77 \\
    & Gemma 3 27B (US)              &  2.14 & 5.63 & 0.38 & 0.47 \\
    & Aya Expanse 32B (CA)$^{\dagger}$ & \textbf{-3.77} & 2.88 & 1.31 & 0.94 \\
    & Ministral 14B (FR)            & \textbf{-2.99} & 3.16 & 0.95 & 0.73 \\
    & Qwen3 Next 80B (CN)          & \textbf{-0.57} & 2.87 & 0.20 & 0.46 \\
  \cmidrule(lr){2-6}
  \multirow{5}{*}{China}
    & Llama 3.3 70B (US)           & -4.29 & 9.12 & 0.47 & 0.58 \\
    & Gemma 3 27B (US)            & -5.40 & 8.63 & 0.63 & 0.61 \\
    & Aya Expanse 32B (CA)        & \textbf{ 5.35} & 4.53 & 1.18 & 0.77 \\
    & Ministral 14B (FR)         & \textbf{ 4.54} & 4.11 & 1.11 & 0.72 \\
    & Qwen3 Next 80B (CN)$^{\dagger}$ & -0.20 & 5.11 & 0.04 & 0.46 \\
  \cmidrule(lr){2-6}
  \multirow{5}{*}{France}
    & Llama 3.3 70B (US)         & \textbf{-3.50} & 5.55 & 0.63 & 0.67 \\
    & Gemma 3 27B (US)          &  0.05 & 5.34 & 0.01 & 0.51 \\
    & Aya Expanse 32B (CA)      &  0.39 & 2.89 & 0.14 & 0.42 \\
    & Ministral 14B (FR)$^{\dagger}$ & \textbf{ 2.45} & 2.98 & 0.82 & 0.87 \\
    & Qwen3 Next 80B (CN)      &  1.39 & 2.81 & 0.49 & 0.53 \\
  \bottomrule
  \end{tabular}
  \end{sc}
  \end{small}
  \\[2pt]
  \begin{flushleft}
  \footnotesize
  \textit{Note.} All effects are isolated distributions computed via the same
  optimal-transport (comonotone) coupling used throughout the paper; $\mathbb{E}$
  is the mean shift on the composite CETSCALE score, SNR $=|\mathbb{E}|/\text{SD}$,
  and dPD is the discrete Probability of Direction. Panel A is the entire
  country signal an aggregate benchmark can express; it is identical across
  models by construction. Panel B isolates the \textsc{Model}\,$\times$\,\textsc{Target}
  interaction, which is orthogonal to both marginals and therefore
  identically zero under aggregation. Boldface marks interactions whose sign
  reverses relative to the corresponding Panel A main effect. $^{\dagger}$Ingroup
  cell (Target country matches the model's country of origin).
  \end{flushleft}
\end{table*}

\subsection{Cost of Monte Carlo sampling}
\label{app:sampling-cost}

To quantify the cost of the standard Monte Carlo sampling paradigm, 
we treat the exact PMFs as ground truth and characterize the two distinct costs a sampling-based estimator of the composite CETSCALE score would incur: 
(i) stochastic sampling noise from finite $N$, and
(ii) systematic decoding bias from non-default temperature or top-$p$. 
All quantities are reported in composite-score units (the $17$--$119$ CETSCALE scale), 
directly comparable to the effect sizes in Table~\ref{tab:main-effects}.

\subsubsection{Sampling Noise}
\label{app:sampling-noise}

Because CETSCALE items are conditionally independent
(Sec.~\ref{sec:independence-assumption}), 
a single sampled composite response has variance $\sigma^2=\sum_{k=1}^{K}\mathrm{Var}(Y_k)$, 
and the sample-mean estimator of the composite score has standard error
$\mathrm{SE}=\sigma/\sqrt{N}$. 
Table~\ref{tab:sampling-se} reports this SE across per-condition budgets $N$, alongside each model's multivariate Consensus
($\mathrm{Cns}$). 
We verified these analytical values against a Monte Carlo simulation ($500$ replications per cell), 
which matched to within-simulation error.

\begin{table*}[htbp]
  \caption{Standard error (composite-score units) of a sampling-based estimator
  of the CETSCALE score, averaged per model, as a function of the per-condition
  sample size $N$. Models are sorted by multivariate Consensus.}
  \label{tab:sampling-se}
  \centering
  \begin{small}
  \begin{sc}
  \begin{tabular}{lrrrrr}
  \toprule
  Model & $\mathrm{Cns}$ & $N{=}10$ & $N{=}10^2$ & $N{=}10^3$ & $N{=}10^6$ \\
  \midrule
  Gemma 3-27B    & 0.947 & 0.304 & 0.096 & 0.030 & 0.001 \\
  Aya-32B        & 0.935 & 0.358 & 0.113 & 0.036 & 0.001 \\
  Llama 3.3-70B  & 0.912 & 0.493 & 0.156 & 0.049 & 0.002 \\
  Qwen3 Next-80B & 0.909 & 0.517 & 0.164 & 0.052 & 0.002 \\
  Ministral-14B  & 0.661 & 1.656 & 0.524 & 0.166 & 0.005 \\
  \bottomrule
  \end{tabular}
  \end{sc}
  \end{small}
\end{table*}

Sampling error is non-trivial at realistic budgets. 
At $N{=}100$, the SE ranges from $0.10$ (Gemma) to $0.52$ (Ministral) composite-score points. 
The latter is a meaningful fraction of several effects in Table~\ref{tab:main-effects}. 
The error decays only as $1/\sqrt{N}$, and hence each tenfold increase in $N$ reduces SE by only $\approx 3.16\times$, 
and nonzero error persists even at $N{=}10^6$. 
Moreover, the magnitude of this noise is anti-correlated with Consensus
($r=-1.00$, $p<0.001$, across the $20$ $\textsc{Model}\times\textsc{Target}$ conditions). 
Low-consensus item PMFs demand far more samples for the same precision, 
so Ministral ($\mathrm{Cns}{=}0.66$) requires $\approx 5\times$ the samples of Gemma ($\mathrm{Cns}{=}0.95$) at every $N$
(Figure~\ref{fig:sampling-se}).

\begin{figure}[htbp]
  \centering
  \includegraphics[width=\columnwidth]{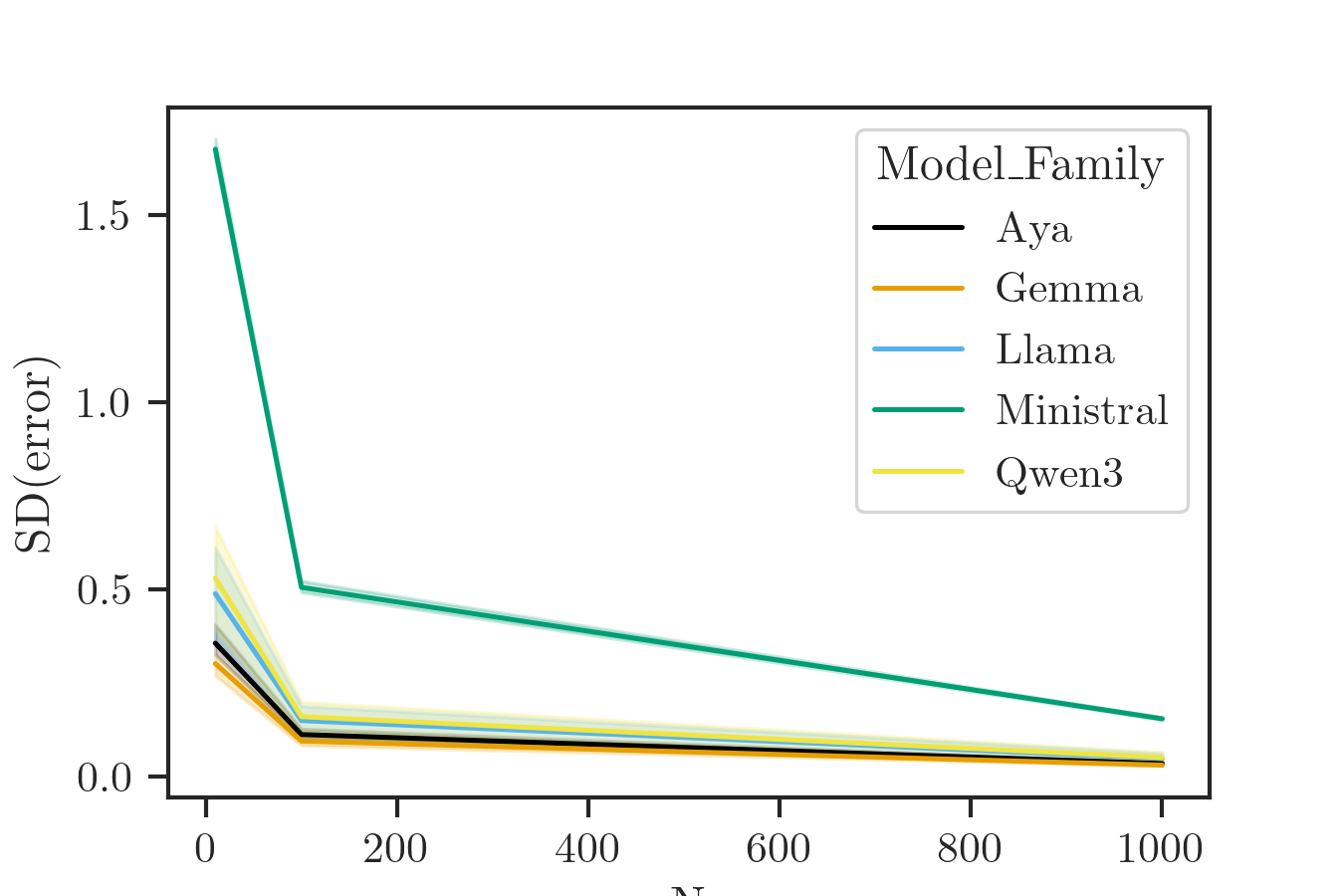}
  \caption{Standard error of the sampling-based CETSCALE estimator versus
  per-condition sample size $N$, by model. Low-consensus models (e.g.,
  Ministral) incur substantially larger error at every budget.}
  \label{fig:sampling-se}
\end{figure}

\subsubsection{Decoding Bias}
\label{app:decoding-bias}

Whereas sampling noise vanishes as $N\to\infty$, 
non-default decoding introduces a systematic bias that no amount of sampling can correct, 
because temperature and top-$p$ reshape the target distribution itself. 
We quantify this deterministically. 
For each condition we recompute the exact expected composite score under a grid of temperatures 
$T\in\{0.1,0.5,1.0,1.3\}$ and
$\text{top-}p\in\{0.8,0.9,1.0\}$, 
and measure the bias relative to the model's
native ($T{=}1.0$, $\text{top-}p{=}1.0$) distribution
(Table~\ref{tab:decoding-bias}).

\begin{table*}[htbp]
  \caption{Decoding-induced bias (composite-score units) relative to the native
  distribution, aggregated over the $T\times\text{top-}p$ grid. $M_{|Bias|}$ is
  the mean absolute bias; $SD_{Bias}$ its standard deviation across the grid.
  Models sorted by Consensus.}
  \label{tab:decoding-bias}
  \centering
  \begin{small}
  \begin{sc}
  \begin{tabular}{lrrr}
  \toprule
  Model & $\mathrm{Cns}$ & $M_{|Bias|}$ & $SD_{Bias}$ \\
  \midrule
  Gemma 3-27B    & 0.95 & 0.18 & 0.25 \\
  Aya-32B        & 0.94 & 0.24 & 0.38 \\
  Llama 3.3-70B  & 0.91 & 0.59 & 0.75 \\
  Qwen3 Next-80B & 0.91 & 0.33 & 0.53 \\
  Ministral-14B  & 0.66 & 1.38 & 1.39 \\
  \bottomrule
  \end{tabular}
  \end{sc}
  \end{small}
\end{table*}

The distortion is small for high-consensus models (Gemma: $0.18$ points) but substantial for low-consensus Ministral 
($M_{|Bias|}=1.38$, reaching up to $+2.79$ points at low temperature and reversing sign at $T{=}1.3$;
Figures~\ref{fig:bias-temp}--\ref{fig:bias-topp}), comparable to the effects in
Table~\ref{tab:main-effects}. 
As with sampling noise, the bias is strongly
anti-correlated with Consensus ($r=-0.85$, $p<0.001$, $n{=}20$ conditions). 
Intuitively, the results confirm that a sharp, high-consensus PMF has almost no dispersed mass for temperature or nucleus truncation to reshape, 
whereas a flat, low-consensus PMF has its mean substantially shifted.

\begin{figure}[htbp]
  \centering
  \includegraphics[width=\columnwidth]{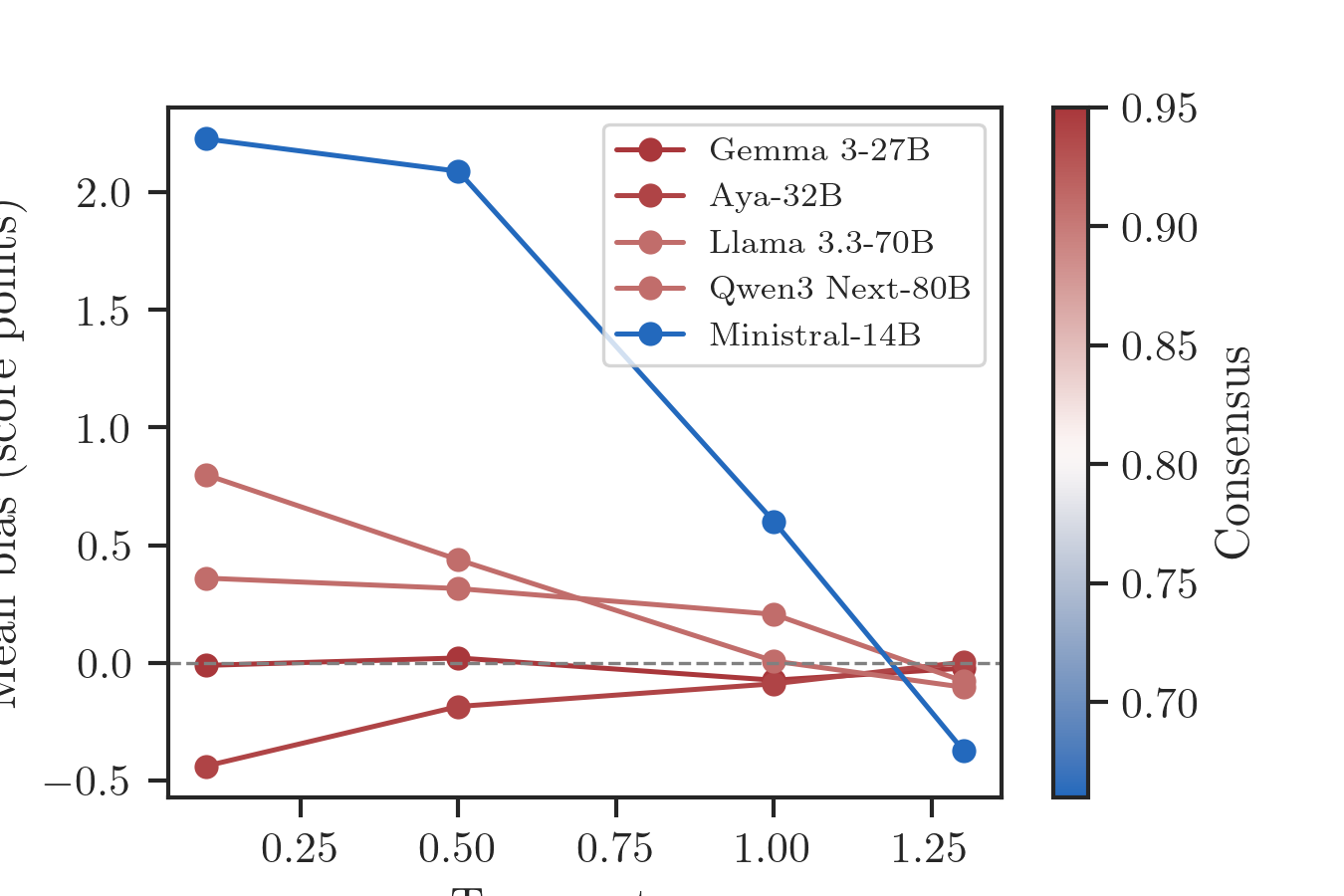}
  \caption{Mean decoding bias versus temperature, colored by Consensus.
  Low-consensus models exhibit large, sign-changing distortions.}
  \label{fig:bias-temp}
\end{figure}

\begin{figure}[htbp]
  \centering
  \includegraphics[width=\columnwidth]{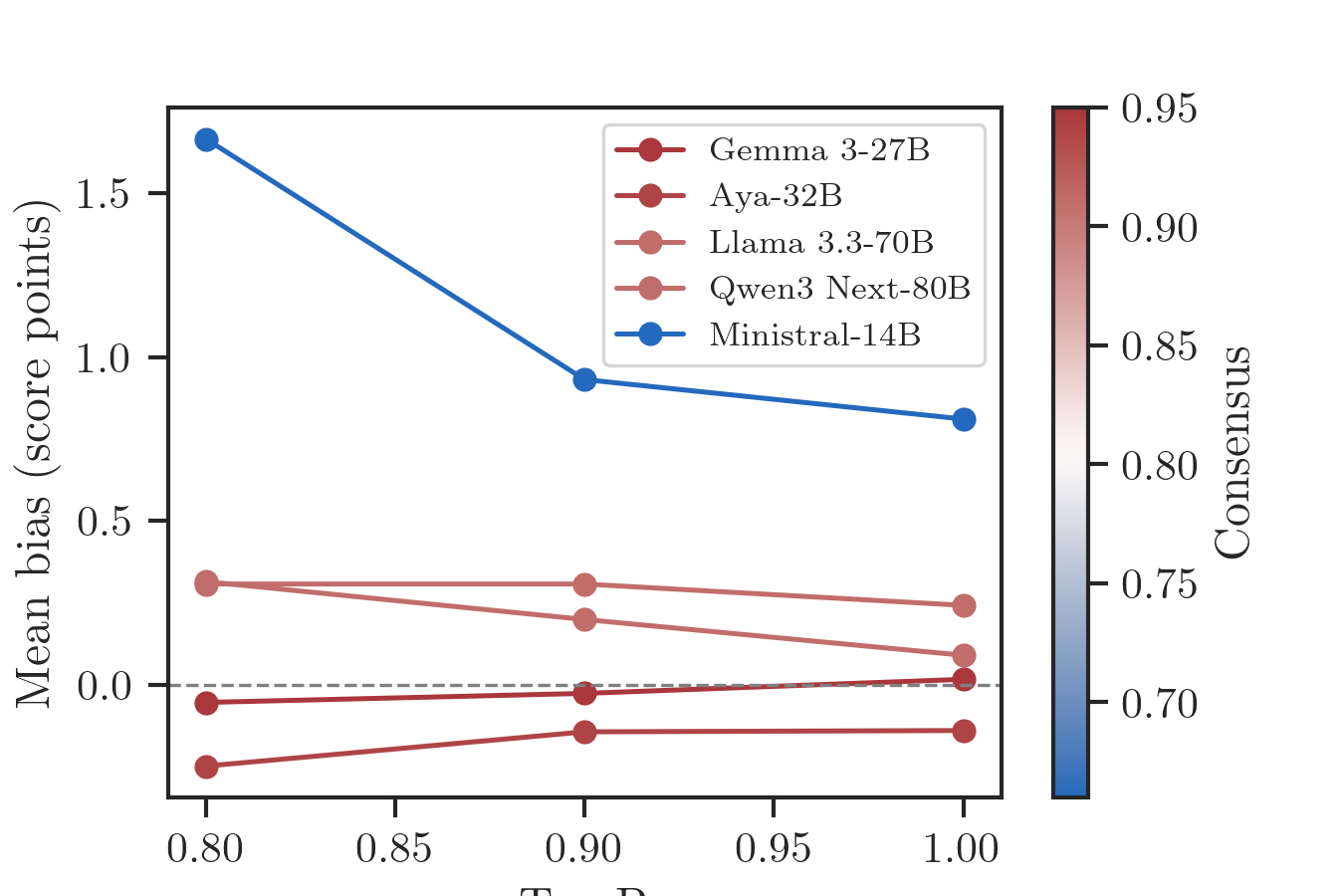}
  \caption{Mean decoding bias versus top-$p$, colored by Consensus.}
  \label{fig:bias-topp}
\end{figure}

\subsection{Reliability of sampling-based estimators under ideal decoding}
\label{app:flip-analysis}

The exact-PMF framework eliminates sampling error by construction. 
To quantify what a researcher stands to lose by instead adopting the standard Monte Carlo
sampling paradigm, we ask a decision-relevant question: 
\emph{what fraction of the causal effects we report could a sampling-based study
get wrong (i.e., recover with the opposite sign) purely as an artifact of finite
sampling?}

\paragraph{Setup.}
We treat our exact effect distributions as ground truth. 
For each factorial parameter (i.e., the \textsc{Model} and \textsc{Target} main effects and every
$\textsc{Model}\times\textsc{Target}$ interaction) we compute its exact effect size $E_{\text{exact}}$ 
using the sum-to-zero contrasts consistent with
Theorem~\ref{thm:anova-consistency-main}. 
We then model a researcher who estimates the same contrast from $N$ sampled CETSCALE responses per condition.
Because items are conditionally independent (Sec.~\ref{sec:independence-assumption}),
a single composite draw has variance $\sum_{k=1}^{K}\mathrm{Var}(Y_{k})$, 
so the contrast estimator is asymptotically

$$\hat{E}\sim\mathcal{N}\!\left(E_{\text{exact}},\ \sum_{c}\alpha_c^2\,
\sigma_c^2/N\right)$$,

where $\alpha_c$ are the contrast coefficients 
and $\sigma_c^2$ is the composite variance of condition $c$. 
The \emph{flip probability} ($P_{\text{flip}}$) is the probability that the sampling estimate carries the wrong sign,

\begin{equation}
\label{eq:flip-prob}
P_{\text{flip}}
=
\Phi\!\left(
-\,\frac{\mathrm{sign}(E_{\text{exact}})\,\mathbb{E}[\hat{E}]}
{\mathrm{SD}(\hat{E})}
\right),
\end{equation}

where $\Phi$ is the standard normal CDF. 
We evaluate the realistic per-condition budgets $N\in\{10,100,1000\}$ under the default decoding setting
($T{=}1.0$, top-$p{=}1.0$), isolating the cost of sampling noise alone.

\paragraph{Results.}
Table~\ref{tab:flip-by-effectsize} bins parameters by their exact absolute effect
size (in composite-score units on the $17$--$119$ CETSCALE) and reports the mean flip probability. 
Figure~\ref{fig:p-flip} shows the full relationship between
effect size and $P_{\text{flip}}$.

\begin{table*}[htbp]
  \caption{Mean sign-flip probability of a sampling-based estimator under default
  decoding ($T{=}1.0$, top-$p{=}1.0$), stratified by exact effect size (composite-score
  units) and per-condition sample size $N$. \textit{Params} is the share of all
  factorial parameters falling in each bin.}
  \label{tab:flip-by-effectsize}
  \centering
  \begin{small}
  \begin{sc}
  \begin{tabular}{lrrrr}
  \toprule
  $|E_{\text{exact}}|$ & Params (\%) & $N{=}10$ & $N{=}100$ & $N{=}1000$ \\
  \midrule
  $[0,1)$   & 24.1 & 0.180 & 0.063 & 0.013 \\
  $[1,2)$   &  6.9 & 0.004 & 0.000 & 0.000 \\
  $[2,5)$   & 41.4 & 0.002 & 0.000 & 0.000 \\
  $[5,10)$  & 17.2 & 0.000 & 0.000 & 0.000 \\
  $\ge 10$  & 10.3 & 0.000 & 0.000 & 0.000 \\
  \bottomrule
  \end{tabular}
  \end{sc}
  \end{small}
\end{table*}

\begin{figure}[htbp]
  \centering
  \includegraphics[width=\columnwidth]{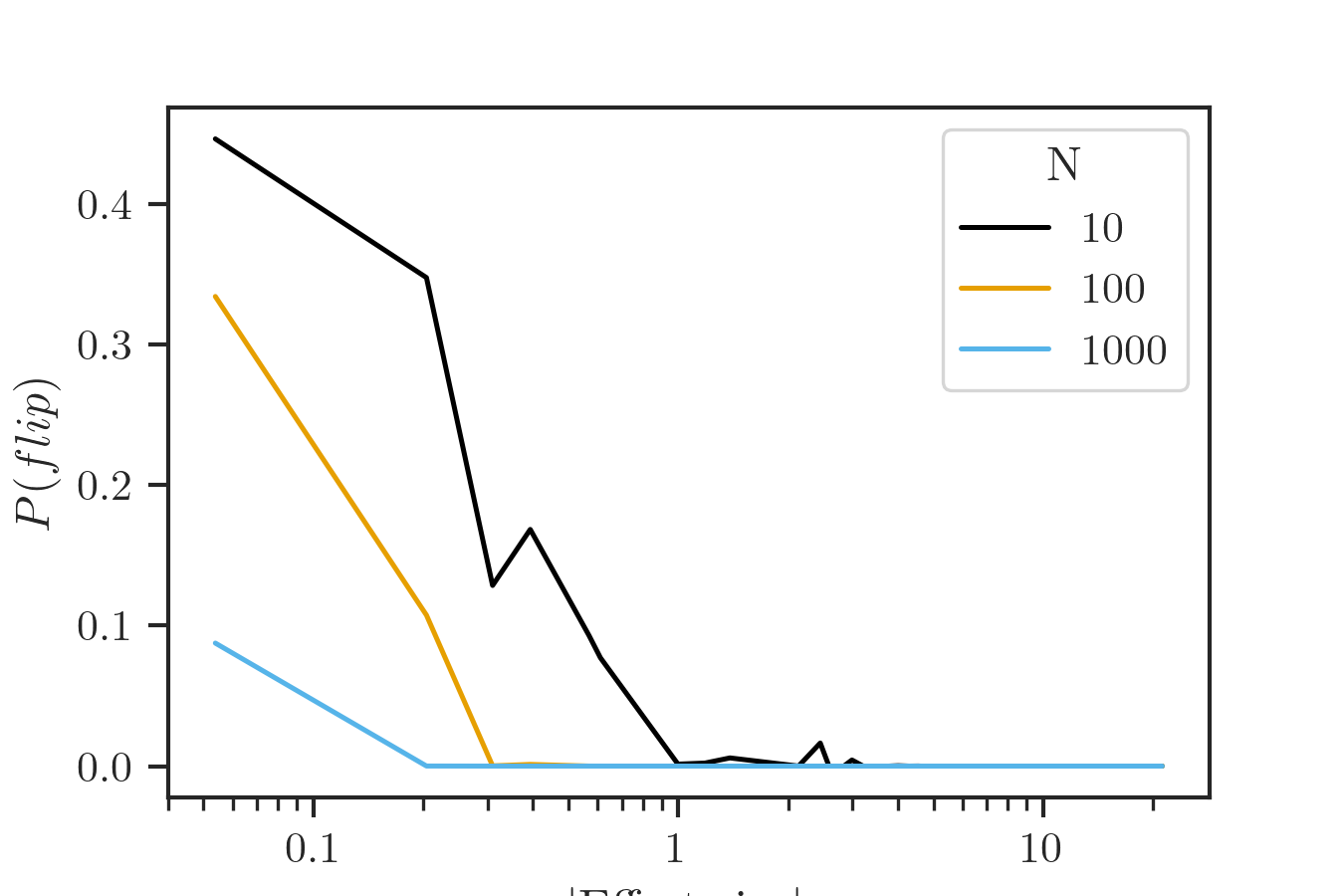}
  \caption{Sign-flip probability of a sampling-based estimator as a function of
  the exact absolute effect size, for per-condition budgets
  $N\in\{10,100,1000\}$ under default decoding. Reliability degrades sharply for
  small effects and low sampling budgets, while large effects are recovered
  reliably at any $N$.}
  \label{fig:p-flip}
\end{figure}

\paragraph{Findings.}
We observe three important findings. 
First, large and moderate effects are robust. 
Every parameter with $|E_{\text{exact}}|\ge 2$ (a cumulative $68.9\%$ of all effects)
has a flip probability below $0.01$ even at the smallest budget $N{=}10$. 
A researcher would recover the sign of these effects reliably. 
Second, the risk is concentrated entirely in the small-effect regime. 
Parameters with $|E_{\text{exact}}|<1$, 
which constitute $24.1\%$ of the factorial design, 
flip $18\%$ of the time at $N{=}10$ 
and still $6.3\%$ of the time at the commonly used budget $N{=}100$. 
In our study these small effects include several substantively interesting interactions 
(e.g., the near-null Qwen3$\times$China term), 
precisely the subtle country-of-origin signals a controlled study aims to detect. 
Third, the risk decays slowly: reducing the flip rate from $18\%$ to $1.3\%$ requires a $100\times$ increase in
sample size (from $N{=}10$ to $N{=}1000$), 
reflecting the $1/\sqrt{N}$ convergence of the estimator. 
Because the exact-PMF framework attains the $N\to\infty$ limit in
a single forward pass, it removes this reliability--compute tradeoff,
and guarantees that the sign and magnitude of every reported effect are free of
sampling artifacts.

\subsection{Extending the Framework to Prompt Framing}
\label{app:prompt-framing}

A key strength of casting behavioral evaluation as a fully crossed factorial
design is extensibility: any new source of variation can be introduced as an
additional factor and analyzed through the identical distributional-ANOVA
pipeline
(Theorem~\ref{thm:anova-consistency-main},
Corollaries~\ref{cor:main-effect-anova}--\ref{cor:paired-interaction-anova}).
We demonstrate this by promoting the administration prompt to a first-class
experimental factor, $\lambda_{\text{Prompt}}$. 
This yields a new capability. 
Rather than treating prompt phrasing as a fixed background choice, 
the framework isolates its causal effect on the measured construct, quantifies how it
interacts with each model, and recovers the latent trait with framing held under
statistical control.

We construct six system-prompt framings that preserve the measurement instrument exactly, namely, all retain the 1--7 scale, the agreement polarity, and the digit-only response schema, and hold every CETSCALE item verbatim. 
Only the surrounding framing wording varies (Table~\ref{tab:prompt-variants}). 
We evaluate a fully crossed $\textsc{Model}\times\textsc{Target}\times\textsc{Prompt}$ design over three models spanning the consensus and ethnocentrism ranges observed in the main study (Gemma~3~27B, Aya~Expanse~32B, Ministral~14B), four target countries, and the six framings, for $3\times4\times6\times17=1{,}224$ exact item-level PMFs.

\begin{table*}[htbp]
  \caption{Prompt framing variants. All preserve the 1--7 scale, agreement
  polarity, and digit-only response schema; every CETSCALE item is held verbatim.}
  \label{tab:prompt-variants}
  \centering
  \begin{small}
  \begin{tabular}{@{}p{3.4cm}p{8.6cm}@{}}
    \toprule
    \textbf{Variant} & \textbf{Framing} \\
    \midrule
    v0\_original & Consumer market-research persona; explicit 7-point scale; digit only. \\
    v1\_participant & Survey-participant framing; agreement scale 1--7; single digit only. \\
    v2\_concise & Minimal instruction; rate 1--7; output only the number. \\
    v3\_instructional & Task-style instruction; scale 1 (disagree) to 7 (agree); digit 1--7 only. \\
    v4\_roleplay & Everyday-shopper roleplay; 7 = agree, 1 = disagree; digit only. \\
    v5\_original\_no\_whitespace & v0 wording with whitespace/formatting normalized. \\
    \bottomrule
  \end{tabular}
  \end{small}
\end{table*}

\paragraph{Isolated effects under the three-factor design.}
Table~\ref{tab:prompt-effects} reports the full decomposition: 
The grand-mixture baseline $\mu_\emptyset$, the isolated main effects for all three factors, and the
three families of two-way interactions
($\textsc{Model}\times\textsc{Target}$, $\textsc{Model}\times\textsc{Prompt}$,
$\textsc{Target}\times\textsc{Prompt}$). 
Each row is an exact effect distribution summarized by mean shift ($\mathbb{E}$), predictive dispersion (SD),
signal-to-noise ratio (SNR), and discrete probability of direction (dPD).

Three capabilities of the framework are visible directly in Table~\ref{tab:prompt-effects}. 
First, the country-of-origin signal is recovered with framing controlled. 
The $\textsc{Target}$ main effects reproduce the pattern of the main study (Canada $+4.7$, China $-6.4$, France $-0.4$, USA $+2.1$), and every $\textsc{Target}\times\textsc{Prompt}$ interaction is small (all
$|\mathbb{E}|<1.5$, SNR $<0.4$, dPD near chance). 

Second, prompt framing is quantified as its own causal effect. 
The $\textsc{Prompt}$ main effect spans roughly $13$ composite-score points ($+6.9$ for v5 to $-6.7$ for v2), giving a direct, exact measurement of how administration framing shifts the construct
level. 

Third, model-specific framing sensitivity becomes measurable. 
The $\textsc{Model}\times\textsc{Prompt}$ block shows Gemma~3~27B interacting weakly with framing (all $|\mathbb{E}|<3.9$, SNR $<0.5$), 
whereas Aya~32B and Ministral~14B interact strongly (Aya$\times$v2 SNR $2.2$; Ministral$\times$v0
SNR $1.6$). 
Consistent with these interactions, the prompt-controlled $\textsc{Model}$ main effects recover the trait ordering of the main study, with Aya~32B highest ($+9.1$), Ministral~14B intermediate ($+4.0$), and Gemma~3~27B lowest ($-13.0$). 
The reduced magnitudes relative to Table~\ref{tab:coefficient-table-marginal} reflect the three-model subset and its distinct grand mean ($\mu_\emptyset=62.3$).

\begin{table*}[htbp]
  \caption{Full three-factor effect decomposition (Part 1 of 2: Main effects and Model $\times$ Target). Rows are isolated effect distributions summarized by mean shift ($\mathbb{E}$), predictive dispersion (SD), signal-to-noise ratio (SNR), and discrete probability of direction (dPD).}
  \label{tab:prompt-effects}
  \centering
  \begin{small}
  \begin{sc}
  \begin{tabular}{lrrll}
  \toprule
  Parameter & $\mathbb{E}$ & $\mathrm{SD}$ & $\mathrm{SNR}$ & $\mathrm{dPD}$ \\
  \midrule
  $\mu_{\emptyset}$ & 62.34 & 13.49 &  &  \\
  \midrule
  Aya-32B        &   9.08 & 12.67 & 0.72 & 0.76 \\
  Gemma 3-27B    & -13.03 & 11.74 & 1.11 & 0.90 \\
  Ministral-14B  &   3.96 &  9.61 & 0.41 & 0.70 \\
  \midrule
  Canada &   4.74 & 5.24 & 0.90 & 0.93 \\
  China  &  -6.38 & 6.08 & 1.05 & 0.97 \\
  France &  -0.44 & 4.78 & 0.09 & 0.48 \\
  USA    &   2.08 & 5.43 & 0.38 & 0.44 \\
  \midrule
  v0\_original       &   6.74 & 10.16 & 0.66 & 0.64 \\
  v1\_participant    &  -0.17 &  7.22 & 0.02 & 0.58 \\
  v2\_concise        &  -6.68 &  9.73 & 0.69 & 0.82 \\
  v3\_instructional  &  -4.08 &  7.76 & 0.53 & 0.62 \\
  v4\_roleplay       &  -2.67 &  6.94 & 0.38 & 0.68 \\
  v5\_no\_whitespace &   6.86 &  9.32 & 0.74 & 0.74 \\
  \midrule
  Aya-32B $\times$ Canada       &  0.31 & 2.14 & 0.15 & 0.35 \\
  Aya-32B $\times$ China        &  0.36 & 4.01 & 0.09 & 0.51 \\
  Aya-32B $\times$ France       &  0.08 & 3.33 & 0.03 & 0.53 \\
  Aya-32B $\times$ USA          & -0.75 & 3.47 & 0.22 & 0.48 \\
  \midrule
  Gemma 3-27B $\times$ Canada   &  2.14 & 3.28 & 0.65 & 0.62 \\
  Gemma 3-27B $\times$ China    & -4.40 & 9.86 & 0.45 & 0.59 \\
  Gemma 3-27B $\times$ France   & -1.83 & 5.38 & 0.34 & 0.55 \\
  Gemma 3-27B $\times$ USA      &  4.09 & 3.47 & 1.18 & 0.88 \\
  \midrule
  Ministral-14B $\times$ Canada & -2.45 & 3.08 & 0.80 & 0.77 \\
  Ministral-14B $\times$ China  &  4.04 & 4.52 & 0.89 & 0.82 \\
  Ministral-14B $\times$ France &  1.74 & 2.48 & 0.70 & 0.74 \\
  Ministral-14B $\times$ USA    & -3.34 & 4.10 & 0.81 & 0.77 \\
  \bottomrule
  \end{tabular}
  \end{sc}
  \end{small}
\end{table*}

\begin{table*}[htbp]
  \caption{Full three-factor effect decomposition (Part 2 of 2: Model $\times$ Prompt and Target $\times$ Prompt interactions).}
  \label{tab:prompt-effects-p2}
  \centering
  \begin{small}
  \begin{sc}
  \begin{tabular}{lrrll}
  \toprule
  Parameter & $\mathbb{E}$ & $\mathrm{SD}$ & $\mathrm{SNR}$ & $\mathrm{dPD}$ \\
  \midrule
  Aya-32B $\times$ v0        &  9.20 & 14.27 & 0.64 & 0.66 \\
  Aya-32B $\times$ v1        & -3.14 &  7.48 & 0.42 & 0.66 \\
  Aya-32B $\times$ v2        & -9.64 &  4.36 & 2.21 & 1.00 \\
  Aya-32B $\times$ v3        & -5.12 &  6.00 & 0.85 & 0.84 \\
  Aya-32B $\times$ v4        &  0.13 &  8.15 & 0.02 & 0.51 \\
  Aya-32B $\times$ v5        &  8.57 & 12.80 & 0.67 & 0.66 \\
  \midrule
  Gemma 3-27B $\times$ v0    & -0.99 &  5.35 & 0.18 & 0.47 \\
  Gemma 3-27B $\times$ v1    & -1.48 &  3.10 & 0.48 & 0.64 \\
  Gemma 3-27B $\times$ v2    &  3.89 &  2.14 & 1.82 & 0.97 \\
  Gemma 3-27B $\times$ v3    &  0.90 &  2.80 & 0.32 & 0.67 \\
  Gemma 3-27B $\times$ v4    & -0.11 &  3.41 & 0.03 & 0.44 \\
  Gemma 3-27B $\times$ v5    & -2.20 &  4.84 & 0.46 & 0.56 \\
  \midrule
  Ministral-14B $\times$ v0  & -8.22 &  5.09 & 1.61 & 1.00 \\
  Ministral-14B $\times$ v1  &  4.62 &  3.39 & 1.36 & 0.99 \\
  Ministral-14B $\times$ v2  &  5.75 &  3.77 & 1.53 & 0.95 \\
  Ministral-14B $\times$ v3  &  4.23 &  3.80 & 1.11 & 0.96 \\
  Ministral-14B $\times$ v4  & -0.01 &  3.81 & 0.00 & 0.64 \\
  Ministral-14B $\times$ v5  & -6.37 &  4.41 & 1.44 & 0.96 \\
  \midrule
  Canada $\times$ v0 & -1.04 & 2.88 & 0.36 & 0.80 \\
  Canada $\times$ v1 &  0.26 & 2.58 & 0.10 & 0.59 \\
  Canada $\times$ v2 &  1.28 & 3.36 & 0.38 & 0.57 \\
  Canada $\times$ v3 &  0.01 & 2.66 & 0.00 & 0.42 \\
  Canada $\times$ v4 & -0.22 & 2.27 & 0.10 & 0.36 \\
  Canada $\times$ v5 & -0.29 & 3.17 & 0.09 & 0.54 \\
  \midrule
  China $\times$ v0  &  0.46 & 3.56 & 0.13 & 0.65 \\
  China $\times$ v1  &  0.75 & 3.30 & 0.23 & 0.37 \\
  China $\times$ v2  & -1.45 & 4.88 & 0.30 & 0.46 \\
  China $\times$ v3  & -0.35 & 3.52 & 0.10 & 0.58 \\
  China $\times$ v4  &  0.76 & 3.64 & 0.21 & 0.45 \\
  China $\times$ v5  & -0.17 & 4.16 & 0.04 & 0.56 \\
  \midrule
  France $\times$ v0 &  0.42 & 3.43 & 0.12 & 0.37 \\
  France $\times$ v1 & -0.93 & 3.37 & 0.27 & 0.62 \\
  France $\times$ v2 & -0.21 & 3.38 & 0.06 & 0.46 \\
  France $\times$ v3 &  1.19 & 4.08 & 0.29 & 0.59 \\
  France $\times$ v4 & -0.55 & 3.44 & 0.16 & 0.41 \\
  France $\times$ v5 &  0.07 & 3.73 & 0.02 & 0.42 \\
  \midrule
  USA $\times$ v0    &  0.16 & 2.39 & 0.06 & 0.45 \\
  USA $\times$ v1    & -0.09 & 2.17 & 0.04 & 0.44 \\
  USA $\times$ v2    &  0.38 & 3.14 & 0.12 & 0.46 \\
  USA $\times$ v3    & -0.84 & 2.43 & 0.35 & 0.57 \\
  USA $\times$ v4    &  0.01 & 2.20 & 0.00 & 0.46 \\
  USA $\times$ v5    &  0.39 & 2.91 & 0.13 & 0.48 \\
  \bottomrule
  \end{tabular}
  \end{sc}
  \end{small}
\end{table*}

\end{document}